\g@addto@macro{\UrlBreaks}{\UrlOrds}
\begin{document}

	\TITLE{	Online Learning and Decision-Making under Generalized Linear Model with High-Dimensional Data}
	\ARTICLEAUTHORS{
		\AUTHOR{Xue Wang$^\ast$ \quad \quad Mike Mingcheng Wei$^\star$ \quad \quad Tao Yao$^\dagger$}
		\medskip
		\AFF{$^\ast$Penn State University, Industrial and Manufacturing Engineering, xzw118@psu.edu \\
			$^\star$University at Buffalo, School of Management, mcwei@buffalo.edu\\
			$^\dagger$Penn State University, Industrial and Manufacturing Engineering, tyy1@engr.psu.edu}

	}
	\ABSTRACT{

		 We propose a minimax concave penalized multi-armed bandit algorithm under generalized linear model (G-MCP-Bandit) for a decision-maker facing high-dimensional data in an online learning and decision-making process. We demonstrate that the G-MCP-Bandit algorithm asymptotically achieves the optimal cumulative regret in the sample size dimension $T$, $O(\log T)$, and further attains a tight bound in the covariate dimension $d$, $O(\log d)$. In addition, we develop a linear approximation method, the 2-step weighted Lasso procedure, to identify the MCP estimator for the G-MCP-Bandit algorithm under non-iid samples. Under this procedure, the MCP estimator  matches the oracle estimator with high probability and converges to the true parameters with the optimal convergence rate. Finally, through experiments based on synthetic data and two real datasets (warfarin dosing dataset and Tencent search advertising dataset), we show that the G-MCP-Bandit algorithm outperforms other benchmark algorithms, especially when there is a high level of data sparsity or the decision set is large.
	}

	\KEYWORDS{ {Multi-armed bandit, minimax concave penalty, high-dimensional data, online learning and decision-making, generalized linear model.}}
	\maketitle

\section{Introduction}

Individual-level data have become increasingly accessible in the Internet era, and decision-makers have accelerated  data accumulation with extraordinary speed in a variety of industries, including health care, retail, and advertising. The growing availability of user-specific data, such as demographics, geographics, medical records, and searching/browsing history, provides decision-makers with unprecedented opportunities to tailor decisions to individual users. For example, doctors can personalize treatments for patients based on their medical history, clinical tests, and biomarkers; search engines can offer personalized advertisements for users based on their queries, demographics, and geographics. These user-specific data are often collected sequentially over time, during which decision-makers adaptively learn to predict the expected rewards based on users' responses to each available decision as a function of the user-specific data (i.e., the user's covariates) and optimally adjust decisions to maximize their rewards -- an \emph{online} learning and decision-making process.

This online learning and decision-making process requires a thoughtful balance between exploration and exploitation. Consider a decision-maker who selects decisions for incoming users and obtains rewards based on users' responses to these decisions. To maximize his expected rewards, the decision-maker first needs an accurate predictive model for users' responses, which is typically uncertain at the beginning but can be partially learned through collecting samples of users' responses. On the one hand, the decision-maker could select a decision that yields the ``highest'', based on his best knowledge so far, expected reward (i.e., exploitation). Yet, this decision can be suboptimal, as the selection is based on the rough prediction of users' responses due to limited samples. Even worse, the decision-maker could incorrectly estimate the expected reward of the true optimal decision to be low and never have a chance to correct such a mistake (as the decision-maker will not select the true optimal decision due to the current low reward prediction, he will not generate additional samples to be able to learn and correct his incorrect estimation). On the other hand, the decision-maker can improve his predictive ability and learn users' responses by collecting more response samples, which often are obtained through random clinical trials and/or user experiments and are typically costly (i.e., exploration). The exploration and exploitation dilemma has been extensively studied in the multi-armed bandit model (\citealt{robbins1952some}), but the growing dimensionality and availability of data have added another layer of complexity to the bandit model.

In practice, individual-level data are typically presented in a high-dimensional fashion, which poses significant computational and statistical challenges in the online learning and decision-making process. Traditional statistical methods, such as Ordinary Least Squares (OLS), require a large number of samples (e.g., the sample size must be larger than the covariate dimension) to be deemed computationally feasible. Under high-dimensional settings, learning the accurate predictive models requires a substantial amount of samples, which are obtained, if possible, through costly trials or experiments. Take the search advertising industry for example. Search advertising occurs when an Internet user searches certain keyword(s) (i.e., a query) in an online search engine and then the search engine displays both search results, in response to the user's query, and some sponsored ads, in response to the query and user-specific information. In order to select the ad that maximizes its revenue, the search engine must have accurate estimations on users' clicking probabilities in response to the displayed ads -- Click-Through Rate (CTR).

However, the search engine's ability to accurately predict CTR is often crippled by the high-dimensional search advertising data coupled with limited samples. Counting more than three quarters of a million distinct words and their combinations (\citealt{DICT2018}), there are nearly infinite possible queries the user can submit to the search engine. For example, from 2003 to 2012, Google answered 450 billion unique queries, and it has estimated that $16\%$ to $20\%$ of queries submitted every day have never been used before (\citealt{Ben2012}). Hence, to accurately estimate a single ad's CTR to these queries, the search engine requires billions, if not trillions, of samples. The craving for samples will be further intensified if the search engine practices personalized advertising by taking users' individual information (such as demographics and geographics) into consideration. However, the available samples for the search engine to learn and predict CTR are greatly limited. It was suggested that an ideal length of time to run a new marketing campaign promoting a sales event or merchandise is around 45 days (\citealt{Shaheen2018}), during which time an average ad is expected to reach approximately one third of a million users (\citealt{WordStream2017,WordStream2018}). Among these users, a very small portion can be selected to perform costly experiments to learn CTR, but that number is much smaller comparing to the size of queries and individual data.

In this paper, we propose a new algorithm, the G-MCP-Bandit algorithm, for online learning and decision-making processes in high-dimensional settings. Our algorithm follows the ideas of the bandit model and develops a $\epsilon$-decay random sampling method to balance the exploration-and-exploitation trade-off.
We allow the decision-maker's reward function to follow the generalized linear model (\citealt{mcCullagh1989generalized}), which is a large class of models including the linear model, the logistic model, the Poisson regression model, etc., and we adopt the Minimax Concave Penalized (MCP) method (\citealt{zhang2010nearly}) to improve the parameter estimations and predict the expected rewards in high-dimensional settings.


In the high-dimensional statistics literature, MCP is developed to explore and recover the latent sparse data structure for high-dimensional data. Compared to traditional statistical methods (e.g., OLS), MCP uses significantly fewer data samples and delivers better performance in high-dimensional settings (\citealt{zhang2010nearly}). Although it is statistically favorable to adopt MCP, solving the MCP estimator (an NP-complete problem) could be computationally challenging. We propose a linear approximation method, the 2-step weighted Lasso procedure (2sWL), under the bandit setting as an efficient approach to tackle this challenge. We show that the MCP estimator solved by the 2sWL procedure matches the oracle estimator with high probability and converges to the true parameter with the optimal convergence rate (Proposition \ref{MW_proposition:1}). Since the bandit model mixes the exploitation and exploration phases, samples generated under the exploitation phase may be non-iid. Therefore, we adopt a matrix perturbation technique to derive new oracle inequalities for MCP under non-iid samples. To the best of our knowledge, this work is the first one that applies MCP to handle non-iid samples.

We theoretically demonstrate that the G-MCP-Bandit algorithm can significantly improve the cumulative regret bound in high-dimensional settings comparing to existing bandit algorithms. In particular, we benchmark the G-MCP-Bandit algorithm to an oracle policy, in which all parameter vectors are common knowledge, and adopt the expected cumulative regret (i.e., the difference in rewards achieved by the oracle policy and the G-MCP-Bandit algorithm) as the performance measure. We show that the cumulative regret of the G-MCP-Bandit algorithm over $T$ users (i.e., a sample size of $T$) is at most $O(\log T)$, which is the optimal/lowest theoretical bound for all possible algorithms (\citealt{goldenshluger2013linear}). 
Further, we show that the G-MCP-Bandit algorithm also attains a tight bound in the covariate dimension $d$, $O(\log d)$.
We believe that our work is the first one in high-dimensional settings that attains the logarithmic dependence on both the sample size dimension and the covariate dimension, which are of particular importance in high-dimensional data with limited samples and suggest that the G-MCP-Bandit algorithm can bring substantial regret reduction comparing to existing bandit algorithms.

Through two synthetic-data-based experiments, we benchmark the G-MCP-Bandit algorithm's performance to other state-of-the-art bandit algorithms designed both in low-dimensional settings, OLS-Bandit by \citealt{goldenshluger2013linear} and OFUL by \citealt{abbasi2011improved}, and in high-dimensional settings, Lasso-Bandit by \citealt{bastani2015online}. We find that the G-MCP-Bandit algorithm performs favorably in both experiments.  In particular, when the sample size is not extremely small\footnote{When the sample size is extremely small,  the decision-maker has little information to learn. Therefore, all algorithms perform equally poorly.}, the G-MCP-Bandit algorithm appears to be able to accurately learn the parameter estimations with limited samples and therefore have the lowest cumulative regret. Furthermore, we observe that the benefits of the G-MCP-Bandit algorithm over other benchmark algorithms seems to increase with the data's sparsity level and the size of the decision-maker's decision set.

Finally, we evaluate the G-MCP-Bandit algorithm's performance through two real-data-based experiments, warfarin dosing data and Tencent search advertising data, where the technical assumptions specified for the theoretical analysis of the G-MCP-Bandit algorithm's expected cumulative regret may not hold. We observe that the G-MCP-Bandit algorithm continues to perform favorably in both experiments. In particular, in the warfarin dosing experiment (formulated as a 3-armed bandit problem with $93$ covariates), the G-MCP-Bandit algorithm needs the fewest patient samples (i.e., merely $50$ patients) to provide better dosing decisions than actual physicians. Similarly, in the Tencent search advertising experiment (formulated as a 3-armed bandit problem with hundreds of thousands of covariates), the G-MCP-Bandit algorithm, after observing 140 users, can consistently generate better average revenue than other benchmark algorithms under the linear model. Further, we observe that the choice of the underlying reward model can significantly influence the G-MCP-Bandit algorithm's performance. In particular, under the logistic model, which is a special case of the generalized linear model, the G-MCP-Bandit algorithm merely needs $20$ users to outperform other benchmark algorithms. This observation suggests that understanding the context of the underlying managerial problem and identifying the appropriate model for the G-MCP-Bandit algorithm can be critical and bring the decision-maker substantial revenue improvement.

\section{Literature Review} \label{sec:literature}

This research is closely related to the exploration-exploitation trade off in the multi-armed bandit literature.  \cite{rigollet2010nonparametric,slivkins2014contextual} follow the non-parametric approach and consider that the arm reward can be any smooth non-parametric function. Under this approach, the expected cumulative regret has an exponential dependence on the covariate dimension $d$. Such dependence can be improved by the parametric approach. \cite{auer2002using} proposes the UCB algorithm for a linear bandit model, where the arm reward can be approximated by a linear combinations of covariates. Since \cite{auer2002using}, other UCB-type algorithms(e.g., \citealt{dani2008stochastic,rusmevichientong2010linearly,abbasi2012online,deshpande2012linear}) and Bayesian-type algorithms (e.g., \citealt{agrawal2013thompson,russo2014learning}) have been proposed and shown to improve on the expected cumulative regret. Yet, allowing the adversary and without regulating the sample generating process, the statistical performance of the parameter vector estimation in the learning process may suffer. As a result, the expected cumulative regret bound typically has a sublinear dependence on the sample size dimension $T$ (e.g., $O(\sqrt{T})$) and a polynomial dependence on the covariate dimension $d$. However, in high-dimensional settings, where the covariate dimension and the sample size dimension can be exceedingly large, these algorithms can perform poorly.

By introducing a forced sampling approach to the linear bandit model, \cite{goldenshluger2013linear} ensure that enough samples generated in their algorithm possess desired iid property and show that their proposed OLS-Bandit algorithm can achieve $O(\log T)$ dependence on the sample size dimension $T$ in low-dimensional settings. Following a similar approach, \cite{bastani2015online} propose the Lasso-Bandit algorithm, which attains a poly-logarithmic square dependence on the sample size dimension $O(\log^2 T)$ and the covariate dimension $O(\log^2 d)$ in high-dimensional settings. In this paper, we allow the reward function to follow the generalized linear model, which contains a wide family of models that includes the linear bandit model. We propose a $\epsilon$-decay random sampling method and show that our proposed G-MCP-Bandit algorithm continues to achieve the optimal cumulative regret bound on the sample size dimension $O(\log T)$ and attain a tight bound in the covariate dimension $O(\log d)$ in high-dimensional settings. We believe that our work is the first one that attains the logarithmic dependence on both the sample size dimension and the covariate dimension in high-dimensional settings.


Our research is also connected to the statistical learning literature. In high-dimensional statistics, Lasso type methods (\citealt{tibshirani1996regression}) have become the golden standard for high-dimensional learning (\citealt{meinshausen2006high,meinshausen2009lasso,zhang2008sparsity,van2008high}). Yet, Lasso-type regularizations may lead to estimation bias, and strong conditions are needed for analyzing its theoretical performance guarantee (\citealt{fan2014challenges}). Recently, \cite{zhang2010nearly} proposes MCP, a non-convex penalty method, which entails better statistical properties, such as the unbiasedness and a strong oracle property for high-dimensional sparse estimation, and requires weaker conditions than Lasso (\citealt{zou2006adaptive,fan2014strong,meinshausen2006high}). Although it is statistically favorable to adopt MCP, solving the MCP estimator (an NP-complete problem) could be computationally challenging (\citealt{liufolded,liu2016global}). Various approximation methods have been developed in the literature. For example, \cite{fan2001variable} use the local quadratic approximation, \cite{fan2014strong,fan2018lamm,zou2006adaptive,zhao2014pathwise} adopt the local linear approximation, \cite{zhang2010nearly} choose the path following algorithm, and \cite{liufolded} propose the second-order approximation. Our proposed solution procedure (the 2sWL procedure) is analogous to the local linear approximation and guarantees that the solution has desirable statistical properties for theoretical analysis and can be efficiently solved. In the literature, the theoretical analysis of MCP's statistical properties relies on the assumption that all samples are iid, which is hardly the case under bandit models. This paper also contribute to the statistical learning literature by deriving new oracle inequalities for MCP under non-iid samples.

\section{Model Settings}\label{sec:Model}

Consider a sequential arrival process $t\in \{1,2,...,T\}$. At each time step $t$, a single user (e.g., consumer or patient), described by a high-dimensional feature covariate vector $\bm{x}_{t}\in\mathbb{R}^{1\times d} $, arrives. The covariate vector combines all available (but not necessarily valuable for the decision-maker to base his decision on) user-specific data, such as demographics, geographics, browsing/shopping history, and medical records. Upon arrival, users' covariate vectors $\{\bm{x}_t\}_{t\ge 0}$ become observable to the decision-maker and are iid distributed according to an unknown distribution $\mathcal{P}_{x}$.

Based on the user's covariate vector $\bm{x}$, the decision-maker will select a decision from a decision set $\mathcal{K}=\{1,2,...,K\}$ to maximize his expected reward. The user will respond to the chosen decision  $k\in\mathcal{K}$, and such response will generate a reward for the decision-maker. Take the search advertising for example. The search engine can recommend one of $K$ different ads to the user; the user can respond to the recommended ad by clicking, which generates revenue for the search engine. We denote this reward under the chosen decision $k$ as $R_k$, which follows a distribution $\mathbbm{P}(R_k|\bm{x}^T\bm{\beta}_k^{true})$, where $\bm{x}$ is the user's covariate vector and $\bm{\beta}_k^{true}$ is the unknown parameter vector corresponding to decision $k$.

We present the reward function in terms of the generalized linear model (\citealt{mcCullagh1989generalized}), which  is a large class of models including the linear model,  the logistic model, the Poisson regression model, etc. For example, if we assume that $R_k$ is a $\sigma$-gaussian random variable with mean $\bm{x}^T\bm{\beta}_k^{true}$, then we can define the density function of the  distribution $\mathbbm{P}(R_k|\bm{x}^T\bm{\beta}_k^{true})$  as $g(R_k=r|\bm{x}^T\bm{\beta}^{true}_k)=(1/{\sqrt{2\pi\sigma^2}})\exp(-\frac{(r-\bm{x}^T\bm{\beta}^{true}_k)^2}{2\sigma^2})$, which is the standard setting for the classic linear multi-armed bandit model where the reward takes a linear form: $R_{k}(\bm{x})=\bm{x}^T\bm{\beta}_k^{true}+\epsilon$  (\citealt{auer2002using,agrawal2013thompson}). The cumulative regret performance of the linear bandit algorithms has been extensively studied by \citet{dani2008stochastic} and \citet{goldenshluger2013linear}, among others, under low-dimensional settings and by \citet{bastani2015online} under high-dimensional settings. The generalized linear model adopted in this paper facilitates us to go beyond the classic linear bandit model, as the reward may take a nonlinear form in practice. For instance, the search engine collects revenue only when a user has clicked the recommended ad; otherwise, the search engine earns nothing -- a logistic model by nature. By specifying $R_k$ as a binary random variable (e.g., $R_k\in\{0,1\}$), we can define the mass function of the  distribution $\mathbbm{P}(R_k|\bm{x}^T\bm{\beta}_k^{true})$  as $g(R_k=1|\bm{x}^T\bm{\beta}_k^{true})=1/(1+\exp(-\bm{x}^T\bm{\beta}_k^{true}))$ and $g(R_k=0|\bm{x}^T\bm{\beta}_k^{true})=\exp(-\bm{x}^T\bm{\beta}_k^{true})/(1+\exp(-\bm{x}^T\bm{\beta}_k^{true}))$, which is a logistic bandit model with the binary reward (\citealt{elmachtoub2017practical, scott2015multi, scott2010modern}).

The parameter vector $\bm{\beta}^{true}_k$ is high-dimensional with latent sparse structure, and we denote $\mathcal{S}_k = \{j: \beta^{true}_{k,j}\ne 0\}$ as the index set for significant covariates, which have non-zero coefficient parameters and therefore are important for the  decision-maker to predict the user's response. This index set is also unknown to the decision-maker. We define the number of significant covariates as $|\mathcal{S}_k|$ which is typically much smaller than the dimension of the covariate vector (i.e., $|\mathcal{S}_k|\ll d$).

The decision-maker's objective is to maximize his expected cumulative reward. Denote the decision-maker's current policy as $\pi=\{\pi_t\}_{t\ge0}$, where $\pi_t\in\mathcal{K}$ is the decision prescribed by policy $\pi$ at time $t$. To benchmark the performance of policy $\pi$, we first introduce an \emph{oracle policy} $\pi^*=\{\pi^*_t\}_{t\ge0}$ under which the decision-maker knows the true parameter vector values  $\bm{\beta}^{true}_k$ for all $k\in\mathcal{K}$ and chooses the best decision to maximize his expected reward:
\begin{align}
\pi^*_t=\arg\max_{k\in\mathcal{K}} \left\{\mathbbm{E}[R_k|\bm{x}_t,\bm{\beta}_k^{true}]\right\}=\arg\max_{k\in\mathcal{K}}\left\{\int_{-\infty}^{+\infty}r_kdG(r_k|\bm{x}_t^T\bm{\beta}_k^{true})\right\},\notag
\end{align}
where $G(r_k|\bm{x}_t^T\bm{\beta}_k^{true})$ is the cumulative distribution function for $R_k$. Note that in practice, the parameter vector $\bm{\beta}^{true}_k$ is unknown to the decision-maker, and therefore the construction and definition of the oracle policy directly imply that the decision-maker's reward under policy $\pi$  is upper-bounded by that of the oracle policy. We therefore define the decision-maker's expected cumulative regret up to time $T$ under the policy $\pi$ as follows:
\begin{align}
R^{C}(T) = \sum_{t=1 }^T \mathbbm{E}[R_{t}^{\pi_{t}^*}-R_{t}^{\pi_t}],\notag
\end{align}
which is the expected reward difference between the optimal policy $\pi^*$ and the decision-maker's alternative policy $\pi$.  To maximize his expected cumulative reward, the decision-maker is equivalent to explore for the policy $\pi$ that minimizes the cumulative regret up to time $T$.

Before presenting the proposed G-MCP-Bandit algorithm, we will first state five technical assumptions necessary for the theoretical analysis of the decision-maker's expected cumulative regret. The first three assumptions are adopted directly from the multi-armed bandit literature, and the last two assumptions from the high-dimensional statistics literature.

\noindent\textbf{A. 1} (Parameter set) There exist positive constants $x_{\max}$, $s$, $R_{\max}$, $\beta_{\min}$  and $b$ such that for any $t$ and $k \in \mathcal{K}$, we have $\|\bm{x}_t\|_{\infty}\le x_{\max}$, $|\mathcal{S}_k|\le s$, $|R_k|\le R_{\max}$, $\beta_{\min}\le \min_{j\in\mathcal{S}_k,k\in\mathcal{K}}|\beta^{true}_{k,j}|$, $\|\bm{\beta}_{k}^{true}\|_1\le b$ and all feasible $\beta$ satisfies $\|\bm{\beta}\|_1\le b$.

The first assumption is a standard assumption in the bandit literature (\citealt{rusmevichientong2010linearly}) and ensures that both the covariate vector $\bm{x}$ and the coefficient vector $\bm{\beta}_k$ are upper bounded so that the maximum regret at every time step will also be upper bounded to avoid trivial decisions. Most real world applications, including two real data experiments in \S \ref{sec:empirical_warfarin} and \S \ref{sec:empirical_tencent}, satisfy this assumption.

\noindent\textbf{A. 2}  (Margin condition) There exists a $C>0$ such that $\mathbb{P}(0<|\mathbbm{E}[R_i|\bm{x},\bm{\beta}^{true}_i]-\mathbbm{E}[R_j|\bm{x},\bm{\beta}^{true}_j]|\le\gamma)\le CR_{\max}\gamma$  for $i \neq j$ and $i,j\in\mathcal{K}$.

The second assumption is first introduced in the classification literature by \citet{tsybakov2004optimal}. \citet{goldenshluger2013linear} and \citet{bastani2015online} adopt this assumption to the linear bandit model, under which the Margin Condition ensures only a fraction of covariates can be drawn near the boundary hyperplane $\bm{x}^T(\bm{\beta}_i^{true}-\bm{\beta}_j^{true})=0$ in which rewards for both arms are nearly equal. Clearly, if a large proportion of covariates are drawn from the vicinity of the boundary hyperplane, then for any bandit algorithm, a small estimation error in the decision parameter vectors may lead the decision-maker to choose the suboptimal decision and perform poorly (\citealt{bastani2015online}).  Therefore, this margin condition ensures that given a user's covariate vector, decisions can be properly separated from each other and ordered based on their rewards.

\noindent\textbf{A. 3} (Arm optimality)
There exists a partition $\mathcal{K}_o$ and $\mathcal{K}_s$ for $\mathcal{K}$. For $k_1\in
\mathcal{K}_s$, we will have $\mathbbm{E}[R_{k_1}|\bm{x},\bm{\beta}^{true}_{k_1}]+h< \max_{k\ne k_1}\mathbbm{E}[R_{k}|\bm{x},\bm{\beta}_{k}^{true}]$ for a positive constant $h$ for every $\bm{x}$. For $k_2\in\mathcal{K}_o$, these exists another positive constant $p^*$ such that $\min \mathbb{P}(\bm{x}\in U_{k_2})\ge p^*$, where $U_{k_2} \dot= \left\{\bm{x}|\mathbbm{E}[R_{k_2}|\bm{x},\bm{\beta}^{true}_{k_2}]>\max_{k\ne {k_2}}\mathbbm{E}[R_k|\bm{x},\bm{\beta}_k^{true}]+h, k\in\mathcal{K}\right\}$.

The arm optimality condition (\citealt{goldenshluger2013linear, bastani2015online}) ensures that as the sample size increases, the parameter vectors for optimal decisions can eventually be learned. In particular, this condition separates decisions to an optimal decision subset $\mathcal{K}_o$ and a suboptimal decision subset $\mathcal{K}_s$. Decision $i$ in $\mathcal{K}_o$ is strictly optimal for some users' covariate vectors (denoted by set $U_i$); otherwise, decision $j$ in $\mathcal{K}_s$ must be strictly suboptimal for all users' covariate vectors. Therefore, even if there is a small estimation error for decision $i$ in $\mathcal{K}_o$, the decision-maker will be more likely to choose decision $i$ for a user with a covariate vector draw from the set $U_i$. Accordingly, as sample size $T$ increases, decision-makers can improve their estimations for optimal arms' parameter vectors.

These first three assumptions are directly adopted from the multi-armed bandit literature and have been shown to be satisfied for all discrete distributions with finite support and a very large class of continuous distributions (see \citealt{bastani2015online} for detailed examples and discussions).

\noindent\textbf{A. 4} (Restricted eigenvalue condition) There exists $\kappa>0$ such that for all feasible $\bm{\xi}$ satisfying $\|\xi\|_1\le b$ and $\bm{u}$ such that $\|\bm{u}_{\mathcal{S}_k}^c\|_1\le 3\|\bm{u}_{\mathcal{S}_k}\|_1$, we have $\frac{\kappa}{s}\|\bm{u}_{\mathcal{S}_k}\|_1^2 \le \bm{u}^T\mathbbm{E}[\nabla^2\mathcal{L}(\bm{\xi})]\bm{u}$,
where $\mathcal{L}$ is the log likelihood function, $\mathcal{L}(\bm{\beta}) = \frac{1}{n}\sum_{j=1}^n-\log g(r_j|\bm{x}_j^T\bm{\beta})$, and $\{\bm{x}_j, j =1,2,...,n\}$ are  iid random samples with $\bm{x}_{j}\in U_k, k\in\mathcal{K}$.

The restricted eigenvalue condition assumption is a standard assumption in high-dimensional statistics that is necessary for the identifiability and consistency of high-dimensional estimators (\citealt{fan2018lamm,fan2014strong}). This assumption considers the local geometry of the log likelihood function $\mathcal{L}$ with iid samples in $U_k$. To intuit, note that under low-dimensional settings, the literature (\citealt{montgomery2012introduction}) requires that $\mathcal{L}$ is strongly convex around the true parameter vector ${\bm{\beta}}^{true}$ (e.g., the Hessian matrix in OLS estimator is positive-definite and invertible) in order to achieve identifiability of the parameter vector. However, the strong convexity assumption is typically violated in high-dimensional settings, as the sample size can be much smaller than the covariate dimension. Therefore, a weaker condition is adopted: The $\mathcal{L}$ exhibits local strongly convex behavior only in some restricted subspace of $u$. In high-dimensional linear models, the restricted eigenvalue condition assumption is analogous to the compatibility condition (\citealt{bastani2015online,buhlmann2011statistics}), restrict strongly convexity condition (\citealt{negahban2009unified,loh2013regularized}), and sparse eigenvalue condition ( \citealt{zhang2012general,fan2018lamm}).

\noindent\textbf{A. 5} (Density function)  The negative logarithm of the reward density function $f(\cdot|\cdot)\dot{=}-\log g(\cdot|\cdot)$ is (i) smooth and convex, and (ii) there exists positive constants $\sigma$, $\sigma_2$ and $\sigma_3$ such that $|f^{'}(\cdot|\cdot)|\le \sigma$, $f^{''}(\cdot|\cdot)<\sigma_2$ and $|f^{'''}(\cdot|\cdot)|\le \sigma_3$.

The density function assumption enables us to use the estimated expected reward to statistically infer the true expected reward. Specifically, under this assumption, when the parameter estimator $\bm{\beta}$ is close enough to the underlying true parameter vector ${\bm{\beta}}^{true}$, the negative logarithm of the reward density function under the estimator $\bm{\beta}$, $g(\bm{x}^{T}\bm{\beta})$, will converge to that under the true parameter vector ${\bm{\beta}}^{true}$, $g(\bm{x}^{T}{\bm{\beta}}^{true})$. The density function assumption is a fairly weak technical assumption. Many common distributions, such as sub-Gaussian distribution and Bernoulli distribution, satisfy this density function assumption.

\section{G-MCP-Bandit Algorithm} \label{sec:algorithm}

One of the major challenges for online learning and decision-making problems is discovering the underlying sparse data structure and estimating the parameter vector for high-dimensional data with limited samples. Lasso (\citealt{tibshirani1996regression}) has been proposed as an efficient statistical learning method and adopted in the multi-armed bandit literature (\citealt{bastani2015online}) to hurdle this challenge.
However, the Lasso estimator can be biased and performs inadequately, especially when the magnitude of true parameters is not too small (\citealt{fan2001variable}). Such estimation bias typically leads to a suboptimal cumulative regret bound in sample size dimension for Lasso-based algorithms. One way to address this performance issue is to construct new penalty functions that could render unbiased estimators and improve the sparse structure discovery under high-dimensional data with limited samples. In this research, we will adopt the novel MCP method.

\subsection{Parameter Vector Estimation}  \label{sec:Model_estimation}
For notation convenience, we will omit parameters' subscripts corresponding to the choice of arms, as long as doing so will not cause any misinterpretation. Consider an oracle estimator for an arbitrary arm, $\bm{\beta}^{oracle}$, which is the parameter estimator when the decision-maker has perfect knowledge of the index set for significant covariates $\mathcal{S}$. In other words, the oracle estimator can be determined by setting $\beta_{j}=0 \text{ for } j\in \mathcal{S}^{c}$ and solving
\vspace{-7pt}
\begin{equation}\label{oracle estimator}
\bm{\beta}^{oracle}(\bm{X},\bm{r})\doteq \arg \min_{{\substack{\bm{\beta}_{\mathcal{S}^c}=0\\ \bm{\beta}_{\mathcal{S}}}}\newline
}\ \left\{\frac{1}{|\mathcal{A}|}\sum_{j\in\mathcal{A}}f(r_j|\bm{x}_j^T\bm{\beta})\right\},
\end{equation}
where $\mathcal{A}$ is  the available historical data samples and $f(\cdot|\cdot)$ is the negative logarithm of the reward density function defined early. When solving for the oracle estimator, the decision-maker can  directly ignore insignificant covariates by forcing their corresponding coefficients to be zero and essentially reduce the high-dimensional problem to a low-dimensional counterpart. The statistical performance of the oracle estimator is provided in the following lemma.

\begin{lemma}\label{MW_lemma_1}
	Let $n$ be the sample size. Under assumption A.1, A.4, and A.5, the following inequality for the oracle estimator holds
	\begin{align}
	\mathbbm{P}\left(\|\bm{\beta}^{oracle}-\bm{\beta}^{true}\|_2\le \sqrt{\frac{8s^2\sigma_2\sigma^2x_{\max}^2}{\mu_0^2n}}\right)\ge 1-\delta_1(n),\label{MW_lemma_1:1}
	\end{align}
	where $\delta_1(n)\doteq2\exp(-\frac{C_hn\mu_0}{2sx_{\max}^2})+2s\exp(-\frac{\mu_0 n}{4s\sigma_2x_{\max}^2})$, and $C_h$ and $\mu_0$ are positive constants.
\end{lemma}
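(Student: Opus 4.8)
The statement is a standard consistency result for M-estimators in the low-dimensional regime (once the support $\mathcal{S}$ is known, we are effectively estimating $|\mathcal{S}|\le s$ parameters), so the plan is to follow the classical argument: localize the estimator via convexity, control the gradient of the empirical loss at $\bm{\beta}^{true}$ with a concentration inequality, and invoke the restricted eigenvalue condition (A.4) to convert a small gradient into a small estimation error. Since $\bm{\beta}^{oracle}$ zeroes out coordinates outside $\mathcal{S}$, throughout I would work in the $|\mathcal{S}|$-dimensional coordinate subspace indexed by $\mathcal{S}$, where A.4 gives genuine strong convexity (the cone restriction $\|\bm{u}_{\mathcal{S}^c}\|_1\le 3\|\bm{u}_{\mathcal{S}}\|_1$ is automatically satisfied because $\bm{u}_{\mathcal{S}^c}=0$), so that $\frac{\kappa}{s}\|\bm{u}_{\mathcal{S}}\|_1^2\le \bm{u}^T\mathbbm{E}[\nabla^2\mathcal{L}(\bm{\xi})]\bm{u}$ holds for any feasible $\bm{\xi}$.

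First I would write $\bm{\Delta}=\bm{\beta}^{oracle}-\bm{\beta}^{true}$ (supported on $\mathcal{S}$) and use a second-order Taylor expansion of the empirical loss $\mathcal{L}_n(\bm{\beta})=\frac{1}{n}\sum_j f(r_j|\bm{x}_j^T\bm{\beta})$ around $\bm{\beta}^{true}$: since $\bm{\beta}^{oracle}$ minimizes $\mathcal{L}_n$ over the subspace, optimality gives $\nabla_{\mathcal{S}}\mathcal{L}_n(\bm{\beta}^{true})^T\bm{\Delta} + \bm{\Delta}^T\nabla^2_{\mathcal{S}\mathcal{S}}\mathcal{L}_n(\tilde{\bm{\beta}})\bm{\Delta}\le 0$ for some $\tilde{\bm{\beta}}$ on the segment, hence $\bm{\Delta}^T\nabla^2_{\mathcal{S}\mathcal{S}}\mathcal{L}_n(\tilde{\bm{\beta}})\bm{\Delta}\le \|\nabla_{\mathcal{S}}\mathcal{L}_n(\bm{\beta}^{true})\|_{\infty}\|\bm{\Delta}\|_1$. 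The gradient term is $\nabla\mathcal{L}_n(\bm{\beta}^{true})=\frac{1}{n}\sum_j f'(r_j|\bm{x}_j^T\bm{\beta}^{true})\bm{x}_j$, a mean-zero average (by the likelihood score identity) of bounded terms — by A.5 we have $|f'|\le\sigma$ and by A.1 $\|\bm{x}_j\|_{\infty}\le x_{\max}$ — so each coordinate is a bounded i.i.d. average, and Hoeffding's inequality plus a union bound over the $s$ coordinates gives $\|\nabla_{\mathcal{S}}\mathcal{L}_n(\bm{\beta}^{true})\|_{\infty}\le \sqrt{\frac{\mu_0}{2s}}\cdot(\text{scale})$ with probability at least $1-2s\exp(-\frac{\mu_0 n}{4s\sigma_2 x_{\max}^2})$ — this matches the second term in $\delta_1(n)$.

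Next I would lower-bound the quadratic form $\bm{\Delta}^T\nabla^2_{\mathcal{S}\mathcal{S}}\mathcal{L}_n(\tilde{\bm{\beta}})\bm{\Delta}$. A.4 only controls the \emph{population} Hessian $\mathbbm{E}[\nabla^2\mathcal{L}(\bm{\xi})]$, so I need a matrix concentration step showing that the empirical Hessian, restricted to the $s\times s$ block on $\mathcal{S}$, is close to its expectation. Using A.5(ii) ($f''<\sigma_2$) and A.1 to bound each entry of $\nabla^2\mathcal{L}$ by $\sigma_2 x_{\max}^2$, a matrix Bernstein/Hoeffding argument gives that the restricted empirical Hessian dominates $\frac{1}{2}\mathbbm{E}[\nabla^2\mathcal{L}]$ in the relevant cone with probability at least $1-2\exp(-\frac{C_h n\mu_0}{2s x_{\max}^2})$ — this yields the first term in $\delta_1(n)$, with $C_h$ absorbing the absolute constants from the matrix concentration bound. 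Combining with A.4, $\frac{\kappa}{2s}\|\bm{\Delta}\|_1^2\le \bm{\Delta}^T\nabla^2_{\mathcal{S}\mathcal{S}}\mathcal{L}_n(\tilde{\bm{\beta}})\bm{\Delta}\le \|\nabla_{\mathcal{S}}\mathcal{L}_n(\bm{\beta}^{true})\|_{\infty}\|\bm{\Delta}\|_1$, so $\|\bm{\Delta}\|_1\le \frac{2s}{\kappa}\|\nabla_{\mathcal{S}}\mathcal{L}_n(\bm{\beta}^{true})\|_{\infty}$, and then $\|\bm{\Delta}\|_2\le\|\bm{\Delta}\|_1$ (or $\|\bm{\Delta}\|_2\le\sqrt{s}\cdot\|\bm{\Delta}\|_\infty$-type refinements) combined with the gradient bound delivers the claimed rate $\|\bm{\Delta}\|_2\le\sqrt{8s^2\sigma_2\sigma^2 x_{\max}^2/(\mu_0^2 n)}$; I would track constants so that $\mu_0$ (presumably $\mu_0=\kappa$ or a fixed multiple thereof, chosen small enough that A.4 and the concentration events are compatible) and the numerical factors line up with the stated expression.

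The main obstacle I anticipate is the Hessian-concentration step: passing from the population restricted-eigenvalue bound in A.4 to a high-probability bound on the empirical restricted-eigenvalue while keeping the failure probability in the clean exponential form $2\exp(-C_h n\mu_0/(2sx_{\max}^2))$. One must ensure that the deviation is controlled uniformly over the (potentially $\bm{\beta}$-dependent) intermediate point $\tilde{\bm{\beta}}$ from the Taylor expansion — here A.5's boundedness of $f''$ (and smoothness, via $f'''\le\sigma_3$ if a Lipschitz-in-$\bm{\beta}$ argument on the Hessian is needed) is what makes a uniform bound over the feasible set $\{\|\bm{\beta}\|_1\le b\}$ tractable. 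The rest is routine: score identity for mean-zero gradient, Hoeffding for the gradient sup-norm, and algebra to match the constants in \eqref{MW_lemma_1:1}.
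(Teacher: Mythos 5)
Your overall skeleton (first-order optimality of the oracle estimator, a Taylor/mean-value expansion, a high-probability lower bound on the empirical Hessian restricted to $\mathcal{S}$, and concentration of the score at $\bm{\beta}^{true}$) matches the paper's, which proves the lemma by specializing its non-iid oracle bound (Lemma \ref{lemma:2.11}) to $|\mathcal{A}|=n$: there the error is bounded by $\|\nabla_{\mathcal{S}}\mathcal{L}(\bm{\beta}^{true})\|_2/\lambda_{\min}(\nabla^2_{\mathcal{S},\mathcal{S}}\mathcal{L}(\bm{\xi}))$, the minimum eigenvalue is bounded below by $\mu_0/2$ via a matrix Chernoff bound (this is the source of the $2s\exp(-\mu_0 n/(4s\sigma_2 x_{\max}^2))$ term, not the gradient step as you claim), and the score norm is bounded by $\sqrt{2s^2\sigma_2\sigma^2x_{\max}^2/n}$ via the Hanson--Wright inequality applied to the quadratic form $\bm{\epsilon}^T(\tfrac{1}{n}\bm{X}_{\mathcal{S}}^T\bm{X}_{\mathcal{S}})\bm{\epsilon}$, which is where $C_h$ and the $2\exp(-C_h n\mu_0/(2sx_{\max}^2))$ term come from.

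The genuine gap is in your score-concentration step. Controlling $\|\nabla_{\mathcal{S}}\mathcal{L}_n(\bm{\beta}^{true})\|_{\infty}$ by coordinate-wise Hoeffding plus a union bound cannot deliver both the stated rate and the stated probability simultaneously: to reach $\|\bm{\Delta}\|_2\le\sqrt{8s^2\sigma_2\sigma^2x_{\max}^2/(\mu_0^2 n)}$ you need each coordinate of the score at level $t\asymp \sigma x_{\max}\sqrt{s/n}$, for which Hoeffding's failure probability is $2s\exp(-nt^2/(2\sigma^2x_{\max}^2))\asymp 2s\,e^{-c\,s}$, a constant in $n$ rather than anything resembling $\delta_1(n)$; conversely, the threshold at which the failure probability equals $2s\exp(-\mu_0 n/(4s\sigma_2 x_{\max}^2))$ is $t\asymp\sigma\sqrt{\mu_0/(s\sigma_2)}$, a constant in $n$, which yields only an $O(1)$ bound on the estimation error. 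What is needed is a bound on the $\ell_2$-norm of the score that concentrates at its natural scale $\sqrt{s/n}$ with failure probability decaying like $\exp(-c\,n/s)$; the paper obtains exactly this with Hanson--Wright for the quadratic form (a chi-square-type concentration), and that is the missing ingredient in your argument. Two secondary points: your matrix-concentration step for the Hessian will itself carry a dimensional factor $s$ in front of the exponential (as in the paper's matrix Chernoff bound), so the roles of the two terms in $\delta_1(n)$ are the reverse of your assignment; and since $\bm{\Delta}$ is supported on $\mathcal{S}$ there is no need for the $\ell_1$/cone detour --- working directly with $\lambda_{\min}$ of the $\mathcal{S}\times\mathcal{S}$ block in the $\ell_2$ geometry is what produces the constants $\mu_0$ and $\sigma_2$ exactly as stated.
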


Since there are only $|\mathcal{S}|$ significant covariates, which is upper-bounded by $s$, are free to change in Equation \eqref{oracle estimator}, the optimal statistical performance of the likelihood estimation is commonly recognized as $O(\sqrt{s/n})$ in the literature (\citealt{fan2018lamm,zhao2018pathwise}), which ignores the dependence of the largest eigenvalue in the objective function's Hessian matrix. In Equation \eqref{MW_lemma_1:1}, we explicitly include its influence and can directly verify that the largest eigenvalue in the objective function's Hessian matrix is universally upper bounded by $\sigma_2sx_{\max}^2$
and therefore Equation \eqref{MW_lemma_1:1} reduces to $O(\sqrt{s/n})$ dependence. In other words, the oracle estimator attains the optimal statistical performance.

However, the significant covariates index set $\mathcal{S}$ is typically unknown to the decision-maker in practice, and we will rely on the MCP method to recover this latent sparse structure. To better understand the rationale behind the MCP method, we start with the following weighted Lasso estimator:
\begin{equation}
\bm{\beta}^{W}(\bm{X},\bm{y},\bm{w} )\doteq \arg \min_{\beta }\
\left\{\frac{1}{|\mathcal{A}|}\sum_{j\in\mathcal{A}}f(r_j|\bm{x}_j^T\bm{\beta}) +\sum_{i=1}^{d}w_{i}|\beta_{i}|\right\} ,
\label{Wlasso_estimator}
\end{equation}
where $\bm{w} = (w_1,w_2,...,w_d)$ is a positive weights vector chosen by the decision-maker. Note that
when we set $w_{i}=\lambda $ for all $i$,  $\bm{\beta}^{W}(\bm{X},\bm{y},\bm{w})$ reduces to the standard Lasso estimator, which can be biased when the magnitude of true parameters is not too small. To recover the sparse structure and provide an unbiased parameter estimator, an ideal way to select $\{w_{i}\}$ is to set $w_{i}=\lambda >0$ for all $i\in \mathcal{S}^{c}$ and $w_{j}=0$ for all $j\in \mathcal{S}$. By doing so, when the weight $\lambda $ is large enough, the weighted Lasso estimator converges to the oracle estimator $\bm{\beta}^{oracle}(\bm{X},\bm{r})$. The benefits of the weighted Lasso method have attracted considerable attention recently, and various mechanisms have been proposed in the literature aiming to improve the weight selection process (\citealt{zou2006adaptive,huang2008adaptive,candes2008enhancing}). The MCP method, adopted in our paper,  reflect such a process.

In particular, we define the following MCP penalty function:
\begin{equation}
P_{\lambda,a}(x) \doteq\ \int_{0}^{|x|}\max (0,\lambda-\frac{1}{a}|t|)dt,\notag
\end{equation}
where $a$ and $\lambda$ are positive parameters selected by the decision-maker, and the MCP estimator can be presented as follows:
\begin{align}
\bm{\beta}^{MCP}(\bm{X},\bm{r},\lambda)\doteq\arg \min_{\beta} \mathcal{L}_{\mathcal{A}_{k}}(\bm{\beta}) = \arg \min_{\beta}\left\{\frac{1}{|\mathcal{A}|}\sum_{j\in\mathcal{A}}f(r_j|\bm{x}_j^T\bm{\beta})+\sum_{i=1}^dP_{\lambda,a}(\beta_i)\right\}.
\label{MCP_estimator}
\end{align}
Denote the index set for non-zero coefficients solutions in Equation (\ref{MCP_estimator}) as $\mathcal{J}\doteq \{j:\hat{\beta_j}\ne0\}$.
If the absolute value of  the MCP estimator in $\mathcal{J}$  is greater than $a\lambda$, then $P_{\lambda,a}(\beta_j)$ become constant parameters for all $j\in \mathcal{J}$.
Therefore, we will have  $P_{\lambda,a}(\beta_j) =\frac{1}{2}a\lambda^2 $ for $j\in \mathcal{J}$ and  $P_{\lambda,a}(\beta_j) =0$ otherwise. In other words, the statistical performance of solving the MCP estimator is equivalent to solving the following problem: $
\arg \min_{\bm{\beta}_{\mathcal{J}^c}=0,\bm{\beta}_{\mathcal{J}}} \{ \frac{1}{|\mathcal{A}|}\sum_{j\in\mathcal{A}}f(r_j|\bm{x}_j^T\bm{\beta})^{2}\}$. If $\mathcal{J}=\mathcal{S}$, then the MCP estimator converges to the oracle estimator.

Solving the MCP estimator can be challenging. \cite{liufolded} have shown that it is an NP-complete problem to find the MCP estimator by globally solving Equation (\ref{MCP_estimator}).  In the next subsection, we propose a local linear approximation method, the 2-step Weighted Lasso (2sWL) procedure, to tackle this challenge, and we demonstrate that the estimator solved by the 2sWL procedure will match the oracle estimator $\bm{\beta}^{oracle}$ with high probability.

\subsection{2-Step Weighted Lasso Procedure}  \label{sec:Model_2sWL}

The 2sWL procedure consists of two steps. We first solve a standard Lasso problem by setting all positive weights in Equation (\ref{Wlasso_estimator}) to a given parameter $\lambda_0$. Then, we use the Lasso estimator obtained in the first step to update the weights vector $\bm{w}$ by taking the first-order derivatives of the MCP penalty function, and applying this updated weight vector, we re-solve the weighted Lasso problem in Equation (\ref{Wlasso_estimator}) to obtain the MCP estimator. The procedures of 2sWL at time $t$ can be described as follows:
\vspace{2mm}
\begin{center}
	\begin{tabular}{l}
		\hline
		\textbf{2-Step Weighted Lasso (2sWL) Procedure}:\label{2sWL} \\ \hline
		$\quad\quad$ \textbf{Require}: input parameters $a$ and $\lambda$ \\
		$\quad\quad$ \textbf{Step 1}: solve a standard Lasso problem \\
		$\quad\quad$$\quad\quad$$\quad\quad$$\quad\quad$ $\bm{\beta}_{1}=
			\bm{\beta}^W(\bm{X},\bm{y},\lambda)$; \\
		$\quad\quad$ \textbf{Step 2}: update $w_j = \begin{cases}
		P_{a,\lambda}^{'}(|\beta_{1,j}|)\quad& \text{, for }\beta_{1,j}\ne 0\\
		\lambda \quad& \text{, for }\beta_{1,j}= 0
		\end{cases}$ \\
		$\quad\quad$$\quad\quad$$\quad\quad$and solve a weighted Lasso Problem \\
		$\quad\quad$$\quad\quad$$\quad\quad$$\quad\quad$$\bm{\hat\beta}_{2sWL}=
			\bm{\beta}^W(\bm{X},\bm{y},\bm{w}).$ \\ \hline
	\end{tabular}
\end{center}
\vspace{2mm}

As the 2sWL procedure is equivalent to solving the Lasso problem twice, the worst-case computation complexity for 2sWL is on same order as for the standard Lasso problem. In practice, we can initialize the second step procedure with a warm start from the first step of the Lasso solution, which further reduces the computation time.

The following proposition shows that the MCP estimator identified by the 2sWL procedure can recover the oracle estimator with high probability.
\begin{proposition}\label{MW_proposition:1}
	Under assumptions A.1, A.4, and A.5, if $\min\{|\beta^{true}_j|,\beta^{true}\ne0, j=1,2,...,d\}\ge \left(\frac{96s}{\kappa}+a\right)\lambda$, $a>\frac{96s}{\kappa}$, and $\lambda>\tilde{C}_3/n$, the MCP estimator solved under the 2sWL procedure, $\bm{\beta}^{MCP}$ satisfies the following inequality:
	\begin{align}
	\mathbbm{P}\left(\|\bm{\beta}^{MCP}-\bm{\beta}^{true}\|_2\le \sqrt{\frac{8s^2\sigma_2\sigma^2x_{\max}^2}{\mu_0^2n}}\right)\ge& 1-\delta_1(n)- \delta_2(n,\lambda)-\delta_3(n),
	\end{align}
	where $\delta_3(n) \doteq \exp\left(-\min\left\{1,\kappa^2/\left(192s\sigma_3x_{\max}^2(2+\sqrt{\sigma_3}x_{\max})\right)^2\right\}n\right)$,  $\delta_2(n,\lambda)\doteq d\exp\left(-\frac{n\lambda^2}{2x_{\max}^2}\right)+d\exp\left(-\frac{n\lambda^2}{2x_{\max}^2}\left((\frac{1}{4}-\frac{24s}{\kappa a})\min\left\{1,\frac{\mu_0}{8sx_{\max}^2}\right\}\right)^2\right)$, and $\tilde{C}_3$ is a positive constant independent of $d$ and $n$.
\end{proposition}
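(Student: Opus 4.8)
The plan is to establish the proposition by showing that the 2sWL procedure produces an estimator that coincides with the oracle estimator $\bm{\beta}^{oracle}$ on a high-probability event, and then invoking Lemma \ref{MW_lemma_1} to transfer its $\ell_2$ bound. The argument naturally splits into two layers: a deterministic ``support recovery'' implication and a collection of probabilistic estimates controlling the events on which that implication holds. First I would analyze Step 1: the standard Lasso estimator $\bm{\beta}_1 = \bm{\beta}^W(\bm{X},\bm{y},\lambda)$. Using the restricted eigenvalue condition A.4, the smoothness bounds in A.5, and a standard concentration bound on the score $\|\nabla\mathcal{L}(\bm{\beta}^{true})\|_\infty$ (which requires $\lambda \gtrsim \sqrt{\log d / n}$, accounting for the $\delta_2(n,\lambda)$ term with its $d\exp(-n\lambda^2/(2x_{\max}^2))$ factor), I would derive the usual Lasso $\ell_1$-error bound $\|\bm{\beta}_1 - \bm{\beta}^{true}\|_1 \le C s\lambda/\kappa$. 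The constant here is what produces the $96s/\kappa$ and $24s/(\kappa a)$ factors appearing in the hypotheses and in $\delta_2$. Since the nonzero $|\beta^{true}_j|$ are assumed to exceed $(96s/\kappa + a)\lambda$, this $\ell_1$ bound forces every truly-significant coordinate of $\bm{\beta}_1$ to be bounded away from zero — in particular larger than $a\lambda$ — while truly-zero coordinates stay small. This is the key step that lets Step 2 work.

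Next I would analyze Step 2. Given the reweighting rule $w_j = P'_{a,\lambda}(|\beta_{1,j}|)$ for $\beta_{1,j}\neq 0$ and $w_j = \lambda$ otherwise, the event from Step 1 guarantees that for $j\in\mathcal{S}$ we have $|\beta_{1,j}| > a\lambda$, hence $P'_{a,\lambda}(|\beta_{1,j}|) = 0$, so $w_j = 0$ on the true support; and for $j\notin\mathcal{S}$ the weight is either $\lambda$ (if $\beta_{1,j}=0$) or $P'_{a,\lambda}(|\beta_{1,j}|)$, which, because $|\beta_{1,j}|$ is small, is at least $\lambda(1 - \frac{1}{a}\cdot(\text{small})) \geq \lambda(1 - 24s/(\kappa a)) > 0$ using $a > 96s/\kappa$. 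So with high probability the reweighted problem is precisely a weighted Lasso with zero penalty on $\mathcal{S}$ and penalty bounded below by a positive multiple of $\lambda$ off $\mathcal{S}$. I would then show that, on the intersection of (i) this weight-structure event, (ii) a restricted-eigenvalue-type event for the realized (non-iid) design, and (iii) a score-concentration event with this modified penalty level — whose failure probability is exactly the second piece of $\delta_2(n,\lambda)$ — the unique solution of the weighted Lasso has zero coordinates exactly off $\mathcal{S}$ and therefore equals $\bm{\beta}^{oracle}$. This uses the standard KKT/dual-certificate argument: construct the oracle solution, verify the subgradient conditions hold strictly on $\mathcal{S}^c$ because the effective penalty there dominates the gradient of the loss.

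The remaining ingredient is the non-iid issue: the historical samples $\mathcal{A}$ are generated partly under exploitation and need not be iid, so A.4 (stated for iid samples in $U_k$) does not directly apply to the empirical Hessian. Here I would invoke a matrix-perturbation / Hessian-concentration argument — comparing $\nabla^2\mathcal{L}_{\mathcal{A}}$ to $\mathbb{E}[\nabla^2\mathcal{L}]$ — to show the restricted eigenvalue survives (up to constants) on an event of probability at least $1 - \delta_3(n)$; the precise form of $\delta_3(n)$, with its $\kappa^2/(192 s\sigma_3 x_{\max}^2(2+\sqrt{\sigma_3}x_{\max}))^2$ rate, comes from a third-order Taylor remainder bound (hence the appearance of $\sigma_3$) controlling how far the empirical Hessian at $\bm{\beta}_1$ drifts from the population Hessian at $\bm{\beta}^{true}$. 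Finally, a union bound over the three bad events — Step-1 failure and design failure ($\delta_1(n)$, since the oracle bound of Lemma \ref{MW_lemma_1} is what we ultimately cite), reweighting/score failure ($\delta_2(n,\lambda)$), and Hessian-perturbation failure ($\delta_3(n)$) — together with the conclusion of Lemma \ref{MW_lemma_1} on the good event, yields the stated probability $1 - \delta_1(n) - \delta_2(n,\lambda) - \delta_3(n)$. The main obstacle I anticipate is step two combined with the non-iid design: carefully tracking the constants so that the Step-1 $\ell_1$ error is genuinely below the $a\lambda$ threshold for $\mathcal{S}$-coordinates while simultaneously keeping the off-support weights bounded below, all under a perturbed (non-iid) restricted eigenvalue — this is where the interlocking conditions $a > 96s/\kappa$, $\beta_{\min} \geq (96s/\kappa + a)\lambda$, and $\lambda > \tilde C_3/n$ must be shown to be exactly what is needed, and where a looser analysis would break the equivalence with the oracle estimator.
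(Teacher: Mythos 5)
Your proposal follows essentially the same route as the paper's proof: a first-step Lasso $\ell_1$ bound combined with the $\beta_{\min}$ condition forces the MCP-derivative weights to vanish on $\mathcal{S}$ and remain of order $\lambda$ on $\mathcal{S}^c$, a dual-certificate/KKT argument (the paper's event $\mathcal{E}_2$, verified via a Taylor expansion of $\nabla\mathcal{L}(\bm{\beta}^{oracle})$ and a bound on $\|\nabla^2_{\mathcal{S}^c,\mathcal{S}}\mathcal{L}\,(\nabla^2_{\mathcal{S},\mathcal{S}}\mathcal{L})^{-1}\|$) shows the second-step weighted Lasso coincides with the oracle estimator, and Lemma \ref{MW_lemma_1} plus a union bound over the score-concentration and restricted-eigenvalue events yields the stated $1-\delta_1-\delta_2-\delta_3$ probability. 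The only differences are minor bookkeeping attributions (e.g., $\delta_3$ arises from concentration of the sample Hessian needed for the restricted eigenvalue condition, while the third-derivative bound $\sigma_3$ enters through the Taylor remainder in the certificate verification), which do not change the argument.
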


Comparing to the oracle estimator $\beta^{oracle}$ in Lemma \ref{MW_lemma_1}, the probability bound on the MCP estimator under the 2sWL procedure has two extra terms $\delta_2(n,\lambda)$ and $\delta_3(n)$, which depend on the covariate dimension $d$ and the sample size $n$. Note that as the sample size increases, these two extra terms decrease to $0$ at an exponential rate. In other words, as the sample size increases, $\bm{\beta}^{MCP}$ matches the true parameters with high probability and converges to the true parameters at the optimal convergence rate.

\subsection{$\epsilon$-decay Random Sampling Method} \label{sec:Model_sampling}
As bandit models involve exploitation and exploration, samples generated under exploitation typically are not iid. These non-iid samples pose challenges to the existing MCP literature, which relies on the assumption that samples are iid in establishing the convergence rate and regret bounds (see the proof of Proposition \ref{MW_proposition:1} in \S\ref{sec:Model_2sWL}).

In this research, to ensure that there are some iid samples generated in the online learning and decision-making process, we propose a $\epsilon$-decay random sampling method, in which the decision-maker draws random samples, with decreasing probability, by randomly selecting decisions from the decision set with equal probability. In particular, the $\epsilon$-decay random sampling method can be described as follows:

\noindent\textbf{\emph{$\epsilon$-decay Random Sampling Method}}: At time $t$, the decision-maker will draw a random sample, with probability $\min\{1,t_0/t\}$, where $t_0$ is a pre-determined positive constant. If the seller has decided to draw a random sample at time $t$, then the decision-maker will randomly select a decision from his decision set with equal probability. Otherwise, the decision-maker will follow a bi-level decision structure, which will be specified later, to determine the optimal decision to maximize his expected reward.

The $\epsilon$-decay random sampling method can balance the exploitation and exploration trade-off by ensuring that the decision-maker does not explore too much to significantly sacrifice his revenue performance (as the number of random samples exponentially decays in time) but has sufficient random samples to guarantee the quality of the parameter vector estimation. In particular, we can bound the random sample size in the following proposition.
\begin{proposition}\label{MW_Sampling}
	Let $ C_0\ge 10$, $T>\frac{(t_0+1)^2}{e^2}$, and $t_0=2C_0 |\mathcal{K}|$. Under the $\epsilon$-decay random sampling method, the random sample size $n_k$ for arm $k\in\mathcal{K}$ up to time $T$ is bounded by
	\begin{align}
	C_0(1+\log(T+1)-\log(t_0+1))\le n_k \le3C_0(1+\log(T)-\log(t_0))\notag
	\end{align}
	with probability at least $ 1-{2}/({T+1})$.
\end{proposition}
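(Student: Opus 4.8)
The plan is to recognize $n_k$ as a sum of independent Bernoulli random variables and to control both of its tails with a multiplicative Chernoff bound. At each time $t$ the decision-maker independently flips a coin that lands heads with probability $\min\{1,t_0/t\}$ (whether to draw a random sample), and on heads independently assigns the sample to arm $k$ with probability $1/|\mathcal{K}|$; all of these coins are mutually independent across $t$. Hence $n_k=\sum_{t=1}^{T}Z_t$ with $Z_1,\dots,Z_T$ independent and $\mathbb{P}(Z_t=1)=p_t\doteq\frac{1}{|\mathcal{K}|}\min\{1,t_0/t\}$. I would first pin down $\mathbb{E}[n_k]=\sum_{t=1}^{T}p_t$ up to the constants in the statement: since $T>(t_0+1)^2/e^2>t_0$, splitting the sum at $t_0$ gives a contribution $t_0/|\mathcal{K}|$ from the first $t_0$ terms and a tail $\frac{t_0}{|\mathcal{K}|}\sum_{t=t_0+1}^{T}\frac{1}{t}$, and the usual integral comparison for a harmonic-type sum sandwiches the latter between $\frac{t_0}{|\mathcal{K}|}\big(\log(T+1)-\log(t_0+1)\big)$ and $\frac{t_0}{|\mathcal{K}|}\big(\log T-\log t_0\big)$. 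With $t_0=2C_0|\mathcal{K}|$ and the shorthand $L\doteq 1+\log(T+1)-\log(t_0+1)$ and $U\doteq 1+\log T-\log t_0$ (precisely the quantities appearing in the claim), this yields $2C_0L\le \mathbb{E}[n_k]\le 2C_0U$.

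Next I would apply the multiplicative Chernoff inequality with deviation parameter $\delta=\tfrac12$, in its sharp form, giving $\mathbb{P}(n_k\le\tfrac12\mathbb{E}[n_k])\le\exp(-\mathbb{E}[n_k]/8)$ and $\mathbb{P}(n_k\ge\tfrac32\mathbb{E}[n_k])\le\exp(-\mathbb{E}[n_k]/10)$. The sandwich for $\mathbb{E}[n_k]$ gives $C_0L\le\tfrac12\mathbb{E}[n_k]$ and $3C_0U\ge\tfrac32\mathbb{E}[n_k]$, so the events $\{n_k\le C_0L\}$ and $\{n_k\ge 3C_0U\}$ are contained in the corresponding Chernoff events; combining this with $\mathbb{E}[n_k]\ge 2C_0L$ bounds their probabilities by $\exp(-C_0L/4)$ and $\exp(-C_0L/5)$, respectively.

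It then remains to check that $\exp(-C_0L/5)\le 1/(T+1)$ (which also dominates the smaller term $\exp(-C_0L/4)$), i.e., that $C_0L\ge 5\log(T+1)$. Substituting $L=1+\log(T+1)-\log(t_0+1)$ and using $C_0\ge 10$, this reduces to $\log(T+1)\ge 2\big(\log(t_0+1)-1\big)$, which is exactly the hypothesis $T>(t_0+1)^2/e^2$ (and note $\log(t_0+1)>1$ since $t_0=2C_0|\mathcal{K}|\ge 20$). A union bound over the two tail events then gives $\mathbb{P}\big(C_0L\le n_k\le 3C_0U\big)\ge 1-2/(T+1)$, which is the claim. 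The mean computation and the Chernoff step are routine; the main obstacle — really the only place that needs care — is this final logarithmic comparison, where the precise form of the assumption on $T$ together with $C_0\ge 10$ must be used essentially tightly, and in particular the sharp constant $\delta^2/(2+\delta)=1/10$ (rather than the cruder $\delta^2/3$) is needed in the upper-tail bound for the argument to close.
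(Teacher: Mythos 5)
Your proof is correct and follows essentially the same route as the paper's: you compute $\mathbb{E}[n_k]$ by splitting at $t_0$ and sandwiching the harmonic tail by integrals, apply a two-sided multiplicative Chernoff bound at deviation $1/2$ (the paper combines both tails into the single factor $2\exp(-\mathbb{E}[n_k]/10)$, matching your sharp upper-tail constant), and close with the identical logarithmic comparison, where $C_0\ge 10$ and $T>(t_0+1)^2/e^2$ are used exactly as in the paper's final step $\bigl(\tfrac{t_0+1}{e(T+1)}\bigr)^{C_0/5}\le (T+1)^{-1}$. No gaps; if anything, your explicit treatment of $n_k$ as a sum of independent (non-identically distributed) Bernoulli variables is slightly more careful than the paper's "iid" phrasing.
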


\subsection{G-MCP-Bandit Algorithm} \label{sec:Model_algorithm}

After establishing the MCP estimator's statistical property and the $\epsilon$-decay random sampling method, we are ready to present the proposed G-MCP-Bandit algorithm. The execution of the  G-MCP-Bandit algorithm can be summarized as follows:
\begin{center}
	\begin{tabular}{l}
		\hline
		\textbf{G-MCP-Bandit Algorithm}\label{fcp_bandit} \\ \hline
		\textbf{Require}: Input parameters $t_0,h,\lambda_{1,0},\lambda_{2,0}, a$. \\
		$\quad \quad \quad \;\;\;\;\;$  Initialize $\bm{\beta}^{random}_i(0)=\bm{\beta}^{whole}_i(0) = \bm{0}$, and $\mathcal{R}_{\pi_0}=\mathcal{W}_{\pi_0} = \phi$ for all $i\in\mathcal{K}$. \\
		$\quad $ \textbf{For} $t=1,2,....$ \textbf{do} \\
		$\quad $$\quad $ Observe $\bm{x}_{t}$.\\
		$\quad $$\quad $ Draw a binary random  variable $\mathcal{D}_t$, where $\mathcal{D}_t=1$ with probability $\min\{1,t_0/t\}$.\\
		$\quad $$\quad $ \textbf{If} $\mathcal{D}_t=1$ \\
		$\quad $$\quad $$\quad $$\quad $Assign $\pi _{t}$ to a random decision $k\in \mathcal{K}$ with probability $\mathbbm{P}(\pi_t =k)=1/|\mathcal{K}|$.\\
		$\quad $$\quad $$\quad $$\quad $Play decision $\pi _{t}$, observe  $r_t$, and update $\mathcal{R}_{\pi_t} = \mathcal{R}_{\pi_{t-1}}\cup\{\bm{x}_t,r_t\}$ and $\mathcal{W}_{\pi_t} = \mathcal{W}_{\pi_{t-1}}\cup\{\bm{x}_t,r_t\}$.\\
		$\quad $$\quad $ \textbf{Else} \\
		$\quad $$\quad $$\quad $$\quad $Construct the optimal decision set:\\
		$\quad $$\quad $$\quad $$\quad $$\quad \Pi_t = \left\{i: \mathbbm{E}[R_i|\bm{x}_t,\bm{\beta}_i^{random}(t-1)]\ge\max_{j\in\mathcal{K}}\mathbbm{E}[R_j|\bm{x}_t,\bm{\beta}^{random}_j(t-1)]-\frac{1}{2}h,\ i\in\mathcal{K}\right\}$.\\
		$\quad $$\quad $$\quad $$\quad $\textbf{If} $\Pi_t$ is a singleton\\
		$\quad $$\quad $$\quad $$\quad $$\quad $ Set  $\pi _{t}=\Pi_t$. \\

		$\quad $$\quad $$\quad $$\quad $\textbf{Else}  \\
		$\quad $$\quad $$\quad $$\quad $$\quad $ Set  $\pi_t = \arg\max_{k\in\Pi_t}\mathbbm{E}[R_k|\bm{x}_t,\bm{\beta}^{whole}_k(t-1)]$.\\

		$\quad $$\quad $$\quad $$\quad $\textbf{End If} \\

		$\quad $$\quad $$\quad $$\quad $Play decision $\pi _{t}$, observe $r_t$, and update  $\mathcal{W}_{\pi_t}=\mathcal{W}_{\pi_{t-1}}\cup\{\bm{x}_t,r_t\}$.\\
		$\quad $$\quad $ \textbf{End If} \\
		$\quad $$\quad $ For all $k\in\mathcal{K}$, { set $\lambda_{1}(t) = \lambda_{1,0}\sqrt{1+\frac{\log d}{\log(t+1)}}$ and $\lambda_{2}(t) = \lambda_{2,0}\sqrt{\frac{\log (t+1)+\log d}{t+1}}$.}\\
		$\quad $$\quad$ Update parameters $\bm{\beta}^{random}_k(t)$ via the 2sWL procedure with $(\mathcal{R}_{\pi_t},\lambda_{1}(t))$.\\
		$\quad $$\quad$ Update parameters $\bm{\beta}^{whole}_k(t)$ via the 2sWL procedure with $(\mathcal{W}_{\pi_t},\lambda_{2}(t))$.\\

		$\quad $ \textbf{End for} \\ \hline
	\end{tabular}
\end{center}
\medskip

Specifically, the decision-maker will start by assigning values for system parameters ($t_0$, $\mathcal{K}$, $s_{\max}$, and $h$), which can be optimized through tuning, and initialing two parameter vector estimators ($\bm{\beta}^{random}$ and $\bm{\beta}^{whole}$) and two sample datasets ($\mathcal{R}_{\pi_0}$ and $\mathcal{W}_{\pi_0}$, which represent the random sample set and the whole sample set, respectively). Then, for an incoming user at time $t$, the decision-maker will draw a random sample with probability $\min\{1,t_0/t\}$. There are two possibilities:
\begin{itemize}
	\item
	If the decision-maker decides to draw a random sample, then he will randomly choose a decision $k$ from his decision set $\mathcal{K}$ with equal probability of $1/|\mathcal{K}|$; then, he will implement the chosen decision (i.e., $\pi _{t}=k$), observe the user's response, and claim the corresponding reward; finally, the decision-maker will include the user's covariate vector and the corresponding reward $\{\bm{x}_t,r_t\}$ in both sample datasets, $\mathcal{R}_{\pi_t}$ and $\mathcal{W}_{\pi_t}$.
	\item
	If the decision-maker decides not to draw a random sample on this incoming user, then he will use the bi-level decision structure to determine his decision. In the upper-level decision-making process, the decision-maker will first construct an optimal decision set $\Pi_t$. Specifically, all decisions in the optimal decision set  $\Pi_t$ are estimated, based on the random sample MCP estimator $\bm{\beta}^{random}$, to yield expected rewards within $h/2$ of the maximum possible reward. If there is only one decision in the optimal decision set $\Pi_t$, then the decision-maker will implement this decision as the optimal decision; otherwise, the decision-maker will perform the lower-level decision-making process, in which the decision-maker will estimate, by using the whole sample MCP estimator $\bm{\beta}^{whole}$, the rewards for all decisions in the optimal decision set  $\Pi_t$ and select the decision that generates the highest expected reward. Then, observing the user's response to the optimal decision and collecting the corresponding reward, the decision-maker will only update the whole sample dataset $\mathcal{W}_{\pi_t}$ by appending  the user's covariate vector and the corresponding reward $\{\bm{x}_t,r_t\}$.
\end{itemize}

Finally, the decision-maker will reset two parameters, $\lambda_1$ and $\lambda_2$, and use the 2sWL procedure to update the random sample parameter vector estimator  $\bm{\beta}^{random}$ and the whole sample parameter vector estimator  $\bm{\beta}^{whole}$, based on sample data sets $\mathcal{R}_{\pi_t}$ and $\mathcal{W}_{\pi_t}$, respectively.

The expected cumulative regret upper bound for the G-MCP-Bandit algorithm can be established in the following theorem.

\begin{theorem} \label{GMCP_cum_regret}
	Under assumptions A.1-A.5, let $t_0=2C_0|\mathcal{K}|$, $T\ge T_0$, $\lambda_1 = C_5\sqrt{1+\frac{\log d}{\log(T+1)}}$, $\lambda_2 = C_4\sqrt{\frac{\log(T+1)+\log d}{T+1}}$, and $a\ge\frac{2304 s}{\kappa p^*}$. The cumulative regret of the G-MCP-Bandit algorithm up to time $T$ is upper bounded:
	\begin{align}
	R^{C}(T)
	&\le R_{\max}(T_0+|\mathcal{K}|)+(6R_{\max}|\mathcal{K}|C_0+41R_{\max}|\mathcal{K}|+2e^{4\sigma_2x_{\max}b}CR_{\max}^3|\mathcal{K}|x_{\max}^2C_{\bm{\beta}}s^3)\log (T+1)\notag\\
	&=O(|\mathcal{K}|s^2(s+\log d)\log T)\notag,
	\end{align}
	where         $T_0\ge \max\{\frac{16s^2x_{\max}^2}{p^*\mu_0},\frac{8}{C_1p^*}, \frac{2x_{\max}^2}{C_4^2}, \frac{2x_{\max}^2}{((\frac{1}{4}-\frac{192s}{p^*\kappa a})\min\{1,\frac{\mu_0p^*}{64sx_{\max}^2}\})^2C_4^2},\frac{32\log(s)s\sigma_3x_{\max}^2}{\mu_0p^*},
	(\frac{C_4(768s+a)}{\beta_{\min}})^4(1+\log d)^2,\frac{16s\log(s)x_{\max}^2}{C_hp^*\mu_0},\frac{128}{p^*}\}$ and $C_0$, $C_1$, $C_4$, $C_5$, $C_h$, and $C_{\beta}$ are constants independent of $T$.
\end{theorem}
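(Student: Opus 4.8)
The plan is to decompose the cumulative regret by time period and by the source of error, then bound each piece using the estimation guarantees of Proposition \ref{MW_proposition:1} (applied to both $\bm{\beta}^{random}$ and $\bm{\beta}^{whole}$), the random-sample-size bound of Proposition \ref{MW_Sampling}, and the structural assumptions A.1--A.5. Specifically, I would split $\{1,\dots,T\}$ into a warm-up phase $t\le T_0$ and a steady-state phase $t>T_0$. For $t\le T_0$ the per-step regret is trivially bounded by $R_{\max}$ (by A.1), contributing at most $R_{\max}(T_0+|\mathcal{K}|)$; this is where the explicit lower bound on $T_0$ is consumed — it must be large enough that the high-probability events needed in the steady-state analysis actually hold. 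For $t>T_0$, I would further split the per-step expected regret according to three events: (i) the algorithm draws a random (forced) sample, (ii) it does not, and the random-sample estimator $\bm{\beta}^{random}$ has already "localized" the arm (so the upper-level decision set $\Pi_t$ is either the singleton optimal arm or, for $\bm{x}\in U_k$, correctly identifies arm $k$), and (iii) the bad event where estimation has failed.

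The forced-sampling contribution (i) is handled by Proposition \ref{MW_Sampling}: the number of random draws up to $T$ is $O(|\mathcal{K}|\log T)$, and each contributes at most $R_{\max}$, giving the $6R_{\max}|\mathcal{K}|C_0\log(T+1)$-type term. For (iii), I would use Proposition \ref{MW_proposition:1} to control the failure probabilities $\delta_1(n),\delta_2(n,\lambda),\delta_3(n)$ with $n=n_k\gtrsim C_0\log t$ random samples; by the choice of $\lambda_1$ this makes the failure probability summable, so the total regret contribution from estimation failure is $O(|\mathcal{K}|)$ (a constant absorbed into the $41R_{\max}|\mathcal{K}|$-type term). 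The heart of the argument is (ii): conditional on $\bm{\beta}^{random}$ being within the Proposition \ref{MW_proposition:1} ball, I would use Assumption A.3 (arm optimality, the constant $h$ and the sets $U_k$) together with Assumption A.5 (so that $|\mathbbm{E}[R_k|\bm{x},\bm{\beta}]-\mathbbm{E}[R_k|\bm{x},\bm{\beta}^{true}]|$ is Lipschitz in $\bm{x}^T(\bm{\beta}-\bm{\beta}^{true})$, hence small) to show that when the optimal arm is unique in $\Pi_t$ we incur no regret, and when $|\Pi_t|>1$ the regret is controlled by the accuracy of $\bm{\beta}^{whole}$. Then the Margin Condition A.2 converts the $\ell_2$-estimation error of $\bm{\beta}^{whole}$ (of order $\sqrt{s^2/n_{whole}}$, where $n_{whole}\ge c\,t$ since every step feeds $\mathcal{W}$) into an expected per-step regret of order $CR_{\max}^2 x_{\max}^2 \cdot \|\bm{\beta}^{whole}-\bm{\beta}^{true}\|^2 \asymp CR_{\max}^2 x_{\max}^2 s^2(s+\log d)/t$; summing $1/t$ over $t\le T$ yields the $\log T$ factor and the $e^{4\sigma_2 x_{\max}b}CR_{\max}^3|\mathcal{K}|x_{\max}^2 C_{\bm\beta}s^3$-type coefficient. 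Collecting the three contributions gives the stated bound, and reading off the dominant dependence yields $O(|\mathcal{K}|s^2(s+\log d)\log T)$.

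The main obstacle I anticipate is the coupling between the random-sample estimator and the whole-sample estimator in the bi-level decision rule, and in particular justifying that the whole-sample set $\mathcal{W}$ — which contains non-iid exploitation samples — still admits the oracle/MCP guarantee of Proposition \ref{MW_proposition:1}. The restricted eigenvalue condition A.4 is stated for \emph{iid} samples drawn from $U_k$, so the argument must invoke a matrix-perturbation step (as flagged in the text): one shows that once $\bm{\beta}^{random}$ has localized arm $k$, the exploitation samples added to $\mathcal{W}_k$ are (with high probability) drawn from a region on which the population Hessian still satisfies a restricted eigenvalue bound, so the empirical Hessian concentrates and Proposition \ref{MW_proposition:1} applies with $n$ replaced by the count of such "good" whole samples, which is $\Theta(p^* t)$ by A.3. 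Getting the constants in $T_0$ and in $a\ge \frac{2304s}{\kappa p^*}$ to line up so that this conditioning is self-consistent — i.e., the event that $\bm{\beta}^{random}$ is good is precisely what guarantees the whole-sample Hessian is good, which in turn is what is needed to bound the regret — is the delicate bookkeeping at the core of the proof.
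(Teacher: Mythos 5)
Your plan matches the paper's proof essentially step for step: the same decomposition into a warm-up/forced-sampling part, an estimation-failure part, and a good-event part handled via the margin condition A.2 with a threshold $\delta\asymp\sqrt{s^3/t}$, the same use of Propositions \ref{MW_Sampling} and \ref{MW_proposition:1}/\ref{MW_Non_iid_MCP}, and the same key rejection-sampling idea (made rigorous in the paper via the martingale argument of Proposition \ref{MW_size_iid} and Proposition \ref{MW_all_sample_estimator}) that, conditional on the random-sample estimator being accurate, the whole sample contains $\Theta(p^*T)$ iid draws from $U_k$, so the partial-iid MCP guarantee applies to $\bm{\beta}^{whole}$. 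The only bookkeeping slip is your claim that the estimation-failure contribution is summable to an $O(|\mathcal{K}|)$ constant: the available failure probability is $O(|\mathcal{K}|/t)$ per step, so this part contributes $O(R_{\max}|\mathcal{K}|\log T)$ (the paper's $15R_{\max}|\mathcal{K}|\log(T+1)$ term), which is absorbed into the $\log T$ coefficient and does not affect the stated bound.
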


Theorem \ref{GMCP_cum_regret} first shows that the expected cumulative regret of the G-MCP-Bandit algorithm over $T$ users is upper-bounded by $O(\log T)$. From \cite{goldenshluger2013linear}, we know that under low-dimensional settings, the expected cumulative regret under a linear bandit model is lower-bounded by $O(\log T)$, which is directly applicable to the high-dimensional settings. Note that the linear model is a special case of the generalized linear model. Therefore, the expected cumulative regret of the G-MCP-Bandit algorithm is also lower-bounded by $O(\log T)$. In other words, the G-MCP-Bandit algorithm achieves the optimal expected cumulative regret in the sample size dimension. This result comes from the facts that we can ensure $O(\log T)$ random samples at time $T$ via the $\epsilon$-decay random sampling method (Proposition \ref{MW_Sampling}) and that the MCP estimator is able to match the oracle estimator with high probability (Proposition  \ref{MW_proposition:1}). Further, when compared to the Lasso-Bandit algorithm proposed by \cite{bastani2015online} for the linear model under high-dimensional settings, the G-MCP-Bandit algorithm reduces the dependence of the expected cumulative regret on the sample size dimension from $O(\log^2 T)$ to $O(\log T)$. As the G-MCP-Bandit algorithm achieves the optimal expected cumulative regret and improves on the cumulative regret performance from existing high-dimensional bandit algorithms in the sample size dimension, we expect that the G-MCP-Bandit algorithm will be able to improve the learning process of the parameter vector estimation with limited samples and perform favorably in the cumulative regret performance even in sample-poor regions.

Theorem \ref{GMCP_cum_regret} also demonstrates that the cumulative regret of the G-MCP-Bandit algorithm in the high-dimensional covariate vector $d$ is upper-bounded by $O(\log d)$. This bound presents a significant improvement over other classic bandit algorithms (\citealt{goldenshluger2013linear,abbasi2012online,dani2008stochastic}), which yield polynomial dependence on $d$, and is also a tighter bound than the Lasso-type algorithm (i.e., $O(\log^2 d)$ in \citealt{bastani2015online}). This improvement is of particular importance in high-dimensional settings, in which the covariate dimension can be extremely large,  and it suggests that the G-MCP-Bandit algorithm can bring substantial regret reduction comparing to existing bandit algorithms, which we will illustrate through experiments in \S \ref{sec:empirical}.

\section{Key Steps of Regret Analysis for the G-MCP-Bandit Algorithm} \label{sec:steps}

In this section, we provide the abridged technical proofs for Theorem \ref{GMCP_cum_regret} -- the main theorem in this paper. Specifically, we briefly lay out four key steps in establishing the expected cumulative regret upper bound for the G-MCP-Bandit algorithm. In the first step, we highlight the influence of non-iid data, inherited from the multi-armed bandit model, and provide the statistical convergence property for the MCP estimator under partially iid samples. Applying these results to the G-MCP-Bandit algorithm, in the second and third steps, we can establish the convergence properties for both the random sample estimator, which is based on samples generated only through the $\epsilon$-decay random sampling method, and the whole sample estimator, which uses all available samples. Finally, in the last step, we establish the total expected cumulative regret by separating the regret up to time $T$ into three segments and providing a bound for each segment. The main structure and sequence of our proving steps described above are first introduced by \citet{bastani2015online}, which presents their expected regret analysis for a linear bandit model (i.e., LASSO-Bandit algorithm) in a similar sequence. We will largely follow their presentation structure, but with different steps, proving techniques, and convergence properties, to illustrate the key steps in analyzing the G-MCP-Bandit algorithm.

\subsection{General Non-iid Sample Estimator}

Note that the restricted eigenvalue condition (A.4 in \S\ref{sec:Model}), a necessary condition for high-dimensional statistics, is typically established in the literature for iid samples. Yet, in this research, we consider the G-MCP-Bandit algorithm, under which only part of the samples are iid, so we first need to show that the restricted eigenvalue condition continues to hold for partially iid samples (Lemma \ref{lemma:2.9} in E-Companion). Then, we can establish some general results for the MCP estimator under non-iid data.

We denote $\mathcal{W}$ as the whole sample set that contains all users' covariate vectors $\bm{X}$ and the corresponding rewards $\bm{r}$ for an arbitrary decision $k\in\mathcal{K}$ up to time $T$, and $\bm{\beta}^{MCP}$ as the MCP estimator for the parameter vector corresponding to decision $k$. Note that as samples in $\mathcal{W}$ are not iid, standard MCP convergence results (\citealt{fan2014strong,fan2018lamm}) cannot be directly applied. Recall that we proposed the $\epsilon$-decay random sampling method and that samples generated under this method are iid. Therefore, there exists a subset $\mathcal{A}\subseteq \mathcal{W}$ such that all samples in this subset are iid from the distribution $\mathcal{P}_{\bm{X}}$. The next step is to show that when the cardinality of $\mathcal{A}$ (i.e., $|\mathcal{A}|$) is large enough, $\bm{\beta}^{MCP}$ will converge to the true parameters $\bm{\beta}^{true}$.

\begin{proposition}\label{MW_Non_iid_MCP}
	Denote the whole sample size as $n$ and the sub-sample set, containing only iid random samples, as $\mathcal{A}$. Under assumptions A.1, A.4, and A.5, if
	$\beta_{\min}\ge (\frac{96ns}{\kappa|\mathcal{A}|}+a)\lambda$ and $a>\frac{96ns}{\kappa|\mathcal{A}|}$, then for $\zeta\le \frac{\mu_0|\mathcal{A}|\sqrt{C_2\lambda}}{2n}$, the following inequality hold for the MCP estimator under the 2sWL procedure $\bm{\beta}^{MCP}$:
	\begin{align}
	\mathbbm{P}\left(\|\bm{\beta}^{MCP}-\bm{\beta}^{true}\|_2\le \frac{2n\zeta}{|\mathcal{A}|\mu_0}\right)\ge 1-\delta_2(|\mathcal{A}|/n,\lambda)-\delta_3(|\mathcal{A}|)-\delta_4(|\mathcal{A}|/n,\zeta).\label{eq:MW_Non_iid_MCP:1}
	\end{align}
	Moreover, if $\lambda>\frac{C_3n}{|\mathcal{A}|^2}$ and $|\mathcal{A}|\ge \frac{2s^2x_{\max}^2}{\mu_0}$, then we have the following result:
	\begin{align}
	\mathbbm{P}\left(\|\bm{\beta}^{MCP}-\bm{\beta}^{true}\|_2\le \sqrt{\frac{8s^2\sigma_2\sigma^2x_{\max}^2n}{\mu_0^2|\mathcal{A}|^2}}\right)\ge 1-\delta_1(|\mathcal{A}|/n)-\delta_2(|\mathcal{A}|/n,\lambda)-\delta_3(|\mathcal{A}|),\label{eq:MW_Non_iid_MCP:2}
	\end{align}
	where $C_2$ and $C_3$ are positive constants and $
	\delta_4(|\mathcal{A}|/n,\zeta)\doteq
	2s\exp\left(-\frac{|\mathcal{A}|\mu_0}{4sLx_{\max}^2}\right)+s\exp\left(-\frac{n\zeta^2}{2\sigma^2x_{\max}^2}\right)
	$.
\end{proposition}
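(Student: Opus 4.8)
\noindent\emph{Proof approach.} Write $\mathcal{L}_{\mathcal{W}}(\bm{\beta})=\frac1n\sum_{j\in\mathcal{W}}f(r_j|\bm{x}_j^T\bm{\beta})$ and $\mathcal{L}_{\mathcal{A}}(\bm{\beta})=\frac{1}{|\mathcal{A}|}\sum_{j\in\mathcal{A}}f(r_j|\bm{x}_j^T\bm{\beta})$ for the unpenalized empirical losses over the whole (non-iid) sample set $\mathcal{W}$ and over its iid subset $\mathcal{A}$. The plan is to show that, on a high-probability event, the 2sWL output $\bm{\beta}^{MCP}$ equals the oracle estimator $\bm{\beta}^{oracle}(\bm{X},\bm{r})$ of \eqref{oracle estimator} computed on $\mathcal{W}$, and then to bound $\|\bm{\beta}^{oracle}-\bm{\beta}^{true}\|_2$ directly; this is the same oracle-matching scheme used for Proposition~\ref{MW_proposition:1}, re-run with non-iid concentration inputs. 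The one structural ingredient that makes the re-run possible is a PSD-domination identity: since $f=-\log g$ is convex (A.5), $f''\ge0$, so $\nabla^2\mathcal{L}_{\mathcal{W}\setminus\mathcal{A}}(\bm{\beta})\succeq 0$ and hence
\begin{align}
n\,\nabla^2\mathcal{L}_{\mathcal{W}}(\bm{\beta})=|\mathcal{A}|\,\nabla^2\mathcal{L}_{\mathcal{A}}(\bm{\beta})+(n-|\mathcal{A}|)\,\nabla^2\mathcal{L}_{\mathcal{W}\setminus\mathcal{A}}(\bm{\beta})\succeq|\mathcal{A}|\,\nabla^2\mathcal{L}_{\mathcal{A}}(\bm{\beta}).\notag
\end{align}
Therefore any restricted-eigenvalue--type lower bound valid for $\mathcal{L}_{\mathcal{A}}$ over its $|\mathcal{A}|$ iid samples transfers to $\mathcal{L}_{\mathcal{W}}$ after multiplication by the deflation factor $|\mathcal{A}|/n$. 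Combining A.4 with a matrix-concentration bound for $\nabla^2\mathcal{L}_{\mathcal{A}}$ near $\bm{\beta}^{true}$ on the cone $\|\bm{u}_{\mathcal{S}^{c}}\|_1\le 3\|\bm{u}_{\mathcal{S}}\|_1$ (the content of the restricted-eigenvalue transfer lemma, Lemma~\ref{lemma:2.9}; its failure is the $2s\exp(\cdot)$ part of $\delta_4$, with the $f'''$-driven control of the Hessian over a ball around $\bm{\beta}^{true}$ accounting for $\delta_3(|\mathcal{A}|)$) yields $\bm{u}^T\nabla^2\mathcal{L}_{\mathcal{W}}(\bm{\beta})\bm{u}\ge\frac{\mu_0}{s}\cdot\frac{|\mathcal{A}|}{n}\|\bm{u}_{\mathcal{S}}\|_1^{2}$ on that cone. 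This factor $|\mathcal{A}|/n$ is precisely why every rate in the statement carries $n/|\mathcal{A}|$ (or $n/|\mathcal{A}|^{2}$) where the iid Proposition~\ref{MW_proposition:1} carries $1$ (or $1/n$), and why the arguments ``$|\mathcal{A}|/n$'' of $\delta_1,\delta_2$ signal that the restricted-eigenvalue constant inside those formulas is deflated by $|\mathcal{A}|/n$.

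Next I would run the two steps of 2sWL. \emph{Step~1 (Lasso).} The score $\nabla\mathcal{L}_{\mathcal{W}}(\bm{\beta}^{true})=\frac1n\sum_{j\in\mathcal{W}}\bm{x}_jf'(r_j|\bm{x}_j^T\bm{\beta}^{true})$ is a normalized sum of a martingale-difference sequence (the GLM score has conditional mean zero, $\mathbbm{E}[f'(r_j|\bm{x}_j^T\bm{\beta}^{true})|\bm{x}_j]=0$; membership in $\mathcal{W}$ is predictable; the increments are bounded by $\sigma x_{\max}$), so Azuma--Hoeffding gives $\|\nabla\mathcal{L}_{\mathcal{W}}(\bm{\beta}^{true})\|_\infty\le\frac12\lambda$ with probability at least $1-d\exp(-n\lambda^{2}/(2x_{\max}^{2}))$ (the first term of $\delta_2$); on that event the Lasso basic inequality plus the transferred restricted-eigenvalue bound gives $\|\bm{\beta}_1-\bm{\beta}^{true}\|_\infty\le\|\bm{\beta}_1-\bm{\beta}^{true}\|_1\le\frac{96sn}{\kappa|\mathcal{A}|}\lambda$. \emph{Step~2 (reweighting).} The hypothesis $\beta_{\min}\ge(\frac{96ns}{\kappa|\mathcal{A}|}+a)\lambda$ forces $|\beta_{1,j}|\ge a\lambda$ for every $j\in\mathcal{S}$, so $w_j=P_{a,\lambda}^{'}(|\beta_{1,j}|)=0$; while for $j\notin\mathcal{S}$ either $\beta_{1,j}=0$ with $w_j=\lambda$, or $0<|\beta_{1,j}|\le\frac{96sn}{\kappa|\mathcal{A}|}\lambda$ with $w_j=\lambda-\frac1a|\beta_{1,j}|\ge(1-\frac{96sn}{\kappa a|\mathcal{A}|})\lambda>0$, the positivity using $a>\frac{96ns}{\kappa|\mathcal{A}|}$. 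With weights identically zero on $\mathcal{S}$ and bounded below by a positive multiple of $\lambda$ off $\mathcal{S}$, a primal--dual (witness) argument identifies the Step-2 solution with $\bm{\beta}^{oracle}$, provided the off-support stationarity inequality $\|\nabla\mathcal{L}_{\mathcal{W}}(\bm{\beta}^{oracle})_{\mathcal{S}^{c}}\|_\infty\le\min_{j\notin\mathcal{S}}w_j$ holds; this follows from the same martingale concentration together with $\|\bm{\beta}^{oracle}-\bm{\beta}^{true}\|$ small and $|f''|\le\sigma_2$, and gives the second term of $\delta_2$ (whose $(\frac14-\frac{24s}{\kappa a})$-type exponent is the slack left in $\min_{j\notin\mathcal{S}}w_j$, deflated by $|\mathcal{A}|/n$). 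Hence $\bm{\beta}^{MCP}=\bm{\beta}^{oracle}$ off an event of probability at most $\delta_2(|\mathcal{A}|/n,\lambda)+\delta_3(|\mathcal{A}|)$.

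It remains to bound $\|\bm{\beta}^{oracle}-\bm{\beta}^{true}\|_2$. Since $\bm{\beta}^{oracle}$ is supported on $\mathcal{S}$ and $\nabla\mathcal{L}_{\mathcal{W}}(\bm{\beta}^{oracle})_{\mathcal{S}}=\bm{0}$, a mean-value expansion on the $\mathcal{S}$-block gives $\nabla\mathcal{L}_{\mathcal{W}}(\bm{\beta}^{true})_{\mathcal{S}}+\nabla^2\mathcal{L}_{\mathcal{W}}(\tilde{\bm{\beta}})_{\mathcal{S}\mathcal{S}}(\bm{\beta}^{oracle}-\bm{\beta}^{true})_{\mathcal{S}}=\bm{0}$ for some $\tilde{\bm{\beta}}$ on the segment, so
\begin{align}
\|\bm{\beta}^{oracle}-\bm{\beta}^{true}\|_2\le\frac{\|\nabla\mathcal{L}_{\mathcal{W}}(\bm{\beta}^{true})_{\mathcal{S}}\|_2}{\lambda_{\min}(\nabla^2\mathcal{L}_{\mathcal{W}}(\tilde{\bm{\beta}})_{\mathcal{S}\mathcal{S}})}.\notag
\end{align}
Coordinatewise Azuma--Hoeffding gives $\|\nabla\mathcal{L}_{\mathcal{W}}(\bm{\beta}^{true})\|_\infty\le\zeta$ with probability at least $1-s\exp(-n\zeta^{2}/(2\sigma^{2}x_{\max}^{2}))$ (the second term of $\delta_4$), bounding the numerator by $\sqrt{s}\,\zeta$; the transferred Hessian bound gives $\lambda_{\min}(\cdot)\ge\frac{\mu_0|\mathcal{A}|}{sn}$ once $\tilde{\bm{\beta}}$ lies in the controlled neighborhood, which is ensured by the side condition $\zeta\le\frac{\mu_0|\mathcal{A}|\sqrt{C_2\lambda}}{2n}$. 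This yields $\|\bm{\beta}^{oracle}-\bm{\beta}^{true}\|_2\le\frac{2n\zeta}{|\mathcal{A}|\mu_0}$, i.e.\ \eqref{eq:MW_Non_iid_MCP:1}. For \eqref{eq:MW_Non_iid_MCP:2} I would substitute the canonical choice $\zeta\asymp x_{\max}\sigma\sqrt{\sigma_2/n}$, turning $\frac{2n\zeta}{|\mathcal{A}|\mu_0}$ into $\sqrt{8s^{2}\sigma_2\sigma^{2}x_{\max}^{2}n/(\mu_0^{2}|\mathcal{A}|^{2})}$; the added hypotheses $\lambda>C_3n/|\mathcal{A}|^{2}$ and $|\mathcal{A}|\ge 2s^{2}x_{\max}^{2}/\mu_0$ make this $\zeta$ admissible (they force $\zeta\le\frac{\mu_0|\mathcal{A}|\sqrt{C_2\lambda}}{2n}$) and collapse $\delta_4$ into $\delta_1$, which is now the oracle rate of Lemma~\ref{MW_lemma_1} with $n$ replaced by the effective sample size $|\mathcal{A}|^{2}/n$.

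The step I expect to be the main obstacle is Step~2 --- pushing the oracle-matching / primal--dual witness argument through with only \emph{partially} iid data. The restricted-eigenvalue transfer is essentially free once the PSD domination is in hand, but one must (i) replace every iid score concentration by a martingale (Azuma--Hoeffding) concentration, after verifying that the bandit's arm-selection rule really does make $\{\bm{x}_jf'(r_j|\bm{x}_j^T\bm{\beta}^{true})\}_{j\in\mathcal{W}}$ a martingale-difference sequence; and (ii) propagate the deflation factor $|\mathcal{A}|/n$ consistently through the Lasso $\ell_1$-error, the lower bound on the reweighted off-support coefficients, and the radius of the neighborhood on which the Hessian is controlled, so that the stated conditions on $\beta_{\min}$, $a$, $\lambda$, $\zeta$ and the explicit forms of $\delta_2,\delta_3,\delta_4$ emerge with exactly the indicated constants. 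Everything else is a non-iid re-run of the proof of Proposition~\ref{MW_proposition:1}.
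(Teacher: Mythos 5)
Your treatment of the first claim \eqref{eq:MW_Non_iid_MCP:1} follows essentially the paper's own route: the PSD-domination/restricted-eigenvalue transfer is Lemma \ref{lemma:2.9}, the Lasso $\ell_1$ bound carrying the deflation factor $n/|\mathcal{A}|$ is Lemma \ref{lemma:2.10}, the on/off-support weight structure plus the off-support stationarity certificate is exactly the paper's event $\mathcal{E}_2$ (verified through $\mathcal{E}_3,\mathcal{E}_4,\mathcal{E}_5$ in Lemma \ref{lemma:2.17}), and the mean-value/minimum-eigenvalue bound on $\|\bm{\beta}^{oracle}-\bm{\beta}^{true}\|_2$ is Lemmas \ref{lemma:2.11} and \ref{lemma:2.15}. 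Minor slips only: the usable eigenvalue bound is $\lambda_{\min}\ge \mu_0|\mathcal{A}|/(2n)$, not $\mu_0|\mathcal{A}|/(sn)$ (and the paper's $\zeta$ bounds the $\ell_2$ norm of the restricted score, so your $\sqrt{s}\,\zeta$ numerator combined with your eigenvalue constant does not reproduce the stated radius $2n\zeta/(|\mathcal{A}|\mu_0)$), and $\delta_3$ is the failure probability of the Gram-matrix/RE concentration on the iid subset (Lemma \ref{lemma:2.8} inside Lemma \ref{lemma:2.10}), not of an $f'''$-controlled ball argument.

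The genuine gap is in how you obtain \eqref{eq:MW_Non_iid_MCP:2}. Substituting a canonical $\zeta\asymp s\,x_{\max}\sigma\sqrt{\sigma_2/n}$ (note the missing factor $s$ in your choice) into \eqref{eq:MW_Non_iid_MCP:1} does produce the radius $\sqrt{8s^2\sigma_2\sigma^2x_{\max}^2n/(\mu_0^2|\mathcal{A}|^2)}$, but then the Hoeffding part of $\delta_4$, namely $s\exp\left(-n\zeta^2/(2\sigma^2x_{\max}^2)\right)$, becomes a constant bounded away from zero — the paper points this out explicitly right after the proposition — and no hypothesis on $\lambda$ or $|\mathcal{A}|$ makes it ``collapse into $\delta_1$.'' The second claim is proved in the paper by replacing the coordinatewise score bound with a Hanson--Wright bound on the quadratic form $\bm{\epsilon}^T\left(\frac{1}{n}\bm{X}_{\mathcal{S}}^T\bm{X}_{\mathcal{S}}\right)\bm{\epsilon}$ (second half of Lemma \ref{lemma:2.11}); that sharper concentration is where $\delta_1$ and the requirement $|\mathcal{A}|\ge 2s^2x_{\max}^2/\mu_0$ actually originate, while $\lambda> C_3n/|\mathcal{A}|^2$ serves to ensure the sharper oracle radius is below $\sqrt{C_2\lambda}$, so that event $\mathcal{E}_5$ — and hence the oracle-matching certificate $\mathcal{E}_2$ — still holds with the $\delta_1$-type probability. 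Without this quadratic-form argument (or an equivalent sharper bound on $\|\nabla_{\mathcal{S}}\mathcal{L}(\bm{\beta}^{true})\|_2$), your proposal establishes only \eqref{eq:MW_Non_iid_MCP:1}.
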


Proposition \ref{MW_Non_iid_MCP} describes the statistical properties of the non-iid MCP estimators under the 2sWL procedure. First, if we don't require the iid sample size $|\mathcal{A}|$ to be sufficiently large, then the MCP estimator's statistical performance is given by Equation \eqref{eq:MW_Non_iid_MCP:1}. If we set $\zeta$ to be on the order of $O(s/\sqrt{n})$, then $\|\bm{\beta}^{MCP}-\bm{\beta}^{true}\|$ is on the order of $O(\sqrt{s^2n/|\mathcal{A}|^2})$, which matches the result of Equation \eqref{eq:MW_Non_iid_MCP:2}. Meanwhile, however, $\delta_4(|\mathcal{A}|/n,\zeta)$ in Equation \eqref{eq:MW_Non_iid_MCP:1} becomes a positive constant asymptotically, which implies that when $|\mathcal{A}|$ is not large enough, the MCP estimator may not warrant good statistical performance. Yet, when we have sufficient iid samples (i.e., $|\mathcal{A}|\ge \frac{2s^2x_{\max}^2}{\mu_0}$), Equation \eqref{eq:MW_Non_iid_MCP:2} suggests that the MCP estimator not only guarantees a better statistical convergence ($O(\sqrt{s^2n/|\mathcal{A}|^2})$) but also attains probability $1$ when the whole sample size $n$ and the iid sample size $|\mathcal{A}|$ go to infinity.

Moreover, Proposition \ref{MW_Non_iid_MCP} shows the necessity of generating iid random samples in high-dimension bandit settings. Non-iid samples are inevitable in online learning and decision-making process, so ensuring desired asymptotical performance of the parameter vector estimation in high-dimensional settings can only be achieved through generating sufficient number of iid samples, as shown in Proposition \ref{MW_Non_iid_MCP}. We will show in next two subsections that the size of iid samples generated under the $\epsilon$-decay random sampling method is on the order of $O(\log T)$ and that the size can be further improved to the order of $O(T)$ under the bi-level decision structure in the G-MCP-Bandit algorithm.

\subsection{Estimator from Random Samples up to Time $T$}
In Proposition \ref{MW_Non_iid_MCP}, we show that the MCP estimator will converge to the oracle parameter as long as the sample set contains a sufficient number of iid samples. Recall that in our proposed G-MCP-Bandit algorithm, samples generated by  the $\epsilon$-decay random sampling method are iid, and the size of these iid samples is on the order of $O(\log (T))$; see Proposition \ref{MW_Sampling}. Combining these observations, we can establish the statistical performance of the MCP estimator under the G-MCP-Bandit algorithm in the following proposition.

\begin{proposition} \label{MW_Estimator_Random}
  	If assumptions A.1, A.3, A.4, and A.5 hold, then the MCP estimator under the G-MCP-Bandit algorithm $\bm{\beta}^{MCP}$ will satisfy the following inequality:
	\begin{align}
	\mathbbm{P}\left(\|\bm{\beta}^{MCP}-\bm{\beta}^{true}\|_1\le  \min\left\{\frac{1}{\sigma_2x_{\max}},\frac{h}{4e\sigma_2R_{\max}x_{\max}}\right\}\right)\ge 1-\frac{15}{T+1}.\notag
	\end{align}
\end{proposition}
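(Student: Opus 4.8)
The plan is to derive the conclusion by combining Proposition \ref{MW_Sampling}, which guarantees $\Omega(\log T)$ exploration samples for each arm, with Proposition \ref{MW_proposition:1}, which controls the MCP estimator on iid data: every sample feeding $\bm{\beta}^{random}$ is produced by the $\epsilon$-decay random sampling rule, hence is iid with covariate drawn from $\mathcal{P}_x$. Fix the arm $k$ under consideration and let $n_k$ be the number of random samples gathered for it up to time $T$. Assumption A.3 enters precisely here (it was not needed in Proposition \ref{MW_Non_iid_MCP}): it gives $\mathbb{P}(\bm{x}\in U_k)\ge p^*$, and since $\nabla^2\mathcal{L}=f''(\cdot)\bm{x}\bm{x}^T\succeq 0$ by convexity in A.5, the population Hessian under $\mathcal{P}_x$ dominates $p^*$ times the Hessian conditioned on $\{\bm{x}\in U_k\}$; hence the restricted-eigenvalue condition A.4 transfers from $U_k$ to the full covariate distribution with $\kappa$ replaced by $\kappa p^*$, which is the source of the $1/p^*$ inflation of the lower bound on $a$ relative to Proposition \ref{MW_proposition:1}.

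Condition on the event $\mathcal{E}_1\doteq\{\,n_k\ge C_0(1+\log(T+1)-\log(t_0+1))\,\}$, which by Proposition \ref{MW_Sampling} has probability at least $1-2/(T+1)$. On $\mathcal{E}_1$ the iid sample size is of order $\log T$; together with the algorithm's choice $\lambda_1(T)=\lambda_{1,0}\sqrt{1+\log d/\log(T+1)}$, which stays above a fixed constant while giving $n_k\lambda_1(T)^2\gtrsim\log d+\log T$, the hypotheses of Proposition \ref{MW_proposition:1} ($a>96s/(\kappa p^*)$, $\lambda_1>\tilde C_3/n_k$, $\beta_{\min}\ge(96s/(\kappa p^*)+a)\lambda_1$) hold once $T$ exceeds a fixed threshold (polynomial in $s$ and a power of $d$, of the kind packaged into the $T_0$ of Theorem \ref{GMCP_cum_regret}). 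Applying that proposition with sample size $n_k$ yields, with probability at least $1-\delta_1(n_k)-\delta_2(n_k,\lambda_1)-\delta_3(n_k)$, both that $\bm{\beta}^{MCP}$ equals the $s$-sparse oracle estimator and that $\|\bm{\beta}^{MCP}-\bm{\beta}^{true}\|_2\le\sqrt{8s^2\sigma_2\sigma^2x_{\max}^2/(\mu_0^2n_k)}$.

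On that event $\bm{\beta}^{MCP}-\bm{\beta}^{true}$ is supported on $\mathcal{S}$ with $|\mathcal{S}|\le s$, so Cauchy--Schwarz gives
\[
\|\bm{\beta}^{MCP}-\bm{\beta}^{true}\|_1\le\sqrt{s}\,\|\bm{\beta}^{MCP}-\bm{\beta}^{true}\|_2\le\sqrt{\frac{8s^3\sigma_2\sigma^2x_{\max}^2}{\mu_0^2n_k}},
\]
and since $n_k$ is of order $C_0\log(T+1)$ on $\mathcal{E}_1$, for $T$ past the same threshold this is at most the constant $\min\{1/(\sigma_2x_{\max}),\,h/(4e\sigma_2R_{\max}x_{\max})\}$. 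That is exactly the radius in the statement: the $1/(\sigma_2x_{\max})$ piece is the neighbourhood on which A.5's Taylor-type bounds apply, and the $h/(4e\sigma_2R_{\max}x_{\max})$ piece forces the induced expected-reward estimation error below $h/4$, which is what keeps the optimal decision set $\Pi_t$ in the algorithm correctly placed (so that the true optimal arm stays in $\Pi_t$ and every arm in $\mathcal{K}_s$ is excluded). For the finitely many $T$ below the threshold the inequality is vacuous or else irrelevant, being absorbed into the $R_{\max}(T_0+|\mathcal{K}|)$ term of Theorem \ref{GMCP_cum_regret}.

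The remaining step, summing the failure probabilities, is the one I expect to be the main obstacle, because $\delta_1,\delta_2,\delta_3$ are themselves functions of the random count $n_k$. Since each $\delta_i$ decreases in its sample-size argument, on $\mathcal{E}_1$ one may replace $n_k$ by its deterministic lower bound $\Theta(C_0\log(T+1))$: then $\delta_1$ and $\delta_3$ are of the form $\mathrm{poly}(s)\cdot(T+1)^{-\Theta(C_0/s)}$, and the delicate term $\delta_2(n_k,\lambda_1)$ carries a factor $d\exp(-n_k\lambda_1^2/(2x_{\max}^2))$ with exponent $\Theta(C_0\lambda_{1,0}^2(\log d+\log T))$. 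Calibrating $C_0\gtrsim s$ and $\lambda_{1,0}$ so that these exponents exceed one, and requiring $T\ge T_0$, turns every term into $O(1/(T+1))$; in particular the $\log d$ carried inside $\lambda_1^2$ is precisely what cancels the covariate dimension appearing in $\delta_2$, keeping the final bound dimension-free. Adding the $2/(T+1)$ from $\mathcal{E}_1$ and choosing the constants so the total is at most $15/(T+1)$ completes the proof; a secondary subtlety is carrying out the restricted-eigenvalue transfer of the first paragraph uniformly over all arms, including those in $\mathcal{K}_s$, which again leans on A.3.
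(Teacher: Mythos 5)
Your proposal is essentially correct, but it reaches the bound by a different route than the paper. The paper proves this proposition by invoking its appendix Lemma \ref{corollary:2.14}: there, among the $n$ random samples of an arm only those whose covariates fall in $U_k$ are treated as the ``good'' iid set $\mathcal{A}$ (since A.4 is only assumed conditionally on $U_k$), Lemma \ref{lemma:2.13} (built on Proposition \ref{MW_Sampling} and A.3) gives $n\gtrsim C_0\log(T+1)$, $|\mathcal{A}|\gtrsim p^*C_0\log(T+1)$ and $|\mathcal{A}|/n\ge p^*/24$ with probability $1-3/(T+1)$, and then the partially-iid machinery (Lemmas \ref{lemma:2.9}--\ref{lemma:2.11}, \ref{lemma:2.17}) is run with $\kappa$ deflated by $|\mathcal{A}|/n$, a free accuracy parameter $t$ chosen so the radius equals exactly $\min\{1/(\sigma_2x_{\max}),h/(4e\sigma_2R_{\max}x_{\max})\}$, and $C_0$, $C_5$ (i.e.\ $\lambda_{1,0}$) sized so the five $\delta$-terms plus the sampling events sum to $15/(T+1)$; the $\ell_1$ bound then follows from exact support recovery and the $\sqrt{s}$ factor, just as you argue. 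You instead transfer the restricted-eigenvalue condition to the unconditional distribution $\mathcal{P}_x$ at the population level, using $f''\ge 0$ (A.5) and $\mathbb{P}(\bm{x}\in U_k)\ge p^*$ (A.3) to replace $\kappa,\mu_0$ by $\kappa p^*,\mu_0 p^*$, and then apply the purely-iid result (Proposition \ref{MW_proposition:1}) to all $n_k$ random samples. That transfer is valid and arguably cleaner for this particular estimator: it avoids the $|\mathcal{A}|/n$ event and the sample-splitting lemma, gives fewer failure terms and a milder requirement $a\gtrsim 96s/(p^*\kappa)$ rather than the paper's $2304s/(p^*\kappa)$. What the paper's route buys is (i) reuse of exactly the non-iid machinery it needs anyway for the whole-sample estimator in Proposition \ref{MW_all_sample_estimator}, and (ii) a radius that is a fixed constant achieved by sizing $C_0$ (through the parameter $t$), so the bound holds for all $T$ past the basic sampling threshold, whereas your use of the $1/\sqrt{n_k}$-rate bound makes the constant radius contingent on $T$ exceeding a threshold depending on $h$, $s$, $d$ (or on enlarging $C_0$ accordingly); since the proposition is consumed inside Proposition \ref{MW_size_iid} for $T\ge\max\{30,4C_0|\mathcal{K}|\}$, you should fold that requirement into $C_0$ as the paper does rather than defer it to $T_0$. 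Your handling of the random count $n_k$ via monotonicity of the $\delta$'s and the $\log d$ cancellation inside $\lambda_1^2$ matches the paper's accounting, and your flag about arms in $\mathcal{K}_s$ (for which A.3 defines no $U_k$) is a genuine looseness that the paper itself glosses over.
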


\subsection{Estimator from Whole Samples up to Time $T$}

In addition to the iid samples generated by the $\epsilon$-decay random sampling method, other samples can also be iid and used to improve the statistical performance of the MCP estimator. To intuit, recall that in the G-MCP-Bandit algorithm, when the user is not selected to perform a random sampling, the decision-maker will use the bi-level structure to determine the optimal decision to maximize his expected reward. In the upper-level decision-making process, only iid samples will be used (as $\bm{\beta}^{random}$  is the MCP estimator based on samples generated only by the  $\epsilon$-decay random sampling method) to determine the candidate(s) for the optimal decision set. From Proposition \ref{MW_Estimator_Random}, we know that this random sample MCP estimator will not be far away from its true parameter values. In other words,  if we define the event that the random sample MCP estimator at time $t$ is within a given distance from its true parameter as event  $\mathcal{E}_6$:
\begin{align}
\mathcal{E}_6\dot=  \left\{\|\bm{\beta}^{random}_k(t)-\bm{\beta}^{true}_k\|_1\le \min\left\{\frac{1}{\sigma_2x_{\max}},\frac{h}{4e\sigma_2R_{\max}x_{\max}}\right\},\ k\in\mathcal{K}\right\},\label{eq:event_6}
\end{align}
then event  $\mathcal{E}_6$ will happen with high probability. Further, conditioning on event $\mathcal{E}_6$, we can directly verify that for any $\bm{x}\in U_k$, $k\in\mathcal{K}$, the following inequality holds:
\begin{align}
\mathbbm{E}(R_k|\bm{x},\bm{\beta}^{random}_k(t))\ge\max_{j\ne k}\mathbbm{E}(R_j|\bm{x},\bm{\beta}^{random}_j(t))+ \frac{h}{2}.\label{Xue_h_half}
\end{align}
Therefore, if using Equation \eqref{Xue_h_half} as the selecting criterion, the decision-maker will be able to choose the optimal decision $k$ for any $x\in U_k, k\in\mathcal{K}$ with high probability.
Formally, we can bound the total number of times under which event $x\in U_k$ and event $\mathcal{E}_6$  happen simultaneously.
In particular, we define $
M(i) \dot=  \mathbbm{E}\left[\sum_{j=1}^{T+1}\mathbbm{1}(\bm{x}_j\in U_k,\mathcal{E}_6, x_j\notin\mathcal{R}_k)|\mathcal{F}_{i}\right]$ for $i \in \{0, 1,2,.., T+1\}$,
where $\mathcal{F}_i = \{(\bm{x}_j,r_j)$ for $j\le i\}$ and $\mathcal{R}_k$ is the set containing iid samples generated through the $\epsilon$-decay random sampling method for arm $k$. Then, $\{M(i)\}$ is a martingale with bounded difference $|M(i)-M(i+1)|\le 1$ for $i=0,1,2,...,T$, and we can bound the value of $M(T+1)$ in the following proposition:
\begin{proposition}\label{MW_size_iid}
	If $T\ge \max\{30,4C_0|\mathcal{K}|\}$, then $\mathbbm{P}\left(M(T+1)\le \frac{p^*(T+1)}{8}\right)\le \exp\left(-\frac{(p^*)^2T}{128}\right)$.
\end{proposition}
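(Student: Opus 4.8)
The plan is to establish the concentration inequality for $M(T+1)$ by combining a lower bound on the unconditional expectation $M(0)=\mathbbm{E}[M(T+1)]$ with the Azuma–Hoeffding inequality applied to the martingale $\{M(i)\}$. First I would lower-bound $M(0)$: by definition, $M(0) = \sum_{j=1}^{T+1}\mathbbm{P}(\bm{x}_j\in U_k,\ \mathcal{E}_6,\ \bm{x}_j\notin\mathcal{R}_k)$. Using $\mathbbm{P}(\bm{x}_j\in U_k)\ge p^*$ from assumption A.3, and observing that on the event $\mathcal{E}_6$ the random-sample estimator is accurate (which happens with probability at least $1-15/(T+1)$ by Proposition \ref{MW_Estimator_Random}), and that the event $\bm{x}_j\notin\mathcal{R}_k$ — i.e.\ sample $j$ was not drawn as a random sample for arm $k$ — has probability close to $1$ for $j$ not too small (since the random-sampling probability $\min\{1,t_0/j\}$ decays and the arm is chosen with probability $1/|\mathcal{K}|$), a union bound gives $M(0)\ge p^*(T+1) - 15 - (\text{contribution of early rounds where random sampling is forced})$. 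The forced-sampling and low-probability corrections are $O(\log T)$ or $O(1)$, so for $T$ large enough (here $T\ge\max\{30,4C_0|\mathcal{K}|\}$) one gets $M(0)\ge \frac{p^*(T+1)}{4}$, say, leaving a comfortable gap to the target threshold $\frac{p^*(T+1)}{8}$.

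Next I would invoke the martingale structure already asserted in the excerpt: $\{M(i)\}_{i=0}^{T+1}$ is a martingale with respect to the filtration $\{\mathcal{F}_i\}$ with bounded increments $|M(i)-M(i+1)|\le 1$. This is the Doob martingale (conditional-expectation) construction for the fixed random variable $\sum_{j=1}^{T+1}\mathbbm{1}(\bm{x}_j\in U_k,\mathcal{E}_6,\bm{x}_j\notin\mathcal{R}_k)$, and the bounded-difference property holds because changing one sample $(\bm{x}_i,r_i)$ alters at most one indicator in the sum — though one must be a little careful that $\mathcal{E}_6$, which depends on $\bm{\beta}^{random}_k(t)$ and hence on many samples, still only shifts the conditional expectation by at most $1$; this follows because the entire sum is bounded in $[0,T+1]$ and the standard Azuma increment bound for Doob martingales of $[0,1]$-valued-per-coordinate sums applies. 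Then Azuma–Hoeffding yields
\begin{align}
\mathbbm{P}\left(M(T+1)\le M(0)-\eta\right)\le \exp\left(-\frac{\eta^2}{2(T+1)}\right)\notag
\end{align}
for any $\eta>0$. Choosing $\eta = M(0)-\frac{p^*(T+1)}{8}\ge \frac{p^*(T+1)}{8}$ (using the lower bound on $M(0)$ from the first step) gives $\mathbbm{P}(M(T+1)\le \frac{p^*(T+1)}{8})\le \exp\left(-\frac{(p^*)^2(T+1)^2}{128(T+1)}\right)=\exp\left(-\frac{(p^*)^2(T+1)}{128}\right)\le\exp\left(-\frac{(p^*)^2T}{128}\right)$, which is exactly the claimed bound.

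The main obstacle I anticipate is the careful bookkeeping in the first step: one must show the expected count $M(0)$ is at least (roughly) $p^*(T+1)/4$ despite three sources of loss — the $15/(T+1)$ failure probability of $\mathcal{E}_6$ (cheap, contributes an additive constant after multiplying by $T+1$), the exclusion of samples that landed in $\mathcal{R}_k$ (these are the forced/random samples, of expected size only $O(\log T)$ by Proposition \ref{MW_Sampling}, so again negligible), and the early rounds $j\le t_0$ where a random sample is drawn with probability $1$ (contributing at most $t_0=2C_0|\mathcal{K}|$, which is why the hypothesis $T\ge 4C_0|\mathcal{K}|$ appears). Assembling these three estimates so that their sum is dominated by a constant fraction of $p^*(T+1)$ — and in particular verifying the threshold constants line up so that $\eta\ge p^*(T+1)/8$ — is the delicate part; everything afterward is a direct application of Azuma's inequality.
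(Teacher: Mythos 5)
Your overall architecture matches the paper's proof: view $\{M(i)\}$ as a Doob martingale with increments bounded by $1$, lower-bound $M(0)$, and finish with Azuma. Your Azuma step (one-sided, with $\eta=M(0)-\tfrac{p^*(T+1)}{8}\ge\tfrac{p^*(T+1)}{8}$) is correct and even delivers the stated constant a bit more cleanly than the paper's two-sided version with denominator $2(T+2)$.

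The genuine gap is in your lower bound on $M(0)$. You use the additive union bound $\mathbbm{P}(\bm{x}_j\in U_k,\mathcal{E}_6,\bm{x}_j\notin\mathcal{R}_k)\ge p^*-\mathbbm{P}(\mathcal{E}_6^c)-\mathbbm{P}(\bm{x}_j\in\mathcal{R}_k)$, which after summing gives $M(0)\ge p^*(T+1)-15-\mathbbm{E}[n_k]$, where the expected number of forced/random samples for arm $k$ is of order $2C_0\bigl(1+\log(T/t_0)\bigr)$ plus the $t_0$ early rounds. This does \emph{not} yield $M(0)\ge \tfrac{p^*(T+1)}{4}$ under only $T\ge\max\{30,4C_0|\mathcal{K}|\}$: since $C_0\ge 16/p^*$, at the minimal $T\asymp 4C_0|\mathcal{K}|$ the quantity $p^*(T+1)$ is of order $64|\mathcal{K}|$ (bounded), whereas the subtracted $C_0\log T$ term diverges as $p^*\to 0$, so your bound can even become negative; the constants do not "line up" as you hoped. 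The paper instead exploits the (claimed) independence of $\{\bm{x}_j\in U_k\}$, $\mathcal{E}_6$, and $\{\bm{x}_j\notin\mathcal{R}_k\}$ and bounds each summand multiplicatively, $\mathbbm{P}(\bm{x}_j\in U_k)\,\mathbbm{P}(\mathcal{E}_6)\,\mathbbm{P}(\bm{x}_j\notin\mathcal{R}_k)\ge p^*\left(1-\tfrac{15}{T+1}\right)\left(1-\tfrac{2C_0|\mathcal{K}|}{T+1}\right)\ge\tfrac{p^*}{4}$, where $T\ge 30$ and $T\ge 4C_0|\mathcal{K}|$ make each correction factor at least $1/2$ irrespective of how small $p^*$ is. Replace your union bound by this product argument (or strengthen the hypothesis on $T$ to absorb the $C_0\log T$ loss); with that substitution the remainder of your proof goes through.
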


Intuitively, Proposition \ref{MW_size_iid} suggests that with high probability, the actual iid sample size in $U_k$ for decision $k$ will be on the order of $O(T)$ instead of $O(\log T)$. This improvement is the reason why the whole sample MCP estimator $\bm{\beta}^{whole}$ used in the lower-level decision-making process has a better statistical performance, compared to the random sample MCP estimator $\bm{\beta}^{random}$ used in the upper-level decision-making process. Specifically, we can establish the convergence property for the whole sample MCP estimator in the following proposition.

\begin{proposition}\label{MW_all_sample_estimator}
	If assumptions A.1, A.3, A.4, and A.5 hold, then at time $T$ the whole sample MCP estimator under the G-MCP-Bandit algorithm $\bm{\beta}^{whole}$ will satisfy the following inequality:
	\begin{align}
	\mathbbm{P}\left(\|\bm{\beta}^{whole}(T)-\bm{\beta}^{true}\|_2 \le \sqrt{C_{\bm{\beta}}\frac{s^2}{T+1}}\right)\ge 1- \frac{13}{T+1},\notag
	\end{align}
	where $C_{\beta}$ is a positive constant.
\end{proposition}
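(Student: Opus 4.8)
The plan is to combine Proposition~\ref{MW_size_iid}, which locates an $O(T)$-sized iid sub-sample hidden inside the whole sample set $\mathcal{W}_k$ of an arbitrary arm $k$, with Proposition~\ref{MW_Non_iid_MCP}, Equation~\eqref{eq:MW_Non_iid_MCP:2}, which turns the existence of such a sub-sample into a convergence bound for $\bm{\beta}^{whole}$. First I would identify the iid sub-sample: on the event $\mathcal{E}_6$ of Equation~\eqref{eq:event_6}, inequality~\eqref{Xue_h_half} forces the optimal decision set to be the singleton $\Pi_t=\{k\}$ whenever $\bm{x}_t\in U_k$, so every such user that is not drawn as a random sample is assigned decision $k$ and its pair $(\bm{x}_t,r_t)$ enters $\mathcal{W}_k$. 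Conditioning on the sub-filtration generated by the random-sampling draws --- which alone determines $\bm{\beta}^{random}$ and hence whether $\mathcal{E}_6$ holds at each step --- these retained users are mutually independent and each is distributed as $\mathcal{P}_x$ restricted to $U_k$; since A.4 posits the restricted eigenvalue condition precisely for iid samples drawn from $U_k$, they qualify as the iid set $\mathcal{A}\subseteq\mathcal{W}_k$ of Proposition~\ref{MW_Non_iid_MCP}, with $|\mathcal{A}|=M(T+1)$. Proposition~\ref{MW_size_iid} then gives $|\mathcal{A}|\ge p^*(T+1)/8$ on an event of probability at least $1-\exp(-(p^*)^2T/128)$; note that steps at which $\mathcal{E}_6$ fails simply do not contribute to $M$, so $\mathcal{E}_6$-failures are already absorbed here and need no separate accounting.

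Next I would apply Proposition~\ref{MW_Non_iid_MCP}, Equation~\eqref{eq:MW_Non_iid_MCP:2}, with $n=|\mathcal{W}_k|\le T+1$ and $|\mathcal{A}|\ge p^*(T+1)/8$. Because the right-hand side of~\eqref{eq:MW_Non_iid_MCP:2} is increasing in $n$ and decreasing in $|\mathcal{A}|$ while the terms $\delta_1,\delta_2,\delta_3$ are monotone in $|\mathcal{A}|$, it suffices to verify the hypotheses at these deterministic bounds for $T\ge T_0$: $|\mathcal{A}|\ge 2s^2x_{\max}^2/\mu_0$ reduces to $T+1\ge 16s^2x_{\max}^2/(p^*\mu_0)$; $a>96ns/(\kappa|\mathcal{A}|)$ follows from $n/|\mathcal{A}|\le 8/p^*$ together with the standing choice $a\ge 2304s/(\kappa p^*)$; the signal condition $\beta_{\min}\ge(96ns/(\kappa|\mathcal{A}|)+a)\lambda_2$ holds because $\lambda_2=C_4\sqrt{(\log(T+1)+\log d)/(T+1)}$ falls below the fixed threshold $\beta_{\min}/(768s/(\kappa p^*)+a)$ once $T$ exceeds the $\big(C_4(768s+a)/\beta_{\min}\big)^4(1+\log d)^2$ term of $T_0$; and $\lambda_2>C_3 n/|\mathcal{A}|^2$ holds since $n/|\mathcal{A}|^2\le 64/((p^*)^2(T+1))$. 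Substituting $n\le T+1$ and $|\mathcal{A}|\ge p^*(T+1)/8$ into~\eqref{eq:MW_Non_iid_MCP:2} gives
\[
\|\bm{\beta}^{whole}(T)-\bm{\beta}^{true}\|_2\le\sqrt{\frac{8s^2\sigma_2\sigma^2x_{\max}^2(T+1)}{\mu_0^2\,(p^*(T+1)/8)^2}}=\sqrt{\frac{512\,\sigma_2\sigma^2x_{\max}^2}{\mu_0^2(p^*)^2}\cdot\frac{s^2}{T+1}},
\]
so $C_{\bm{\beta}}=512\,\sigma_2\sigma^2x_{\max}^2/(\mu_0^2(p^*)^2)$ works.

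Finally I would collect the failure probability by a union bound: the $\exp(-(p^*)^2T/128)$ from the sub-sample-size step, which is $\le 1/(T+1)$ for $T\ge T_0$, together with $\delta_1(\cdot)+\delta_2(\cdot,\lambda_2)+\delta_3(\cdot)$ from Proposition~\ref{MW_Non_iid_MCP}. Evaluated with effective sample size of order $|\mathcal{A}|^2/n=\Theta(T)$, the $\delta_1$ and $\delta_3$ terms decay exponentially in $T$, while $\delta_2$ contributes terms of the form $d\exp\!\big(-c\,C_4^2(\log(T+1)+\log d)\big)$; choosing the constant $C_4$ large enough that $c\,C_4^2\ge 2$ makes each of these $\le 1/(T+1)$, and summing the (at most thirteen) such contributions produces the stated $1-13/(T+1)$.

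The main obstacle is precisely this calibration and bookkeeping: one must confirm that the random, realization-dependent size $|\mathcal{A}|$ and the possibly-much-smaller-than-$T$ whole-sample size $n=|\mathcal{W}_k|$ jointly satisfy every hypothesis of Proposition~\ref{MW_Non_iid_MCP} for all $T\ge T_0$, and that $\delta_1,\delta_2,\delta_3$ --- which are functions of $|\mathcal{A}|/n$, $|\mathcal{A}|$, and $\lambda_2$ --- are uniformly $O(1/(T+1))$; this is exactly what consumes the tuned choices of $\lambda_2$, $C_4$, $a$, and the many terms defining $T_0$. A secondary subtlety, in the first step, is checking that conditioning on $\mathcal{E}_6$ and on a user being retained does not distort its covariate distribution away from $\mathcal{P}_x$ restricted to $U_k$, which relies on $\mathcal{E}_6$ and the random-sampling indicator being measurable with respect to information independent of $\bm{x}_t$.
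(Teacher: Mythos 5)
Your proposal is correct and follows essentially the same route as the paper: identify the $O(T)$-sized iid sub-sample inside $\mathcal{W}_k$ via the rejection-sampling/martingale argument of Proposition \ref{MW_size_iid} on event $\mathcal{E}_6$, feed $n\le T+1$ and $|\mathcal{A}|\ge p^*(T+1)/8$ into the non-iid MCP bound (Equation \eqref{eq:MW_Non_iid_MCP:2}) after verifying its hypotheses with the tuned $\lambda_2$, $a$, and $T_0$, and union-bound the failure terms to get $13/(T+1)$, with the same constant $C_{\bm{\beta}}=512\sigma_2\sigma^2x_{\max}^2/(\mu_0^2(p^*)^2)$.
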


\subsection{Cumulated Regret Up To Time $T$}

Finally, to bound the cumulative regret for the G-MCP-Bandit algorithm, we need to divide the time, up to time $T$, into three groups and provide a upper bound for each group.

The first group contains all samples before time $T_0$ and all random samples up to time $T$. Note that before time $T_0$ (the explicit expression for $T_0$ is given in the proof of Theorem \ref{GMCP_cum_regret} in E-Companion), the decision-maker does not have sufficient samples to accurately estimate covariate parameter vectors. Hence, the reward under the G-MCP-Bandit algorithm will suffer and be sub-optimal compared to that of the oracle case. We can bound the cumulative regret by the worst case performance: $R_{\max}T_0+R_{\max}|\mathcal{K}|(2+6C_0\log T)$, where the first part of this cumulative regret is for all samples before time $T_0$ and the second part is for all random samples up to time $T$.

Next, we will segment the $t > T_0$ case into two groups, depending on whether we can accurately estimate covariate parameter vectors by using only random samples. In particular, the second group includes cases where $t > T_0$ and the random-sample-based estimators are not accurate (i.e., event $\mathcal{E}_6$ doesn't hold). Under those scenarios, inevitably, the decision-maker's decisions will be suboptimal with high probability. However, note that as the size of iid samples increases in $t$, the probability of event $\mathcal{E}_6$ not occurring decreases.
We can bound the cumulative regret for the second group by $15R_{\max}|\mathcal{K}|\log(T+1)$.

The last group includes scenarios where $t > T_0$ and the random sample estimators are accurate enough. 
Benefiting from the improved estimation accuracy (Proposition \ref{MW_all_sample_estimator}), we can bound the cumulative regret for the last group as $(26R_{\max}|\mathcal{K}|+4e^{4\sigma^2x_{\max}b}CR_{\max}^3|\mathcal{K}|x_{\max}^2C_{\beta}s^3)\log(T)$. Combining the cumulative regret for all three groups, Theorem \ref{GMCP_cum_regret} directly follows.

\section{Empirical Experiments}  \label{sec:empirical}

In this section, we will benchmark the G-MCP-Bandit algorithm to one high-dimensional bandit algorithm, Lasso-Bandit by \citealt{bastani2015online}, and to two other bandit algorithms that were not specifically developed for high-dimensional problems, OLS-Bandit by \citealt{goldenshluger2013linear} and OFUL by \citealt{abbasi2011improved}. In particular, we seek answers to the following two questions: How does the performance of the G-MCP-Bandit algorithm compare to other bandit algorithms? And how is the performance of the G-MCP-Bandit algorithm influenced by the data availability ($T$), the data dimensions ($s$ and $d$), and the size of the decision set ($K$)?

To this end, we start with two synthetic-data-based experiments in \S\ref{sec:empirical_synthetic} and conduct two additional experiments based on real datasets, the warfarin dosing patient data in \S \ref{sec:empirical_warfarin} and the Tencent search advertising data in \S \ref{sec:empirical_tencent}, respectively. Note that the algorithms and theoretical bounds of OFUL, OLS-Bandit, and Lasso-Bandit are developed under the assumption that the reward function follows the linear model, which is a special case in the G-MCP-Bandit algorithm. Therefore, for fair comparison, we specify the underlying reward function for the G-MCP-Bandit algorithm to follow the same linear model (i.e., the reward under decision $k$ for a user with covariate vector $\bm{x}$ takes the form of $R_{k}(\bm{x})=\bm{x}^T\bm{\beta}_k^{true}+\epsilon$, where $\epsilon$ is a $\sigma$-gaussian random variable) in all experiments, except the Tencent search advertising data experiment, in which we explore the performance of the G-MCP-Bandit model under both the linear model and the logistic model.

\subsection{Synthetic Data (Linear Model)}  \label{sec:empirical_synthetic}

In the first synthetic data experiment, we fix the size of the decision set $K$ and focus on the impacts of the data dimensions, $s$ and $d$, and the data availability, $T$, on learning algorithms' cumulative regret performance. In particular, we consider a two-arm bandit setting (i.e., $K=2$). To simulate different sparsity levels, we vary the covariate dimension $d=\{10,10^2,10^3,10^4\}$ and keep the dimension for significant covariates unchanged at $s=5$. Therefore, as the covariate dimension $d$ increases, the data become sparser. The underlying true parameter vectors for covariates are arbitrarily set to be $\bm{\beta}_{1} = (1,2,3,4,5,0,0,...)$ for the first arm and  $\bm{\beta}_2 = 1.1\cdot\bm{\beta}_1$ for the second arm. For each incoming user, we randomly draw her covariate vector from $N(0,I_{d\times d})$ and the error term in the linear model $\epsilon$ from $N(0,1)$. Finally, we use the same parameter $\lambda $ value in both the Lasso-Bandit algorithm and the G-MCP-Bandit algorithm and select the unique parameter for the G-MCP-Bandit algorithm $a$ at $2$. For each algorithm, we perform 100 trials and report the average cumulative regret for OFUL, OLS-Bandit, Lasso-Bandit, and G-MCP-Bandit (under the linear model) in Figure \ref{fig_simu_influ_d}.

\begin{figure}[h!]
	\FIGURE
	{
		\subfigure[d=100]{
			\includegraphics[width=0.5\linewidth]{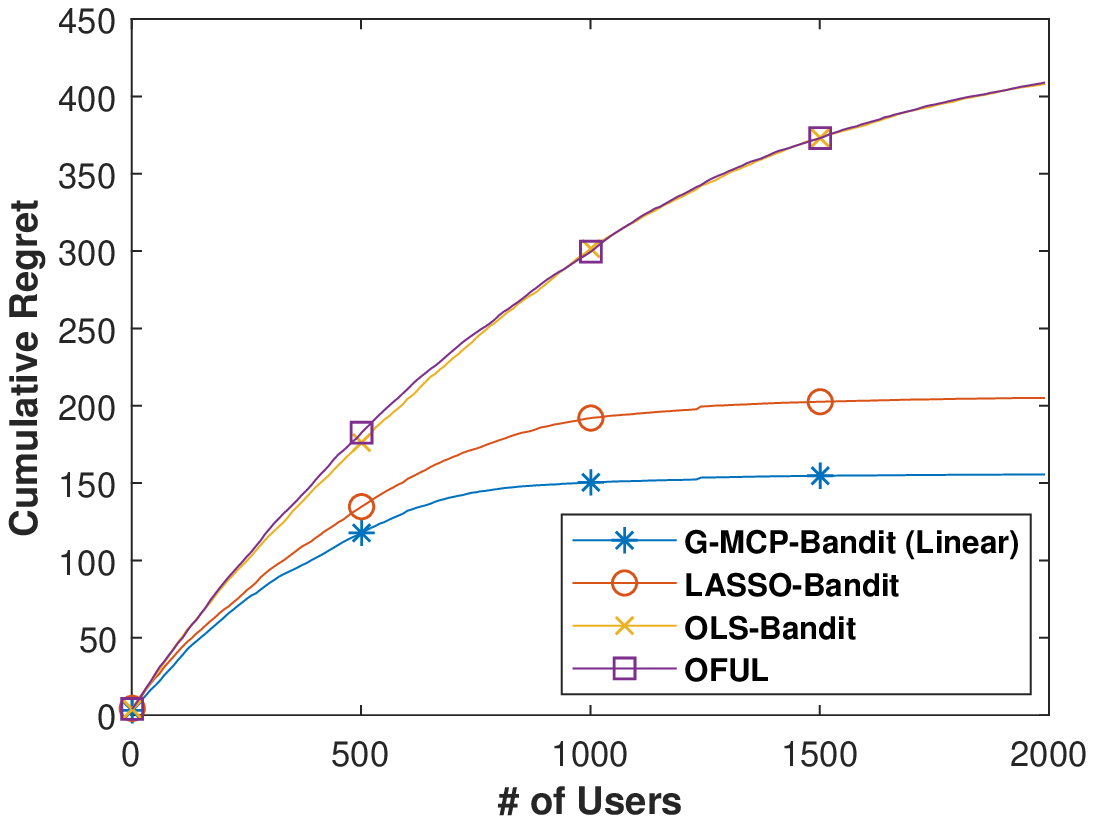}}
		\subfigure[T=1000]{
			\includegraphics[width=0.5\linewidth]{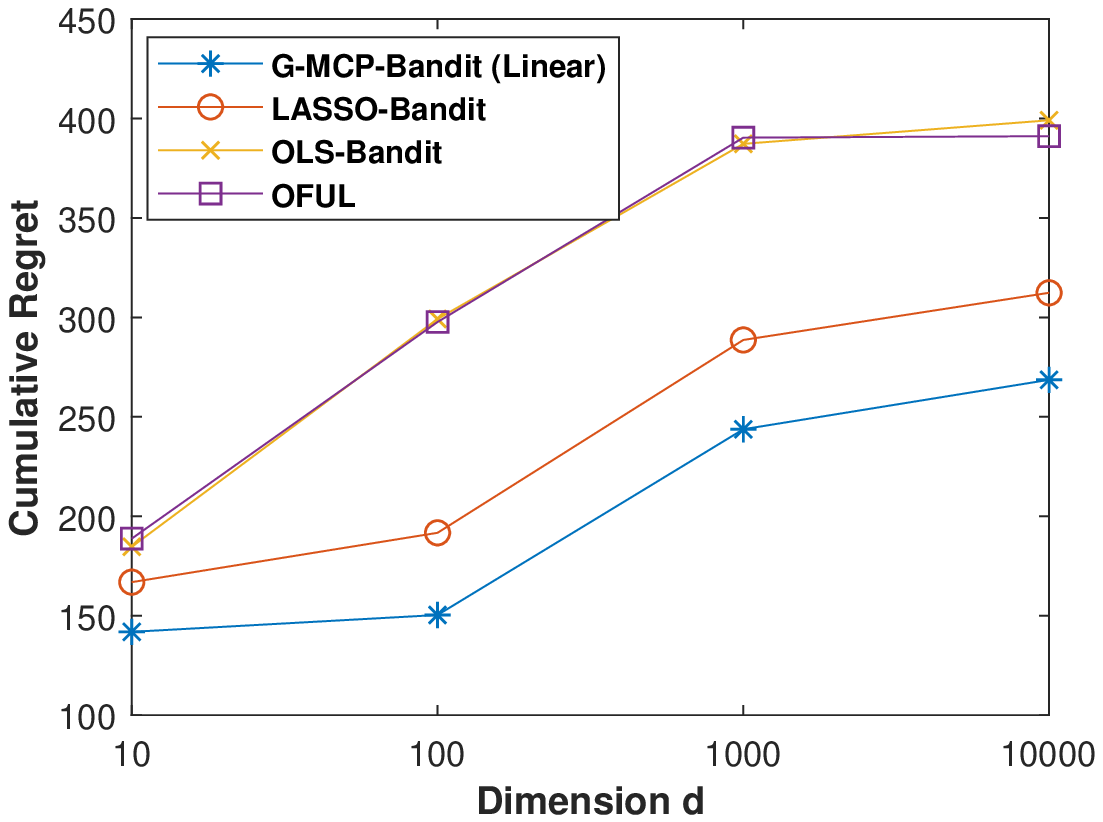}}
	}
	{Synthetic study 1: The impact of $T$ and $d$ on the cumulative regret, where $K=2$ and $s=5$.\label{fig_simu_influ_d}}{}
\end{figure}

Overall, we observe that the G-MCP-Bandit algorithm significantly outperforms OFUL, OLS-Bandit, and Lasso-Bandit and achieves the lowest cumulative regret. Facing only two decisions/arms, the decision-maker can easily identify the optimal arm, and therefore OFUL and OLS-Bandit, both of which are not specifically designed for high-dimensional settings, perform nearly identically. Lasso-Bandit and G-MCP-Bandit could benefit from their abilities to recover the sparse structure and identify the significant covariates. Therefore, compared to OFUL and OLS-Bandit, Lasso-Bandit and G-MCP-Bandit can improve their parameters estimations, especially under high-dimensional settings, and perform substantially better. Further, the improvement of the cumulative regret performance of G-MCP-Bandit over Lasso-Bandit follows from the facts that the MCP estimator is unbiased and could improve the sparse structure discovery. Next, we will discuss the influence of sample size $T$ and the covariate dimension $d$ on these algorithms' cumulative regret performance.

Figure \ref{fig_simu_influ_d}(a) illustrates the influence of the sample size $T$ on the cumulative regret for the case where $d=100$ (other cases exhibit a similar pattern and are therefore omitted){\footnote{In all four experiments where $d\in\{10,10^2,10^3,10^4\}$, we simulated the sample size up to $10,000$ and observe that the G-MCP-Bandit algorithm's cumulative regret seems to be stabilized before $T=2000$. Therefore, we only plot for the first $2000$ samples to avoid duplications.}}. As we have proven that G-MCP-Bandit provides the optimal time dependence under both low-dimensional and high-dimensional settings (Theorem \ref{GMCP_cum_regret}), G-MCP-bandit is guaranteed to strictly improve on the cumulative regret performance from Lasso-Bandit, especially when T is not too small. Note that facing insufficient samples, all algorithms fail to accurately learn parameter vectors and therefore perform poorly. As the sample size increases, the G-MCP-bandit algorithm is able to, in an expeditious fashion, unveil the underlying sparse data structure, accurately estimate parameters vectors, and outperform all other benchmarks. For example, we observe that the regret reduction of G-MCP-Bandit over all other algorithms is larger than $10\%$ when the sample size $T$ is larger than $350$.  This observation echoes our theoretical findings that the G-MCP-Bandit algorithm attains the optimal regret bound in sample size dimension $O(\log T)$.

We also observe that the benefits of G-MCP-Bandit over other three algorithms appear to increase in the data sparsity level. Figure \ref{fig_simu_influ_d}(b) presents the influence of the covariate dimension $d$ on the cumulative regret for the case where $T=1000$. Recall that we fixed the dimension for significant covariates $s=5$. Therefore, as the covariate dimension $d$ increases, the data become sparser (i.e., $d/s$ increases). As expected, the cumulative regret for all four algorithms increases in the covariate dimension $d$, but at different rates. On the one hand, both OLS-Bandit and OFUL lack the ability to recover the sparse data structure and are ill suited for high-dimensional problems. On the other hand, Lasso-Bandit and G-MCP-Bandit, which adopt different statistical learning methods for the sparse structure discovery and are designed for high-dimensional problems, have lower cumulative regret that increases in $d$ at a slower rate. Further, we notice that the G-MCP-Bandit algorithm has the least increase in cumulative regret among all four algorithms, which confirms our theoretical finding in Theorem \ref{GMCP_cum_regret}: The G-MCP-Bandit algorithm has a better dependence on the covariate dimension $O(\log d)$ than Lasso-Bandit $O(\log^2 d)$, OFUL, and OLS-Bandit (the last two algorithms have polynomial bounds in $d$).

In the second synthetic data experiment, we study the influence of the size of decision set by varying $K=\{2,5,10,20,50,100\}$ and keeping the data dimensions unchanged ($s=5$ and $d=100$). For each decision, we randomly draw the coefficients parameter vector for its significant covariates 
from a standard normal distribution. 
Finally, we keep other parameters the same as in the first synthetic data experiment. Figure \ref{fig_simu_influ_arm} plots the average cumulative regret for OFUL, OLS-Bandit, Lasso-Bandit, and G-MCP-Bandit (under the linear model).

\begin{figure}[h!]
	\FIGURE
	{
		\subfigure[K=2]{
			\includegraphics[width=0.35\linewidth]{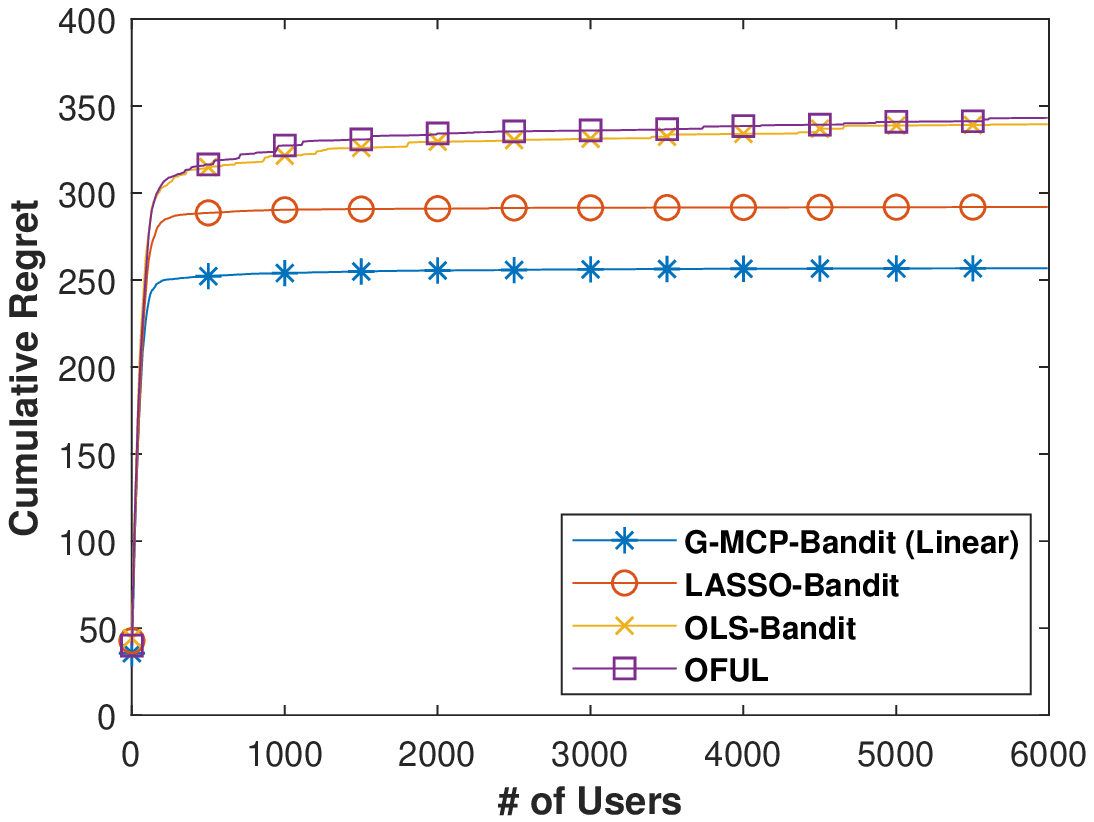}}
		\subfigure[K=10]{
			\includegraphics[width=0.35\linewidth]{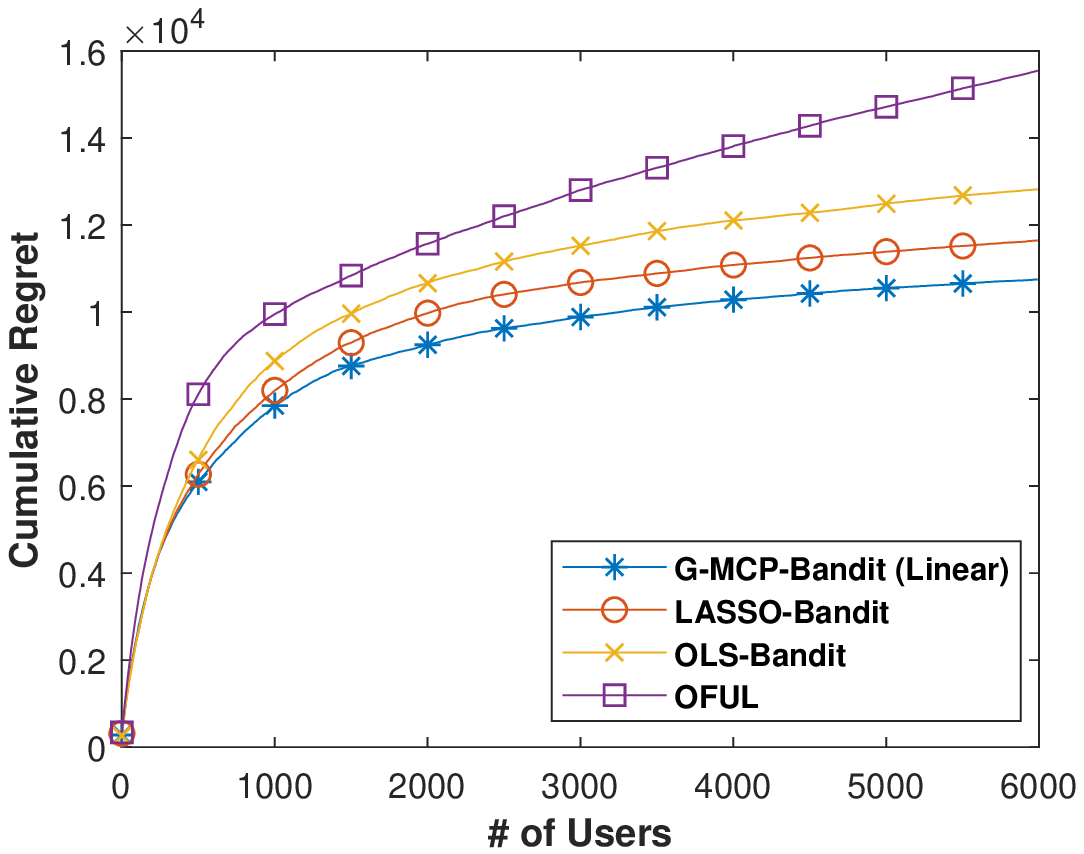}}
		\subfigure[T=6000]{
			\includegraphics[width=0.35\linewidth]{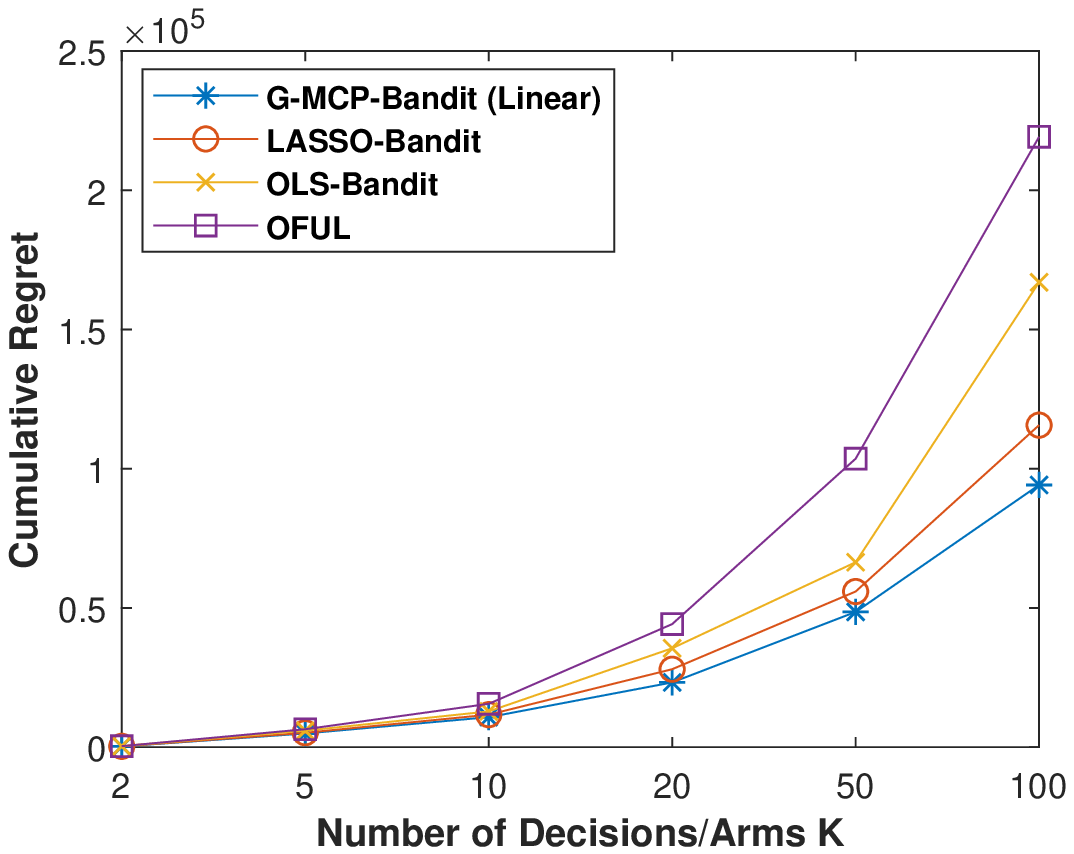}}
	}
	{Synthetic study 2: The impact of $T$ and $K$ on the cumulative regret, where $d=100$ and $s=5$.\label{fig_simu_influ_arm}}{}
\end{figure}

We observe that the benefits of adopting G-MCP-Bandit over the other three algorithms increases with the size of the decision set. In particular,  as $K$ increases, the cumulative regret gap between G-MCP-Bandit and any other algorithm grows; see Figure \ref{fig_simu_influ_arm}(c). This observation is as expected. To intuit, note that as we add more possible decisions into the decision set, the complexity and difficulty for the decision-maker to select the optimal decision grow for two main reasons. First, the decision-maker will need more samples to identify the significant covariates and estimate the parameter vectors. Second, as the number of decisions increases, the process of comparing the expected rewards among all decisions and selecting the optimal decision becomes more vulnerable to estimation errors. Therefore, we should expect that as the number of arms increases, the amount of samples required for these algorithms to accurately learn the parameter vectors and select the optimal decision will increase as well.

Figure \ref{fig_simu_influ_arm}(a) and Figure \ref{fig_simu_influ_arm}(b) plot the cumulative regret for the case of two arms and ten arms, respectively. Clearly, the decision-maker needs far more samples before his cumulative regret can be stabilized in the case of ten arms than in the case of two arms. 
Therefore, the cumulative regret performance under all algorithms suffers from the increasing size of the decision set. 
As discussed earlier, the G-MCP-Bandit algorithm attains the optimal bound in the sample size dimension and is able to learn the sparse data structure and provide accurate unbiased estimators for parameters vectors. Hence, we observe that the benefits of adopting the G-MCP-Bandit algorithm over other algorithms are amplified as the number of arms increases, as illustrated in Figure \ref{fig_simu_influ_arm}(c).

\subsection{Warfarin Dosing Patient Data (Linear Model)} \label{sec:empirical_warfarin}

In the first real-data-based experiment, we considers a health care problem in which physicians determine the optimal personalized warfarin dosage for incoming patients (\citealt{international2009estimation}).  Using the same dataset, \citet{bastani2015online} demonstrate that the Lasso-Bandit algorithm outperforms other existing bandit algorithms, including OFUL-LS (\citealt{abbasi2011improved}), OFUL-EG (\citealt{abbasi2012online}), and OLS-Bandit (\citealt{goldenshluger2013linear}). The warfarin dosing patient data contains detailed covariates (the size of covariates used in our experiment is $93$) for $5,700$ patients, including demographic, diagnosis, and genetic information that can be used to predict the optimal warfarin dosage.

We apply the G-MCP-Bandit algorithm to the warfarin dosing patient dataset to evaluate its performance in practical decision-making contexts where the technical assumptions specified early in \S\ref{sec:Model} may not hold. Following \citet{bastani2015online}, we formulate this problem as a 3-armed bandit with covariates under the linear model. 

\begin{figure}[h!]
	\FIGURE
	{
		\includegraphics[width=0.5 \linewidth]{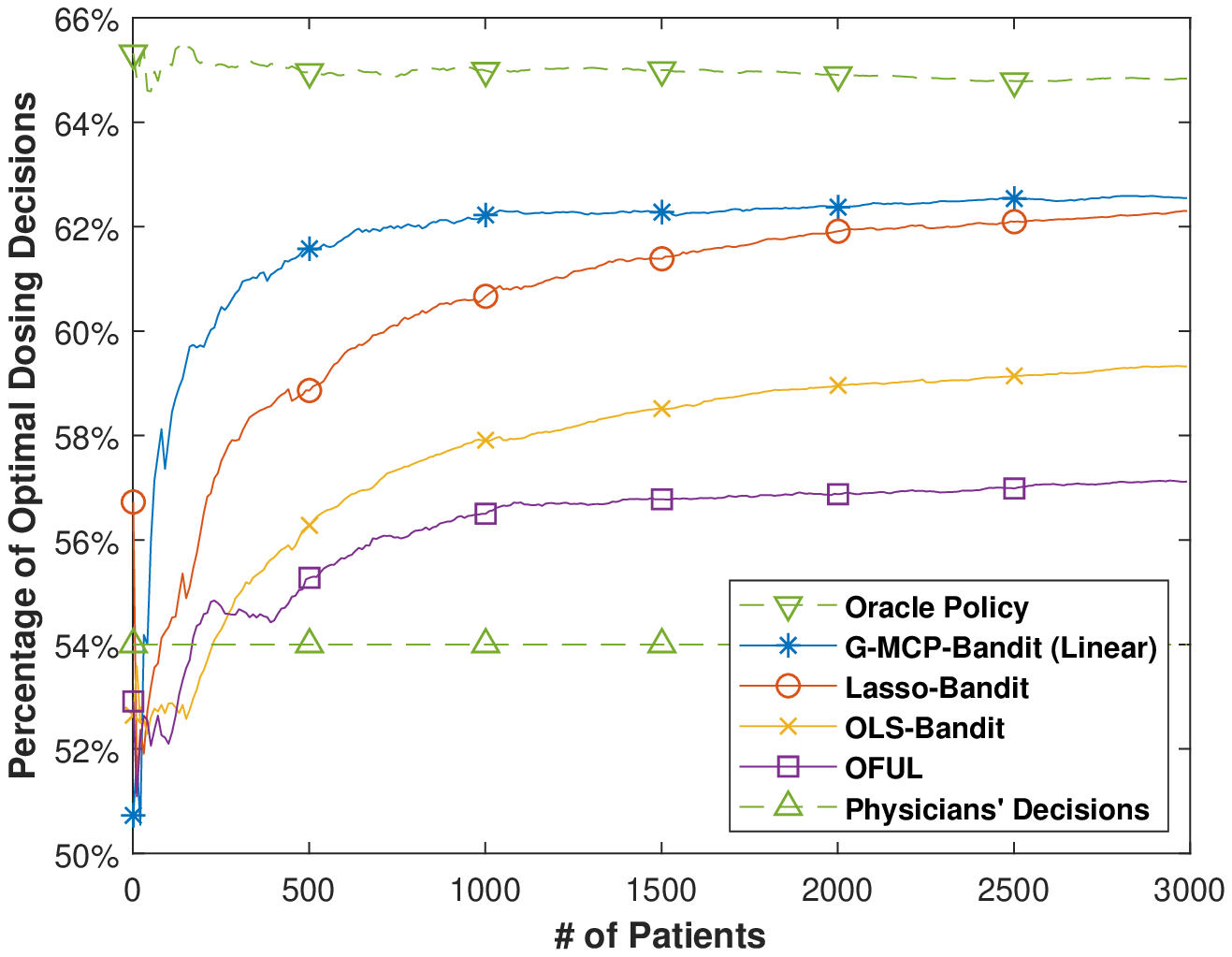}
	}
	{Warfarin dosing experiment: The percentage of optimal warfarin dosing decisions. \label{fig_warfrain}}{}
\end{figure}

Figure \ref{fig_warfrain} compares the average fraction of optimal/correct dosing decisions under G-MCP-Bandit (under the linear model) to those under OFUL, OLS-Bandit, Lasso-Bandit, actual physicians' decisions, and the oracle policy. We observe that as long as the sample size is not too small (e.g., the number of patients exceeds 40), the G-MCP-Bandit algorithm will outperform physicians' decisions, OLS-Bandit,  Lasso-Bandit, and OFUL. However, when there are very limited samples ($<40$ patients), the physicians' static decisions (i.e., always recommend medium dose) perform the best, with a stable optimal percentage of $54\%$. Without sufficient samples, all learning algorithms are unable to accurately learn the parameter vectors for patients' covariates, and consequently they behave suboptimally.

As the sample size increases, all learning algorithms are able to update their estimation of parameter vectors and eventually outperform the physicians' static decisions. Among all learning algorithms, the G-MCP-Bandit algorithm requires the fewest samples (i.e., $T>40$ for G-MCP-Bandit, $T>90$ for Lasso-Bandit, $T>180$ for OFUL, $T>220$ for OLS-Bandit) to provide better dosing decisions than physicians.

\subsection{Tencent Search Advertising Data (Linear \& Logistic Models)}\label{sec:empirical_tencent}

In the last experiment, we scale up the dataset's dimensionality to consider a search advertising problem at Tencent. The Tencent search advertising dataset is collected by Tencent's proprietary search engine, soso.com, and it documents the interaction sessions between users and the search engine (\citealt{Tencent2012}). In the dataset, each session contains a user's demographic information (age and gender), the query issued by the user (combinations of keywords), ads information (title, URL address, and advertiser ID), the user's response (click or not), etc. This dataset is high-dimensional with sparse data structure and contains millions of observations and covariates. To put the size of the dataset into perspective, it contains $149,639,105$ session entries, more than half a million ads, more than one million unique keywords, and more than $26$ million unique queries.

For illustration purposes, we focus on a three-ad experiment\footnote{We have extended the experiment to include more ads, but we find that doing so will not qualitatively change our observations and insights but considerably increases the computation time. Therefore, we decide to focus on this three-ad experiment in the paper.} (with ad IDs 21162526, 3065545, and 3827183). Each of these three ads has an average CTR higher than $2\%$ and more than $100,000$ session entries, which provide reasonably accurate estimation for parameter vectors (see next paragraph for more discussions). In total, there are $849,338$ session entries with $169,744$ unique queries and $8$ covariates for users' demographic information. As the search engine receives payment from advertisers only when the user has clicked the sponsored ad, we arbitrarily assume that advertisers will award the search engine $\$1$, $\$5$, and $\$10$ for each clicked ad, respectively.

Figure \ref{fig_tencent} plots the the average revenue performance under OFUL, OLS-Bandit, Lasso-Bandit, the random policy, the oracle policy, and G-MCP-Bandit (under both linear and logistic models). It is worth noting that the ``true'' oracle policy is impossible to implement, as the true parameter vectors are unknown, or at least have considerable variance even when all session entries in the dataset are used for estimation. Therefore, the oracle policy in the experiment represents the scenario when the search engine has access to all data to estimate these parameter vectors and make ad selection decisions. In addition, we introduce the random policy as another benchmark to simulate the scenario in which the search engine will randomly recommend an ad with equal probability to an incoming user. Finally, note that the CTR prediction is binary in nature (i.e., click or not). We therefore include the G-MCP-Bandit algorithm under the logistic model and compare it to the G-MCP-Bandit algorithm under the linear model to study the influence of the underlying model choice. In the experiment, we simulate incoming users by permuting their covariate vectors randomly. For each algorithm, we perform 100 trials and report the average revenue with $5000$ users, which seems to be sufficient for the G-MCP-Bandit algorithm to converge.

\begin{figure}[h!]
	\FIGURE
	{
		\includegraphics[width=0.5 \linewidth]{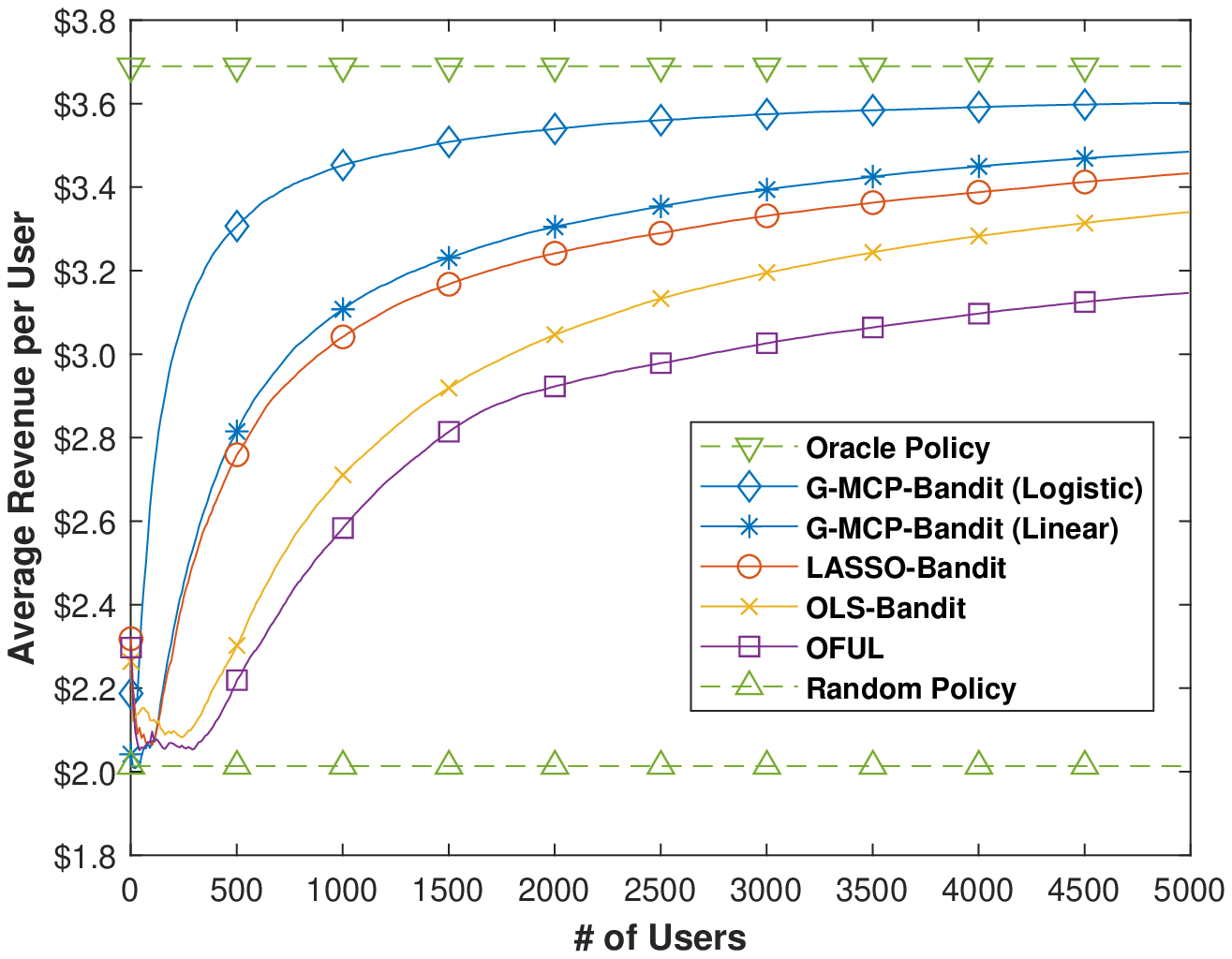}
	}
	{Tencent search advertising experiment: The average revenue under different algorithms.\label{fig_tencent}}{}
\end{figure}

We can show that all learning algorithms generate higher average revenue than the random policy for any number of users and that the G-MCP-Bandit algorithm outperforms other algorithms under most scenarios. Specifically, when comparing all algorithms under the same linear model, we observe that the G-MCP-Bandit algorithm (under the linear model) has better average revenue performance than OFUL, OLS-Bandit, and Lasso-Bandit as soon as there are more than $140$ users. This observation is consistent with that in warfarin dosing experiment in \S\ref{sec:empirical_warfarin} and suggests that compared to other benchmark algorithms, the G-MCP-Bandit algorithm can improve the parameter vector estimation under high-dimensional data with limited samples and achieve better revenue performance.

Further, we find the choice of underlying models can significantly influence the G-MCP-Bandit algorithm's average revenue performance. Note that the advertisers award the search engine only when users have clicked the recommended ads. Therefore, the search engine's reward function is binary in nature. When comparing the G-MCP-Bandit algorithm under the logistic model to that under the linear model, both of which are special cases of the G-MCP-Bandit algorithm, we observe that the former always dominates the latter for any number of users. In addition, the G-MCP-Bandit algorithm under the logistic model merely needs $20$ users to outperform the other three algorithms. This observation suggests that understanding the underlying managerial problem and identifying the appropriate model for the G-MCP-Bandit algorithm can be critical and bring substantial revenue improvement for the decision-maker.

\section{Conclusion}

In this research, we develop the G-MCP-Bandit algorithm for online learning and decision-making processes in high-dimensional settings under limited samples. We adopt the matrix perturbation technique to derive new oracle inequality for the MCP estimator under non-iid samples and further propose a linear approximation method, the 2sWL procedure, to overcome the computational and statistical challenges associated with solving the MCP estimator (an NP-complete problem) under the bandit setting. We demonstrate that the MCP estimator solved by the 2sWL procedure matches the oracle estimator with high probability and converges to the true parameters with the optimal convergence rate. Further, we show that the cumulative regret of the G-MCP-Bandit algorithm over the sample size $T$ is bounded by $O(\log T)$, which is the lowest theoretical bound for all possible algorithms under both low-dimensional and high-dimensional settings. In the covariate dimension $d$, the cumulative regret of the G-MCP-Bandit algorithm is bounded by $O(\log d)$, which is also a tighter bound than existing bandit algorithms. Finally, we illustrate that compared to other benchmark algorithms, the G-MCP-Bandit algorithm performs favorably in synthetic-data-based and real-data-based experiments.

Implementing the G-MCP-Bandit algorithm under high-dimensional data with a large decision set in an online setting can be challenging in practice, and addressing these challenges can extend this research to several directions. One of the major challenges is the lengthy computation time, when the covariate dimension and the decision set are large. In particular, during a collaboration with a leading online marketplace, we adopted the G-MCP-Bandit algorithm, aiming to improve its product recommendation system. Using its offline datasets (with $5$ million covariates and $100$ to $500$ products),  we showed that the G-MCP-Bandit algorithm improved the prediction of the conversion rate by $15\%$ and the expected revenue by $5\%$ on average, but a single server could take hours to execute the algorithm. Hence, in order to practically implement the G-MCP-Bandit algorithm in \emph{online} settings, parallel computation techniques must be developed to tremendously reduce the computation time. Other challenges for the G-MCP-Bandit algorithm are how to simultaneously recommend multiple products and how to dynamically update the recommendation if the user did not click the recommended products but keeps refreshing the window. Tackling these challenges requires an integration of the assortment optimization and Bayesian learning into the G-MCP-Bandit algorithm.


\medskip \medskip \medskip
\linespread{1}
\bibliographystyle{informs2014}
\bibliography{fcp_mab}

\begin{thebibliography}{49}
\providecommand{\natexlab}[1]{#1}
\providecommand{\url}[1]{\texttt{#1}}
\providecommand{\urlprefix}{URL }

\bibitem[{Abbasi-Yadkori et~al.(2011)Abbasi-Yadkori, P{\'a}l, \protect\BIBand{}
  Szepesv{\'a}ri}]{abbasi2011improved}
Abbasi-Yadkori Y, P{\'a}l D, Szepesv{\'a}ri C (2011) Improved algorithms for
  linear stochastic bandits. \emph{Advances in Neural Information Processing
  Systems}, 2312--2320.

\bibitem[{Abbasi-Yadkori \protect\BIBand{} Szepesvari(2012)}]{abbasi2012online}
Abbasi-Yadkori Y, Szepesvari C (2012) \emph{Online learning for linearly
  parametrized control problems} (University of Alberta).

\bibitem[{Agrawal \protect\BIBand{} Goyal(2013)}]{agrawal2013thompson}
Agrawal S, Goyal N (2013) Thompson sampling for contextual bandits with linear
  payoffs. \emph{International Conference on Machine Learning}, 127--135.

\bibitem[{Auer(2002)}]{auer2002using}
Auer P (2002) Using confidence bounds for exploitation-exploration trade-offs.
  \emph{Journal of Machine Learning Research} 3(Nov):397--422.

\bibitem[{Bastani \protect\BIBand{} Bayati(2015)}]{bastani2015online}
Bastani H, Bayati M (2015) Online decision-making with high-dimensional
  covariates .

\bibitem[{B{\"u}hlmann \protect\BIBand{} Van
  De~Geer(2011)}]{buhlmann2011statistics}
B{\"u}hlmann P, Van De~Geer S (2011) \emph{Statistics for high-dimensional
  data: methods, theory and applications} (Springer Science \& Business Media).

\bibitem[{Candes et~al.(2008)Candes, Wakin, \protect\BIBand{}
  Boyd}]{candes2008enhancing}
Candes EJ, Wakin MB, Boyd SP (2008) Enhancing sparsity by reweighted ℓ1
  minimization. \emph{Journal of Fourier analysis and applications}
  14(5-6):877--905.

\bibitem[{Consortium et~al.(2009)}]{international2009estimation}
Consortium IWP, et~al. (2009) Estimation of the warfarin dose with clinical and
  pharmacogenetic data. \emph{N Engl J Med} 2009(360):753--764.

\bibitem[{Dani et~al.(2008)Dani, Hayes, \protect\BIBand{}
  Kakade}]{dani2008stochastic}
Dani V, Hayes TP, Kakade SM (2008) Stochastic linear optimization under bandit
  feedback .

\bibitem[{Deshpande \protect\BIBand{} Montanari(2012)}]{deshpande2012linear}
Deshpande Y, Montanari A (2012) Linear bandits in high dimension and
  recommendation systems. \emph{2012 50th Annual Allerton Conference on
  Communication, Control, and Computing (Allerton)}, 1750--1754 (IEEE).

\bibitem[{Elmachtoub et~al.(2017)Elmachtoub, McNellis, Oh, \protect\BIBand{}
  Petrik}]{elmachtoub2017practical}
Elmachtoub AN, McNellis R, Oh S, Petrik M (2017) A practical method for solving
  contextual bandit problems using decision trees. \emph{arXiv preprint
  arXiv:1706.04687} .

\bibitem[{Fan et~al.(2014{\natexlab{a}})Fan, Han, \protect\BIBand{}
  Liu}]{fan2014challenges}
Fan J, Han F, Liu H (2014{\natexlab{a}}) Challenges of big data analysis.
  \emph{National science review} 1(2):293--314.

\bibitem[{Fan \protect\BIBand{} Li(2001)}]{fan2001variable}
Fan J, Li R (2001) Variable selection via nonconcave penalized likelihood and
  its oracle properties. \emph{Journal of the American statistical Association}
  96(456):1348--1360.

\bibitem[{Fan et~al.(2018)Fan, Liu, Sun, \protect\BIBand{} Zhang}]{fan2018lamm}
Fan J, Liu H, Sun Q, Zhang T (2018) I-lamm for sparse learning: Simultaneous
  control of algorithmic complexity and statistical error. \emph{Annals of
  statistics} 46(2):814.

\bibitem[{Fan et~al.(2014{\natexlab{b}})Fan, Xue, \protect\BIBand{}
  Zou}]{fan2014strong}
Fan J, Xue L, Zou H (2014{\natexlab{b}}) Strong oracle optimality of folded
  concave penalized estimation. \emph{Annals of statistics} 42(3):819.

\bibitem[{Goldenshluger \protect\BIBand{}
  Zeevi(2013)}]{goldenshluger2013linear}
Goldenshluger A, Zeevi A (2013) A linear response bandit problem.
  \emph{Stochastic Systems} 3(1):230--261.

\bibitem[{Huang et~al.(2008)Huang, Ma, \protect\BIBand{}
  Zhang}]{huang2008adaptive}
Huang J, Ma S, Zhang CH (2008) Adaptive lasso for sparse high-dimensional
  regression models. \emph{Statistica Sinica} 1603--1618.

\bibitem[{Liu et~al.(2017)Liu, Yao, Li, \protect\BIBand{} Ye}]{liufolded}
Liu H, Yao T, Li R, Ye Y (2017) Folded concave penalized sparse linear
  regression: Sparsity, statistical performance, and algorithmic theory for
  local solutions. \emph{Mathematical Programming} 1–34,
  \urlprefix\url{http://dx.doi.org/10.1007/s10107-017-1114-y}.

\bibitem[{Liu et~al.(2016)Liu, Yao, Li et~al.}]{liu2016global}
Liu H, Yao T, Li R, et~al. (2016) Global solutions to folded concave penalized
  nonconvex learning. \emph{The Annals of Statistics} 44(2):629--659.

\bibitem[{Loh \protect\BIBand{} Wainwright(2013)}]{loh2013regularized}
Loh PL, Wainwright MJ (2013) Regularized m-estimators with nonconvexity:
  Statistical and algorithmic theory for local optima. \emph{Advances in Neural
  Information Processing Systems}, 476--484.

\bibitem[{McCullagh \protect\BIBand{} Nelder(1989)}]{mcCullagh1989generalized}
McCullagh P, Nelder J (1989) \emph{Generalized linear models} (Chapman and
  Hall/CRC).

\bibitem[{Meinshausen et~al.(2006)Meinshausen, B{\"u}hlmann
  et~al.}]{meinshausen2006high}
Meinshausen N, B{\"u}hlmann P, et~al. (2006) High-dimensional graphs and
  variable selection with the lasso. \emph{The annals of statistics}
  34(3):1436--1462.

\bibitem[{Meinshausen et~al.(2009)Meinshausen, Yu
  et~al.}]{meinshausen2009lasso}
Meinshausen N, Yu B, et~al. (2009) Lasso-type recovery of sparse
  representations for high-dimensional data. \emph{The Annals of Statistics}
  37(1):246--270.

\bibitem[{Mitchell(2012)}]{Ben2012}
Mitchell J (2012) How google search really works.
  \url{https://readwrite.com/2012/02/29/interview_changing_engines_mid-flight_qa_with_goog/#awesm=~oiNkM4tAX3xhbP},
  accessed: Oct 22nd, 2018.

\bibitem[{Montgomery et~al.(2012)Montgomery, Peck, \protect\BIBand{}
  Vining}]{montgomery2012introduction}
Montgomery DC, Peck EA, Vining GG (2012) \emph{Introduction to linear
  regression analysis}, volume 821 (John Wiley \& Sons).

\bibitem[{Negahban et~al.(2009)Negahban, Yu, Wainwright, \protect\BIBand{}
  Ravikumar}]{negahban2009unified}
Negahban S, Yu B, Wainwright MJ, Ravikumar PK (2009) A unified framework for
  high-dimensional analysis of $ m $-estimators with decomposable regularizers.
  \emph{Advances in Neural Information Processing Systems}, 1348--1356.

\bibitem[{OxfordDictionaries(2018)}]{DICT2018}
OxfordDictionaries (2018) How many words are there in the english language?
  \url{https://en.oxforddictionaries.com/explore/how-many-words-are-there-in-the-english-language/},
  accessed: Oct 22nd, 2018.

\bibitem[{Rigollet \protect\BIBand{} Zeevi(2010)}]{rigollet2010nonparametric}
Rigollet P, Zeevi A (2010) Nonparametric bandits with covariates. \emph{arXiv
  preprint arXiv:1003.1630} .

\bibitem[{Robbins(1952)}]{robbins1952some}
Robbins H (1952) Some aspects of the sequential design of experiments.
  \emph{Bulletin of the American Mathematical Society} 58(5):527--535.

\bibitem[{Rudelson et~al.(2013)Rudelson, Vershynin et~al.}]{rudelson2013hanson}
Rudelson M, Vershynin R, et~al. (2013) Hanson-wright inequality and
  sub-gaussian concentration. \emph{Electron. Commun. Probab} 18(82):1--9.

\bibitem[{Rusmevichientong \protect\BIBand{}
  Tsitsiklis(2010)}]{rusmevichientong2010linearly}
Rusmevichientong P, Tsitsiklis JN (2010) Linearly parameterized bandits.
  \emph{Mathematics of Operations Research} 35(2):395--411.

\bibitem[{Russo \protect\BIBand{} Van~Roy(2014)}]{russo2014learning}
Russo D, Van~Roy B (2014) Learning to optimize via posterior sampling.
  \emph{Mathematics of Operations Research} 39(4):1221--1243.

\bibitem[{Scott(2010)}]{scott2010modern}
Scott SL (2010) A modern bayesian look at the multi-armed bandit. \emph{Applied
  Stochastic Models in Business and Industry} 26(6):639--658.

\bibitem[{Scott(2015)}]{scott2015multi}
Scott SL (2015) Multi-armed bandit experiments in the online service economy.
  \emph{Applied Stochastic Models in Business and Industry} 31(1):37--45.

\bibitem[{Shaheen(2018)}]{Shaheen2018}
Shaheen J (2018) How long should multi-channel advertising campaigns last?
  \url{https://technologytherapy.com/long-multi-channel-advertising-campaigns-last/},
  accessed: Oct 22nd, 2018.

\bibitem[{Shewan(2017)}]{WordStream2018}
Shewan D (2017) The comprehensive guide to online advertising costs.
  \url{https://www.wordstream.com/blog/ws/2017/07/05/online-advertising-costs},
  accessed: Oct 22nd, 2018.

\bibitem[{Slivkins(2014)}]{slivkins2014contextual}
Slivkins A (2014) Contextual bandits with similarity information. \emph{The
  Journal of Machine Learning Research} 15(1):2533--2568.

\bibitem[{Tencent(2012)}]{Tencent2012}
Tencent (2012) Predict the click-through rate of ads given the query and user
  information. \url{https://www.kaggle.com/c/kddcup2012-track2}, accessed: Oct
  22nd, 2018.

\bibitem[{Tibshirani(1996)}]{tibshirani1996regression}
Tibshirani R (1996) Regression shrinkage and selection via the lasso.
  \emph{Journal of the Royal Statistical Society. Series B (Methodological)}
  267--288.

\bibitem[{Tropp et~al.(2015)}]{tropp2015introduction}
Tropp JA, et~al. (2015) An introduction to matrix concentration inequalities.
  \emph{Foundations and Trends{\textregistered} in Machine Learning}
  8(1-2):1--230.

\bibitem[{Tsybakov et~al.(2004)}]{tsybakov2004optimal}
Tsybakov AB, et~al. (2004) Optimal aggregation of classifiers in statistical
  learning. \emph{The Annals of Statistics} 32(1):135--166.

\bibitem[{Van~de Geer et~al.(2008)}]{van2008high}
Van~de Geer SA, et~al. (2008) High-dimensional generalized linear models and
  the lasso. \emph{The Annals of Statistics} 36(2):614--645.

\bibitem[{WordStream(2017)}]{WordStream2017}
WordStream (2017) Average ctr (click-through rate): Learn how your ctr
  compares. \url{https://www.wordstream.com/average-ctr}, accessed: Oct 22nd,
  2018.

\bibitem[{Zhang et~al.(2008)Zhang, Huang et~al.}]{zhang2008sparsity}
Zhang CH, Huang J, et~al. (2008) The sparsity and bias of the lasso selection
  in high-dimensional linear regression. \emph{The Annals of Statistics}
  36(4):1567--1594.

\bibitem[{Zhang et~al.(2012)Zhang, Zhang et~al.}]{zhang2012general}
Zhang CH, Zhang T, et~al. (2012) A general theory of concave regularization for
  high-dimensional sparse estimation problems. \emph{Statistical Science}
  27(4):576--593.

\bibitem[{Zhang et~al.(2010)}]{zhang2010nearly}
Zhang CH, et~al. (2010) Nearly unbiased variable selection under minimax
  concave penalty. \emph{The Annals of statistics} 38(2):894--942.

\bibitem[{Zhao et~al.(2014)Zhao, Liu, \protect\BIBand{}
  Zhang}]{zhao2014pathwise}
Zhao T, Liu H, Zhang T (2014) Pathwise coordinate optimization for sparse
  learning: Algorithm and theory. \emph{arXiv preprint arXiv:1412.7477} .

\bibitem[{Zhao et~al.(2018)Zhao, Liu, Zhang et~al.}]{zhao2018pathwise}
Zhao T, Liu H, Zhang T, et~al. (2018) Pathwise coordinate optimization for
  sparse learning: Algorithm and theory. \emph{The Annals of Statistics}
  46(1):180--218.

\bibitem[{Zou(2006)}]{zou2006adaptive}
Zou H (2006) The adaptive lasso and its oracle properties. \emph{Journal of the
  American statistical association} 101(476):1418--1429.

\end{thebibliography}

\medskip

\linespread{1}
\ECSwitch
\small


\ECHead{\large  Electronic Companion to ``Online Learning and Decision-Making under Generalized Linear Model with High-Dimensional Data"}
\medskip
To simplify the notation in the E-companion, we denote $\nabla_{\mathcal{A}}F(\bm{x})$ as the vector with $(\nabla_{\mathcal{A}}F(\bm{x}))_i = (\nabla F(\bm{x}))_i,\ i\in\mathcal{A}$, where $(\cdot)_i$ is the $i$-th element in the vector. Similarly we denote $\nabla^2_{\mathcal{A},\mathcal{B}}F(\bm{x})$ as the matrix with $(\nabla^2_{\mathcal{A},\mathcal{B}}F(\bm{x}))_{ij} = (\nabla^2 F(\bm{x}))_{ij},\ i\in\mathcal{A}, j\in\mathcal{B}$, where $(\cdot)_{ij}$ is the element in $i$-th column and $j$-th row.

\begin{proof}
	{{\bf Proof of  Lemma \ref{MW_lemma_1}}} Lemma \ref{MW_lemma_1} directly follows Lemma \ref{lemma:2.11} in ``Appendix: Supplemental Lemmas and Proofs'' at the end of this Electronic Companion by setting $|\mathcal{A}| = n$.
\end{proof}

\begin{proof}
	{\bf Proof of Proposition \ref{MW_proposition:1}} Proposition \ref{MW_proposition:1} follows Lemma \ref{corollary:2.14} by setting $|\mathcal{A}| = n$.
\end{proof}

\begin{proof}
	{\bf Proof of Proposition \ref{MW_Sampling}}

	Under the $\epsilon$-decay random sampling method, the probability of randomly drawing arm $k$ at time $t$ is $\min\{1,t_0/t\}/|\mathcal{K}|$, where $|\mathcal{K}|$ is the number of arms. Hence, at time $T$, the expected total number of times at which arm $k$ were randomly drawn is 
	\begin{align}
	\mathbbm{E}[n_k] = \frac{1}{|\mathcal{K}|}\sum_{t=1}^T\min\left\{1,\frac{t_0}{t}\right\}.
	\end{align}
	When $T> t_0$,
	\begin{align}\label{eq:lemma_2.1_1}
	\mathbb{E}[n_k] = \frac{1}{|\mathcal{K}|}\left(t_0+ \sum_{t=t_0+1}^T\frac{t_0}{t}\right) = \frac{t_0}{|\mathcal{K}|}\left(1+\sum_{t = t_0+1}^T\frac{1}{t}\right)
	\end{align}
	Since the function $f(t) = 1/t$ is decreasing in $t$ , it can be upper and lower bounded:
	\begin{align}
	\int_{t}^{t+1}\frac{1}{t}dt<\frac{1}{t}&<\int_{t-1}^t\frac{1}{t}dt,\ t\ge2.
	\end{align}
	As $t_0\ge 1$, the following inequality hold for $t$ from $t_0+1$ to $T$:
	\begin{align}
\log(T+1)-\log(t_0+1)&< \sum_{t = t_0+1}^T\frac{1}{t}< \log(T)-\log(t_0)\label{eq:lemma_2.1_2}
	\end{align}
	Combining \eqref{eq:lemma_2.1_1} and \eqref{eq:lemma_2.1_2},  we can bound  $\mathbbm{E}[n_k]$ as follow.
	\begin{align}
	\frac{1}{|\mathcal{K}|}t_0(1+\log(T+1)-\log(t_0+1))< \mathbbm{E}[n_k]< \frac{1}{|\mathcal{K}|}t_0(1+\log(T)-\log(t_0)).\label{eq:lemma_2.1_3.5}
	\end{align}
	Since $n_k=\sum_{t=1}^{T}\mathbbm{1}\{\textrm{random sampling for arm } k \textrm{ at } t\}$, we can view $n_k$ as the summarization of bounded iid random variables. Via Chernoff bound, we can build the connect between $n_k$ and $\mathbb{E}[n_k]$:
	\begin{align}
	&\mathbb{P}\left(\frac{1}{2}\mathbb{E}[n_k]\le n_k\le\frac{3}{2}\mathbb{E}[n_k]\right)> 1- 2\exp\left(-\frac{1}{10}\mathbbm{E}[n_k]\right)\label{eq:lemma_2.1_5}.
	\end{align}
	We then relax the $\mathbb{E}[n_k]$ in \eqref{eq:lemma_2.1_5} with the upper and lower bounds provided in \eqref{eq:lemma_2.1_3.5} and the following result is attained.
	\begin{align}
	&\mathbbm{P}\left(\frac{t_0(1+\log(T+1)-\log(t_0+1))}{2|\mathcal{K}|}\le n_k\le \frac{3t_0(1+\log(T)-\log(t_0))}{2|\mathcal{K}|}\right)
	&\ge 1-2\left(\frac{t_0+1}{e(T+1)}\right)^{\frac{t_0}{10|\mathcal{K}|}}.\label{eq:lemma_2.1_5.1}
	\end{align}
	When $t_0=2C_0 |\mathcal{K}|, \ C_0\ge 10$, and $T>\frac{(t_0+1)^2}{e^2}$, we can simplify the right-hand size of  \eqref{eq:lemma_2.1_5.1}. \begin{align}
	 1-2\left(\frac{t_0+1}{e(T+1)}\right)^{\frac{t_0}{10|\mathcal{K}|}}\ge 1-2\left(\frac{e\sqrt{T+1}}{e(T+1)}\right)^{C_0/5}\ge 1-\frac{2}{T+1}.\label{eq:lemma_2.1_5.2}
	\end{align}
\end{proof}
\begin{proof}
	{\bf Proof of Proposition \ref{MW_Non_iid_MCP}} 
	In the first step of 2sWL procedure, we are essentially solving the Lasso problem. From Lemma \ref{lemma:2.10}, we have $\|\bm{\beta}^{lasso}-\bm{\beta}^{true}\|_1\le  \frac{96ns\lambda}{|\mathcal{A}|\kappa}$ which high probability. 
    As we assume $\beta_{\min}\ge \left(\frac{96ns}{|\mathcal{A}|\kappa}+a\right)\lambda$ and $\|\bm{\beta}^{lasso}-\bm{\beta}^{true}\|_{\infty}\le\|\bm{\beta}^{lasso}-\bm{\beta}^{true}\|_1$ we have the follow statements hold. 
	\begin{align}
	|\beta_{i}^{lasso}|\ge a\lambda,\ i\in\mathcal{S}\textrm{ and }|\beta_{i}^{lasso}|\le \frac{96ns\lambda}{|\mathcal{A}|\kappa},\ i \in\mathcal{S}^c,\label{eq:lemma:2.12_3}
	\end{align}
	where we ignore the subscript in $\mathcal{S}_k$ to simplify the notation.
	Combining \eqref{eq:lemma:2.12_3} and $P_{\lambda}^{'}(|x|) = \max\{0,\lambda-|x|/a\}$, we have the following two results.
	\begin{align}
	P_{\lambda}^{'}(|\beta_{i}^{lasso}|) &= 0\quad i \in\mathcal{S}\label{eq:lemma:2.12_4}\\
	P_{\lambda}^{'}(|\beta_{i}^{lasso}|) &\ge P_{\lambda}^{'}\left(\frac{96ns\lambda}{|\mathcal{A}|\kappa}\right) = \left(\lambda- \frac{96ns\lambda}{|\mathcal{A}|\kappa a}\right)\quad i \in\mathcal{S}^c\label{eq:lemma:2.12_5}.
	\end{align}
	Define the event $\mathcal{E}_2$ as follow.
	\begin{align}
	\mathcal{E}_2 = \left\{\|\nabla_{\mathcal{S}^c}\mathcal{L}(\bm{\beta}^{oracle})\|_{\infty}< \lambda - \frac{96ns\lambda}{|\mathcal{A}|\kappa a}\right\}     .\label{eq:event_2}
	\end{align}
	From the convexity of $\mathcal{L}(\bm{\beta})$, we can build a lower bound on the optimal objective function value in the second step of 2sWL.
	\begin{align}
	\mathcal{L}(\bm{\beta}^*)+\sum_j P_{\lambda}^{\prime}(|\beta_j^{lasso}|)\cdot|\beta_j^*|     \ge \mathcal{L}(\bm{\beta}^{oracle})+\nabla\mathcal{L}(\bm{\beta}^{oracle})^T(\bm{\beta}^*-\bm{\beta}^{true})+\sum_j P_{\lambda}^{\prime}(|\beta_j^{lasso}|)\cdot|\beta_j^*|,\label{eq:lemma:2.12_6}
	\end{align}
	where $\bm{\beta}^*$ is the optimal solution of the second step of the 2sWL procedures. 
	From the definition of oracle solution, we have
	\begin{align}
	\bm{\beta}^{oracle} =\arg\min_{\bm{\beta}_{\mathcal{S}^c}=0}\mathcal{L}(\bm{\beta})\Rightarrow 1)\ \nabla_{\mathcal{S}}\mathcal{L}(\bm{\beta}^{oracle})=0\textrm{ and } 2)\ \bm{\beta}_{\mathcal{S}^c}=0.\label{eq:lemma:2.12_7}
	\end{align}
	Combining \eqref{eq:lemma:2.12_4}, \eqref{eq:lemma:2.12_5}, \eqref{eq:lemma:2.12_6}, and \eqref{eq:lemma:2.12_7}, we have
	\begin{align}
	\mathcal{L}(\bm{\beta}^*)+\sum_{j\in\mathcal{S}^c} P_{\lambda}^{\prime}(|\beta_j^{lasso}|)\cdot|\beta_j^*|   &\ge \mathcal{L}(\bm{\beta}^{oracle})+\nabla_{\mathcal{S}^c}\mathcal{L}(\bm{\beta}^{oracle})^T(\bm{\beta}^*_{\mathcal{S}^c}-\bm{\beta}^{oracle}_{\mathcal{S}^c})+\sum_{j\in\mathcal{S}^c} P_{\lambda}^{\prime}(|\beta_j^{lasso}|)\cdot|\beta_j^*|\notag\\
	&=\mathcal{L}(\bm{\beta}^{oracle})+ \sum_{j\in\mathcal{S}^c}\left(\nabla_{j} \mathcal{L}(\bm{\beta}^{oracle})(\beta_{j}^*-0)+P_{\lambda}^{\prime}(|\beta_j^{lasso}|)\cdot|\beta_j^*|\right)\notag\\
	&=\mathcal{L}(\bm{\beta}^{oracle})+\sum_{j\in\mathcal{S}^c}P_{\lambda}^{\prime}(|\beta_j^{lasso}|)\cdot|\beta_j^{oracle}|\notag\\
	&\quad\quad+\sum_{j\in\mathcal{S}^c}\left(\nabla_{j} \mathcal{L}(\bm{\beta}^{oracle})\textrm{sign}(\beta_{j}^*)+P_{\lambda}^{\prime}(|\beta_j^{lasso}|)\right)|\beta_j^*|.\label{eq:lemma:2.12_8}
	\end{align}
	Using $\mathcal{E}_2$ defined in \eqref{eq:event_2}, \eqref{eq:lemma:2.12_8} can be simplified as follows:
	\begin{align}
	\mathcal{L}(\bm{\beta}^*)+\sum_{j\in\mathcal{S}^c} P_{\lambda}^{\prime}(|\beta_j^{lasso}|)\cdot|\beta_j^*|   &\ge\mathcal{L}(\bm{\beta}^{oracle})+\sum_{j\in\mathcal{S}^c}P_{\lambda}^{\prime}(|\beta_j^{lasso}|)\cdot|\beta_j^{oracle}|+c_0\sum_{j\in\mathcal{S}^c}|\beta_j^*|,\label{eq:lemma:2.12_9}
	\end{align}
	where $c_0$ is a positive constant. Since $\bm{\beta}^*$ is the optimal solution of the second step in 2sWL, per \eqref{eq:lemma:2.12_9} we must have $\beta_j^*=0$ for all $j\in\mathcal{S}^c$. Together with the uniqueness of the solution of \eqref{oracle estimator}, $\bm{\beta}^{oracle}$ is also the unique optimal solution to the second step in 2sWL, i.e, $\bm{\beta}^{MCP} = \bm{\beta}^{oracle}$. Therefore once event $\mathcal{E}_2$  happens, with high probability $\bm{\beta}^{MCP}$ becomes the oracle solution, which enjoy the optimal statistical performance. We then need to consider the chance that $\mathcal{E}_2$ happens and the result is summarized in Lemma \ref{lemma:2.17}. Per Lemma \ref{lemma:2.17}, the following $\mathcal{E}_3,\mathcal{E}_4$ and $\mathcal{E}_5$ implies $\mathcal{E}_2$. \begin{align}
	\mathcal{E}_3&=\left\{\|\nabla_{\mathcal{S}^c}\mathcal{L}(\bm{\beta}^{true})\|_{\infty}\le \left(1-\frac{96ns}{|\mathcal{A}|\kappa a}\right)\frac{\lambda}{4}\right\},\notag\\
	\mathcal{E}_4&=\left\{\|\nabla_{\mathcal{S}}\mathcal{L}(\bm{\beta}^{true})\|_{\infty}\le \left(1-\frac{96ns}{|\mathcal{A}|\kappa a}\right)\frac{\mu_0|\mathcal{A}|\lambda}{8snx_{\max}^2}\right\},\notag\\
	\mathcal{E}_5&=\left\{\|\bm{\beta}^{oracle}-\bm{\beta}^{true}\|_2\le\sqrt{C_2\lambda} \right\},\notag
	\end{align}
	where $C_2$ is a positive constant. Now, we can bound the probability of events $\mathcal{E}_3$, $\mathcal{E}_4$, and $\mathcal{E}_5$ happen simultaneously. From Assumption {\bf A.5} and Hoeffding bound 
	 we have the following inequality for $t_1>0$:
	\begin{align}
	&\mathbbm{P}\left(\|\nabla_{\mathcal{S}} \mathcal{L}(\bm{\beta}^{true})\|_{\infty}\ge t_1\right)=\mathbbm{P}\left(\frac{1}{n}\sum_{j=1}^nx_{j{\mathcal{S}}}^Tf^{'}(r_j|\bm{x}_{j,\mathcal{S}}^T\bm{\beta}^{true})\|_{\infty}\ge t_1\right)
	\le s\exp\left(-\frac{nt_1^2}{2\sigma^2x_{\max}^2}\right).\label{eq:lemma:2.12_20}
	\end{align}
	Similarly for $t_2>0$, we have the following result:
	\begin{align}
	&\mathbbm{P}\left(\|\nabla_{\mathcal{S}^c} \mathcal{L}(\bm{\beta}^{true})\|_{\infty}\ge t_2\right)\le (d-s)\exp\left(-\frac{nt_2^2}{2\sigma^2x_{\max}^2}\right)\label{eq:lemma:2.12_21}.
	\end{align}
	By setting $t_1 =t_2= (\frac{1}{4}-\frac{24ns}{|\mathcal{A}|\kappa a})\min\left\{1,\frac{\mu_0|\mathcal{A}|}{8snx_{\max}^2}\right\}\lambda$, we have
	\begin{align}
	\mathbbm{P}\left((\mathcal{E}_4^{\prime})^c\cup(\mathcal{E}_5^{\prime})^c\right)&\le d\exp\left(-\frac{n\lambda^2\left((\frac{1}{4}-\frac{24ns}{|\mathcal{A}|\kappa a})\min\left\{1,\frac{\mu_0|\mathcal{A}|}{8snx_{\max}^2}\right\}\right)^2}{2x_{\max}^2}\right)\label{eq:lemma:2.12_22}.
	\end{align}

	We can further bound event $\mathcal{E}_5$ via Lemma \ref{lemma:2.11}. 
	We can have the following result by setting $t$ in Lemma  \ref{lemma:2.11} satisfying $t \le \frac{\mu_0|\mathcal{A}|\sqrt{C_2\lambda}}{2n}$.
	\begin{align}
	\mathbbm{P}\left(\|\bm{\beta}^{oracle}-\bm{\beta}^{true}\|_2\le  \sqrt{C_2\lambda}\right)\le  2s\exp\left(-\frac{\mu_0|\mathcal{A}|}{4s\sigma_2x_{\max}^2}\right)+s\exp\left(-\frac{nt^2}{2\sigma^2x_{\max}^2}\right).\label{eq:lemma:2.12_23}
	\end{align}
	Moreover, if $\lambda\ge \frac{C_3n}{|\mathcal{A}|^2}$ and $C_3\doteq \frac{8s^2\sigma_2\sigma^2x_{\max}^2}{\mu_0^2C_2}$, we have $\sqrt{C_2\lambda}\ge \sqrt{\frac{8s^2\sigma_2\sigma^2x_{\max}^2n}{\mu_0^2|\mathcal{A}|^2}}$. From \eqref{eq:lemma:2.11_0.5} in Lemma \ref{lemma:2.11}, 
	 the following result hold for $|\mathcal{A}|\ge \frac{2s^2x_{\max}^2}{\mu_0}$:
	\begin{align}
	\mathbbm{P}\left(\|\bm{\beta}^{oracle}-\bm{\beta}^{true}\|_2\le \sqrt{C_2\lambda}\right)\ge 1-2s\exp\left(-\frac{\mu_0|\mathcal{A}|}{4s\sigma_2x_{\max}^2}\right)-2\exp\left(-\frac{C_h|\mathcal{A}|\mu_0}{2sx_{\max}^2}\right).\label{eq:lemma:2.12_24}
	\end{align}

	Combining Lemma \ref{lemma:2.10}, \eqref{eq:lemma:2.12_22} and \eqref{eq:lemma:2.12_23} 
	, we have the following inequality for $t\le \frac{\mu_0|\mathcal{A}|\sqrt{C_2\lambda}}{2n}$:
	\begin{align}
	\mathbbm{P}\left(\|\bm{\beta}^{MCP}-\bm{\beta}^{true}\|_2\le \frac{2nt}{|\mathcal{A}|\mu_0}\right)\ge 1-\delta_2(|\mathcal{A}|/n,\lambda)-\delta_3(|\mathcal{A}|)-\delta_4(|\mathcal{A}|/n,t).
	\end{align}
	Similarly, if we ensure $\lambda>\frac{C_3n}{|\mathcal{A}|^2}$ and $|\mathcal{A}|\ge \frac{2s^2x_{\max}^2}{\mu_0}$, then the following result comes directly from Lemma \ref{lemma:2.10}, \eqref{eq:lemma:2.12_22} and \eqref{eq:lemma:2.12_24}. 
	\begin{align}
	\mathbbm{P}\left(\|\bm{\beta}^{MCP}-\bm{\beta}^{true}\|_2\le \sqrt{\frac{8s^2\sigma_2\sigma^2x_{\max}^2n}{\mu_0^2|\mathcal{A}|^2}}\right)\ge1-\delta_1(|\mathcal{A}|)-\delta_2(|\mathcal{A}|/n,\lambda)-\delta_3(|\mathcal{A}|).
	\end{align}

\end{proof}

\begin{proof}
	{\bf Proof of Proposition \ref{MW_Estimator_Random}} Directly from Lemma \ref{corollary:2.14}.
\end{proof}

\begin{proof}
	{\bf Proof of Proposition \ref{MW_size_iid}}

	%
	%
	Since $\{M(i)\}$ is a martingale with bounded difference 1,     we can use $M(0)$ to bound the value of $M(T+1)$ with Azuma's inequality as follow:
	\begin{align}
	\mathbbm{P}\left(|M(T+1)-M(0)|\ge \frac{1}{2}M(0)\right)\le \exp\left(\frac{-M(0)^2/4}{2(T+2)}\right)\notag\\
	\Rightarrow \mathbbm{P}\left(M(T+1)\le \frac{1}{2}M(0)\right)\le \exp\left(\frac{-M(0)^2/4}{2(T+2)}\right).\notag
	\end{align}
	The term $M(0)$ can be expressed as follows:
	\begin{align}
	M(0) &=  \mathbbm{E}\left[\sum_{i=1}^{T+1}\mathbbm{1}(\bm{x}_i\in U_k,\mathcal{E}_6, \bm{x}\notin\mathcal{R}_k))\right]\notag\\
	&=\sum_{i=1}^{T+1}\mathbbm{P}(\bm{x}_i\in U_k,\mathcal{E}_6, x\notin\mathcal{R}_k).\label{eq:whole_sample_1}
	\end{align}
	As $\{\bm{x}\in U_k\}$ is independent of $\{\mathcal{E}_6, \bm{x}\notin\mathcal{R}_k\}$ and $\{\bm{x}\notin\mathcal{R}_k\}$ is independent on $\{\mathcal{E}_6\}$, \eqref{eq:whole_sample_1} implies the following inequality:
	\begin{align}
	M(0)&=\sum_{i=1}^{T+1}\mathbbm{P}(\bm{x}_i\in U_k)\mathbbm{P}(\mathcal{E}_6)\mathbbm{P}(\bm{x}\notin\mathcal{R}_k)\notag\\
	&\ge \sum_{i=1}^{T+1}p^*(1-\frac{15}{T+1})(1-\frac{2C_0|\mathcal{K}|}{T+1}),\label{eq:whole_sample_1.1}
	\end{align}
	where \eqref{eq:whole_sample_1.1} uses assumption {\bf A.3}, Lemma \ref{corollary:2.14}, and Proposition \ref{MW_Sampling}.

	When $T\ge \max\{30,4C_0|\mathcal{K}|\}$, we have
	\begin{align}
	\frac{15}{T+1}\le \frac{1}{2}\\
	\frac{2C_0|\mathcal{K}|}{T+1}\le \frac{1}{2},
	\end{align}
	which implies that
	\begin{align}
	M(0)\ge&\sum_{i=1}^{T+1}\frac{p^*}{4} = \frac{p^*(T+1)}{4}.
	\end{align}
	Therefore, the following inequalities hold:
	\begin{align}
	&\mathbbm{P}\left(M(T+1)\le \frac{p^*(T+1)}{8}
	\right)\le  \mathbbm{P}\left(M(T+1)\le M(0)
	\right)\le \exp\left(\frac{-(p^*)^2(T+1)^2/64}{2(T+2)}\right)\notag\\
	\Rightarrow&\mathbbm{P}\left(M(T+1)\le \frac{p^*(T+1)}{8}
	\right)\le \exp\left(-\frac{(p^*)^2((T+2)^2+1-2(T+2))}{128(T+2)}\right)\notag\\
	\Rightarrow&\mathbbm{P}\left(M(T+1)\le \frac{p^*(T+1)}{8}
	\right)\le \exp\left(-\frac{(p^*)^2T}{128}-\frac{p^*}{128(T+2)}\right)\notag\\
	\Rightarrow&\mathbbm{P}\left(M(T+1)\le \frac{p^*(T+1)}{8}
	\right)\le \exp\left(-\frac{(p^*)^2T}{128}\right)
	\end{align}

\end{proof}

\begin{proof}
	{\bf Proof of Proposition \ref{MW_all_sample_estimator}}
	According to Lemma \ref{lemma:2.16}, when event $\mathcal{E}_6$ defined by \eqref{eq:event_6} happens, the following inequality must hold for any $\bm{x}\in U_i$,
	\begin{align}
	\mathbbm{E}(R_i|\bm{x},\bm{\beta}^{random}_i(t))\ge \max_{j\ne i}\mathbbm{E}(R_j|\bm{x},\bm{\beta}^{random}_j(t))+\frac{h}{2}\notag.
	\end{align}
	Therefore, the lower-level decision-making process of the algorithm, in which the decision-maker will successfully select arm $i$ for $x$ by using the random sample estimator
	, will maintain the iid property of $\bm{x}$ 
	 since it can be viewed as rejection sampling. From Proposition \ref{MW_size_iid}
	 , we have
	\begin{align}
	\mathbbm{P}\left(M(T+1)\le \frac{p^*(T+1)}{8}
	\right)\le \exp\left(-\frac{(p^*)^2T}{128}\right).\label{eq:thm_18_1}
	\end{align}
	Since $M(T+1) =         \mathbbm{E}\left[\sum_{j=1}^{T+1}\mathbbm{1}(\bm{x}_j\in U_k,\mathcal{E}_6, \bm{x}_j\notin\mathcal{R}_k)|\mathcal{F}_{T+1}\right]=\sum_{j=1}^{T+1}\mathbbm{1}(\bm{x}_j\in U_k,\mathcal{E}_6, \bm{x}_j\notin\mathcal{R}_k)$, the amount of iid samples among the whole sample for arm $k$ up to time $T+1$ will be lower bounded by $M(T+1)$. Denote $\mathcal{A}$ and $n$ as the set of iid samples belonging to $U_i$ in the whole sample set and size of the whole sample respectively. The follow inequalities hold:
	\begin{align}
	\mathbbm{P}\left(|\mathcal{A}|\ge \frac{p^*(T+1)}{8}
	\right)\ge1- \exp\left(-\frac{(p^*)^2T}{128}\right),\ n\le T+1.\label{eq:thm_18_2}
	\end{align}
	%

	From Lemma \ref{lemma:2.13} with $|\mathcal{A}|\ge\frac{p^*(T+1)}{8}$, $n \le(T+1)$, $\lambda= C_4\sqrt{\frac{\log(T+1)+\log d}{T+1}}$, and $T\ge\max \left\{\frac{16s^2x_{\max}^2}{p^*\mu_0},\left(\frac{(768s+a)C_4\kappa p^*}{\kappa p^*\beta_{\min}}\right)^4(1+\log d)^2\right\}$, the following results can be obtained:
	\begin{align}
	\lambda \ge \frac{C_3n}{|\mathcal{A}|^2},\
	|\mathcal{A}|\ge \frac{2s^2x_{\max}^2}{\mu_0},\
	a >\frac{96ns}{\kappa|\mathcal{A}|} \textrm{ and }
	\beta_{\min}\ge(\frac{96ns}{\kappa|\mathcal{A}|}+a)\lambda.\notag
	\end{align}
We then have the following result.
	\begin{align}
	\mathbbm{P}\left(\|\bm{\beta}^{oracle}-\bm{\beta}^{true}\|\ge \sqrt{\frac{512s^2\sigma_2\sigma^2x_{\max}^2}{\mu_0^2(p^*)^2(T+1)}}\right)\le \delta_1\left(\frac{p^*(T+1)}{8}\right)+\delta_2\left(\frac{p^*}{4}\right)+\delta_3\left(\frac{p^*(T+1)}{8}\right)
	\label{eq:thm_18_f}\end{align}
	As we require $T\ge \max\left\{\frac{8}{C_1p^*}, \frac{2x_{\max}^2}{C_4^2}, \frac{2x_{\max}^2}{\left((\frac{1}{4}-\frac{192s}{p^*\kappa a})\min\left\{1,\frac{\mu_0p^*}{64sx_{\max}^2}\right\}\right)^2C_4^2},\frac{32\log(s)s\sigma_3x_{\max}^2}{\mu_0p^*}, \frac{16s\log(s)x_{\max}^2}{C_hp^*\mu_0}\right\}$ and $\lambda = C_4\sqrt{\frac{\log(T+1)+\log d}{T+1}}$, we can easily verify that
	\begin{align}
	\delta_1\left(\frac{p^*(T+1)}{8}\right)+\delta_2\left(\frac{p^*}{4}\right)+\delta_3\left(\frac{p^*(T+1)}{8}\right)\le \frac{12}{T+1}.\label{eq:thm_18_ff}
	\end{align}
Proposition \ref{MW_all_sample_estimator} 
directly follows combining \eqref{eq:thm_18_ff}, \eqref{eq:thm_18_f}, \eqref{eq:thm_18_2}, and $T\ge \frac{128}{p^*}$.


\end{proof}

\begin{proof} {\bf  Proof of Theorem \ref{GMCP_cum_regret}}
	We divide the time, up to time $T$, into three groups and derive the cumulative regret bound for each group separately. Consider the following three groups:
	\begin{enumerate}
		\item $x_i\in \mathcal{R}_k, k\in \mathcal{K}$ and $T\le T_0$.
		\item $x_i\notin \mathcal{R}_k, k\in \mathcal{K}$, $T>T_0$ and $\mathcal{E}_6$ doesn't hold,
		\item $x_i\notin \mathcal{R}_k, k\in \mathcal{K}$ $T>T_0$ and $\mathcal{E}_6$ holds.
	\end{enumerate}

	{\bf\textit {Regret in part 1:}} Denote the regret for the first part as $R_1(T)$.
	\begin{align}
	R_1(T) \le R_{\max}\left(\sum_{i=T_0}^T\mathbbm{1}(x_i\in \mathcal{R}_k, k\in \mathcal{K})+T_0\right)\le R_{\max}\left(\sum_{k\in\mathcal{K}}n_k+T_0\right).
	\end{align}
	From Proposition \ref{MW_Sampling}
	, we know that
	\begin{align}
	\mathbbm{P}\left(n_k\le \frac{3t_0(1+\log(T)-\log(t_0))}{2|\mathcal{K}|}\right)\ge 1-\frac{2}{T+1}.
	\end{align}
	If we require $t_0 = 2C_0|\mathcal{K}|$, $C_0\ge 10$, and $T\ge \max\{(t_0+1)^2/e^2-1,e\}$, then the above equation can be simplified to
	\begin{align}
	\mathbbm{P}\left(n_k\le 6C_0\log T\right)\ge 1-\frac{2}{T+1}&\Rightarrow\mathbbm{P}\left(n_k> 6C_0\log T\right)\le \frac{2}{T+1}
	\end{align}
	which implies
	\begin{align}
	\mathbbm{P}\left(\sum_{k\in\mathcal{K}}n_k>6C_0|\mathcal{K}|\log T\right)\le \mathbbm{P}\left(\cup_{k\in\mathcal{K}}(n_k>6C_0\log T)\right)\le \sum_{k\in\mathcal{K}}\mathbbm{P}\left(n_k>6C_0\log T\right)\le \frac{2|\mathcal{K}|}{T+1},
	\end{align}
	and
	\begin{align}
	R_1(T)\le R_{\max}\left(\sum_{k\in\mathcal{K}}n_k+T_0\right)&=R_{\max}\left(\sum_{k\in\mathcal{K}}n_k|\sum_{k\in\mathcal{K}}n_k>6C_0|\mathcal{K}|\log T\right)\mathbbm{P}\left(\sum_{k\in\mathcal{K}}n_k>6C_0|\mathcal{K}|\log T\right)\notag\\
	&+R_{\max}\left(\sum_{k\in\mathcal{K}}n_k|\sum_{k\in\mathcal{K}}n_k\le 6C_0|\mathcal{K}|\log T\right)\mathbbm{P}\left(\sum_{k\in\mathcal{K}}n_k\le 6C_0|\mathcal{K}|\log T\right)\notag\\
	&+R_{\max}T_0\notag\\
	&\le R_{\max}T\frac{2|\mathcal{K}|}{T+1}+R_{\max}6C_0|\mathcal{K}|\log T\left(1-\frac{2|\mathcal{K}|}{T+1}\right)+R_{\max}T_0\notag\\
	&\le 2R_{\max}|\mathcal{K}|+6R_{\max}C_0|\mathcal{K}|\log T+R_{\max}T_0\notag\\
	&\le R_{\max}|\mathcal{K}|(2+6C_0\log T)+R_{\max}T_0.
	\end{align}

	{\bf \textit {Regret in part 2:}} Denote the regret for the second part as $R_2(T)$.From Lemma \ref{corollary:2.14}, we know that
	\begin{align}
	&\mathbbm{P}\left(\|\bm{\beta}^{random}(t)-\bm{\beta}^{true}\|_1\le  \min\left\{\frac{1}{\sigma_2x_{\max}},\frac{h}{4e\sigma_2R_{\max}x_{\max}}\right\}\right)\ge 1-\frac{15}{T+1},\ k\in\mathcal{K}\notag\\
	\Rightarrow&\mathbbm{P}(\mathcal{E}_6(T))\ge 1-\frac{15|\mathcal{K}|}{T+1}.
	\end{align}

    Therefore, $R_2(T)$ can be bounded as follows:
	\begin{align}
	R_2(T) &\le \mathbbm{E}[\sum_{i=1}^T\mathbbm{1}(\mathcal{E}_6(i)^c)R_{\max}]\notag\\
	&=\sum_{i=1}^T\mathbbm{E}[\mathbbm{1}(\mathcal{E}_6(i)^c)]R_{\max}\notag\\
	&=\sum_{i=1}^T\mathbbm{P}(\mathcal{E}_6(i)^c)R_{\max}\notag\\
	&\le 15R_{\max}|\mathcal{K}|\log(T+1).
	\end{align}

	{\bf \textit {Regret in part 3:}} Denote the regret for the third part as $R_3(T)$. Without loss of generality, we assume that arm $i$ is true optimal arm at time $t$. Then, the regret at time $t$ can be bounded as follows:
	\begin{align}
	r_t&=\mathbbm{E}\left(\mathbbm{1}\left(j=\arg \max_{k\in\mathcal{K}} \mathbbm{E}[R_k|\bm{x}_{t},\bm{\beta}^{whole}_k(t)]\right)(\mathbbm{E}[R_i|\bm{x}_t,\bm{\beta}^{true}_i]-\mathbbm{E}[R_j|\bm{x}_t,\bm{\beta}^{true}_j])\right)\notag\\
	&\le\mathbbm{E}\left(\sum_{j\ne i}\mathbbm{1}\left(\mathbbm{E}[R_j|\bm{x}_{t},\bm{\beta}^{whole}_j(t)]> \mathbbm{E}[R_i|\bm{x}_{t},\bm{\beta}^{whole}_i(t)]\right)(\mathbbm{E}[R_i|\bm{x}_t,\bm{\beta}^{true}_i]-\mathbbm{E}[R_j|\bm{x}_t,\bm{\beta}^{true}_j])\right).
	\end{align}
	Denote $\mathcal{E}(t,\delta)_{8,k}  = \{\mathbbm{E}[R_i|\bm{x}_{t},\bm{\beta}^{true}_i]> \mathbbm{E}[R_k|\bm{x}_{t},\bm{\beta}^{true}_k]+\delta\},\ k\ne i, k\in\mathcal{K}$. Then we have the following bound:
	\begin{align}
	&r_t\notag\\
	\le&\mathbbm{E}\left(\sum_{j\ne i}\mathbbm{1}\left(\left\{\mathbbm{E}[R_j|\bm{x}_{t},\bm{\beta}^{whole}_j(t)]> \mathbbm{E}[R_i|\bm{x}_{t},\bm{\beta}^{whole}_i(t)]\right\}\cap \mathcal{E}(t,\delta)_{8,j}\right)(\mathbbm{E}[R_i|\bm{x}_t,\bm{\beta}^{true}_i]-\mathbbm{E}[R_j|\bm{x}_t,\bm{\beta}^{true}_j])\right)\notag\\
	+&\mathbbm{E}\left(\sum_{j\ne i}\mathbbm{1}\left(\left\{\mathbbm{E}[R_j|\bm{x}_{t},\bm{\beta}^{whole}_j(t)]> \mathbbm{E}[R_i|\bm{x}_{t},\bm{\beta}^{whole}_i(t)]\right\}\cap \mathcal{E}(t,\delta)_{8,j}^c\right)(\mathbbm{E}[R_i|\bm{x}_t,\bm{\beta}^{true}_i]-\mathbbm{E}[R_j|\bm{x}_t,\bm{\beta}^{true}_j])\right)\notag\\
	\le&\mathbbm{E}\left(\sum_{j\ne i}\mathbbm{1}\left(\left\{\mathbbm{E}[R_j|\bm{x}_{t},\bm{\beta}^{whole}_j(t)]> \mathbbm{E}[R_i|\bm{x}_{t},\bm{\beta}^{whole}_i(t)]\right\}\cap \mathcal{E}(t,\delta)_{8,j}\right)(2R_{\max})\right)\label{eq:first_part}\\
	+&\mathbbm{E}\left(\sum_{j\ne i}\mathbbm{1}\left(\left\{\mathbbm{E}[R_j|\bm{x}_{t},\bm{\beta}^{whole}_j(t)]> \mathbbm{E}[R_i|\bm{x}_{t},\bm{\beta}^{whole}_i(t)]\right\}\cap \mathcal{E}(t,\delta)_{8,j}^c\right)(\delta)\right)\label{eq:second_part}
	\end{align}
	The term in \eqref{eq:second_part} can be bounded as follows:
	\begin{align}
	&\mathbbm{E}\left(\sum_{j\ne i}\mathbbm{1}\left(\left\{\mathbbm{E}[R_j|\bm{x}_{t},\bm{\beta}^{whole}_j(t)]> \mathbbm{E}[R_i|\bm{x}_{t},\bm{\beta}^{whole}_i(t)]\right\}\cap \mathcal{E}(t,\delta)_{8,j}^c\right)(\delta)\right)\notag\\
	\le& \mathbbm{E}\left(\sum_{j\ne i}\mathbbm{1}\left( \mathcal{E}(t,\delta)_{8,j}^c\right)(\delta)\right)\notag\\
	=&\sum_{j\ne i}\mathbbm{P}\left( \mathcal{E}(t,\delta)_{8,j}^c\right)\delta\notag\\
	=&(|\mathcal{K}|-1)CR_{\max}\delta^2\le CR_{\max}|\mathcal{K}|\delta^2,
	\end{align}
	where the last inequality comes from assumption {\bf A.2}. 	Now we consider the term in \eqref{eq:first_part}, which can be bounded as follows:
	\begin{align}
	&\mathbbm{E}\left(\sum_{j\ne i}\mathbbm{1}\left(\left\{\mathbbm{E}[R_j|\bm{x}_{t},\bm{\beta}^{whole}_j(t)]> \mathbbm{E}[R_i|\bm{x}_{t},\bm{\beta}^{whole}_i(t)]\right\}\cap \mathcal{E}(t,\delta)_{8,j}\right)(2R_{\max})\right)\notag\\
	\le&\mathbbm{E}\left(\sum_{j\ne i}\mathbbm{1}\left(\mathbbm{E}[R_j|\bm{x}_{t},\bm{\beta}^{whole}_j(t)]-\mathbbm{E}[R_j|\bm{x}_{t},\bm{\beta}^{true}_j]> \mathbbm{E}[R_i|\bm{x}_{t},\bm{\beta}^{whole}_i(t)]-\mathbbm{E}[R_i|\bm{x}_{t},\bm{\beta}^{true}_i]+\delta\right)(2R_{\max})\right)\notag\\
	\le&\mathbbm{E}\left(\sum_{j\ne i}\mathbbm{1}\left(\left|\mathbbm{E}[R_j|\bm{x}_{t},\bm{\beta}^{whole}_j(t)]-\mathbbm{E}[R_j|\bm{x}_{t},\bm{\beta}^{true}_j]\right|> -\left|\mathbbm{E}[R_i|\bm{x}_{t},\bm{\beta}^{whole}_i(t)]-\mathbbm{E}[R_i|\bm{x}_{t},\bm{\beta}^{true}_i]\right|+\delta\right)(2R_{\max})\right)\notag\\
	\le&\mathbbm{E}\left(\sum_{j\ne i}\mathbbm{1}\left(R_{\max}\sigma_2e^{2\sigma_2x_{\max}b}x_{\max}\|\bm{\beta}^{true}_k-\bm{\beta}^{whole}_k(t)\|_1> -R_{\max}\sigma_2e^{2\sigma_2x_{\max}b}x_{\max}\|\bm{\beta}^{true}_i-\bm{\beta}^{whole}_i(t)\|_1+\delta\right)(2R_{\max})\right)\notag\\
	\le&\mathbbm{E}\left(\sum_{j\ne i}\mathbbm{1}\left(\|\bm{\beta}^{true}_k-\bm{\beta}^{whole}_k(t)\|_1+\|\bm{\beta}^{true}_i-\bm{\beta}^{whole}_i(t)\|_1\ge\frac{\delta}{R_{\max}\sigma_2e^{2\sigma_2x_{\max}b}x_{\max}}\right)(2R_{\max})\right)\label{eq:final_0}
	,
	\end{align}
	where the second last inequality comes from the first part of the Lemma \ref{lemma:2.16} and $\|\bm{\beta}\|_1\le b$ in assumption {\bf A.1}. 
	Denote event $\mathcal{E}_9$ as follow:
	\begin{align}
	\mathcal{E}_9 = \{\|\bm{\beta}^{whole}_k(t)-\bm{\beta}^{true}_k\|_1\ge \frac{\delta}{2R_{\max}\sigma_1e^{2\sigma_2x_{\max}b}x_{\max}},\ k\in\mathcal{K}\}.\label{eq:event_9}
	\end{align}

	Combining \eqref{eq:final_0} and \eqref{eq:final_3}, we have:
	\begin{align}
	&\mathbbm{E}\left(\sum_{j\ne i}\mathbbm{1}\left(\|\bm{\beta}^{true}_j-\bm{\beta}^{whole}_j(t)\|_1\|\bm{\beta}^{true}_i-\bm{\beta}^{whole}_i(t)\|_1\ge\frac{\delta}{R_{\max}\sigma_2e^{2\sigma_2x_{\max}b}x_{\max}}\right)(2R_{\max})\right)\notag\\
	=&\mathbbm{E}\left.\left(\sum_{j\ne i}\mathbbm{1}\left(\|\bm{\beta}^{true}_j-\bm{\beta}^{whole}_j(t)\|_1+\|\bm{\beta}^{true}_i-\bm{\beta}^{whole}_i(t)\|_1\ge\frac{\delta}{R_{\max}\sigma_2e^{2\sigma_2x_{\max}b}x_{\max}}\right|\mathcal{E}_9\right)\mathbbm{1}(\mathcal{E}_9)(2R_{\max})\right)\notag\\
	+&\mathbbm{E}\left.\left(\sum_{j\ne i}\mathbbm{1}\left(\|\bm{\beta}^{true}_j-\bm{\beta}^{whole}_j(t)\|_1+\|\bm{\beta}^{true}_i-\bm{\beta}^{whole}_i(t)\|_1\ge\frac{\delta}{R_{\max}\sigma_2e^{2\sigma_2x_{\max}b}x_{\max}}\right|\mathcal{E}_9^c\right)\mathbbm{1}(\mathcal{E}_9^c)(2R_{\max})\right)\notag\\
	\le& \mathbbm{E}\left(\sum_{j\ne i}\mathbbm{1}\left(\left.\frac{1}{2}\delta+\frac{1}{2}\delta\ge\delta\right| \mathcal{E}_9\right)\mathbbm{1}\left(\mathcal{E}_9\right)(2R_{\max})\right)+0\notag\\
	=& \mathbbm{E}\left(\mathbbm{1}\left(\mathcal{E}_9(t)\right)(2R_{\max})\right)\le 2R_{\max}\mathbbm{P}(\mathcal{E}_9).\end{align}
	From Proposition \ref{MW_all_sample_estimator} 
	we have the following inequality:
	\begin{align}
	\mathbbm{P}\left(\|\bm{\beta}^{whole}_k(t)-\bm{\beta}^{true}_k\|_2\ge \sqrt{C_{\bm{\beta}}\frac{s^2}{T}}\right)\le \frac{13}{T+1}.\label{eq:final_1}
	\end{align}
	As $\|\bm{\beta}^{whole}_k(t)-\bm{\beta}^{true}_k\|_2\ge \frac{1}{\sqrt{s}}\|\bm{\beta}^{whole}_k(t)-\bm{\beta}^{true}_k\|_1$, thus \eqref{eq:final_1} implies
	\begin{align}
	\mathbbm{P}\left(\|\bm{\beta}^{whole}_k(t)-\bm{\beta}^{true}_k\|_1\ge \sqrt{C_{\bm{\beta}}\frac{s^3}{T+1}}\right)\le \frac{13}{T+1}\label{eq:final_3}.
	\end{align}
	Furthermore, by setting $\delta = 2R_{\max}\sigma_2e^{2\sigma_2x_{\max}b}x_{\max}\sqrt{C_{\bm{\beta}}\frac{s^3}{T+1}}$, we have the following result:
	\begin{align}
	r_t \le 2R_{\max}\mathbbm{P}(\mathcal{E}_9)+CR_{\max}|\mathcal{K}|\delta^2 \le \frac{26R_{\max}|\mathcal{K}|}{T+1}+CR_{\max}|\mathcal{K}|\frac{4R^2_{\max}\sigma_2^2e^{4\sigma_2x_{\max}b}x_{\max}^2C_{\bm{\beta}}s^3}{T+1} = \frac{C_{R_3}}{T+1}
	\end{align}
	where $C_{R_3} = 26R_{\max}|\mathcal{K}|+4e^{4\sigma_2x_{\max}b}CR_{\max}^3|\mathcal{K}|x_{\max}^2C_{\bm{\beta}}s^3$. Hence, the third part of the regret can be bounded as follows:
	\begin{align}
	R_3(T)= \sum_{i=1,i\in\mathcal{R}(T)}^Tr_t      \le \sum_{i=1}^T\frac{C_{R_3}}{T}\le \int_{1}^T\frac{C_{R_3}}{t}dt\le C_{R_3}\log(T)
	\end{align}

	Finally, the total regret bound can be obtained by combining the bounds for these three parts:
	\begin{align}
	R_{1}(T)+R_{2}(T)+R_{3}(T)&\le R_{\max}[|\mathcal{K}|(2+6C_0\log T)+T_0]+15R_{\max}|\mathcal{K}|\log(T+1)+C_{R_3}\log(T)\notag\\
	&\le R_{\max}(T_0+|\mathcal{K}|)+(6R_{\max}|\mathcal{K}|C_0+41R_{\max}|\mathcal{K}|+4e^{4\sigma_2x_{\max}b}CR_{\max}^3|\mathcal{K}|x_{\max}^2C_{\bm{\beta}}s^3)\log (T+1)\notag\\
	&=O(|\mathcal{K}|s^2(s+\log d)\log T).\notag
	\end{align}
\end{proof}

\newpage

\noindent {\large \textbf{Appendix: Supplemental Lemmas and Proofs}}

\begin{lemma}\label{lemma:2.15}
	Let $\mathcal{A}$ be the set of iid samples. Under assumption {A.1} and {A.5}, there exists a constant $\mu_0>0$ such that for feasible $\bm{\xi}$ defined in assumption A.4 we have
	\begin{align}
	\mathbbm{P}\left(\lambda_{\min}(\nabla^2_{\mathcal{S},\mathcal{S}} \mathcal{L}(\bm{\xi}))\ge  \frac{|\mathcal{A}|}{2n}\mu_0\right)
	&\le1-2s\exp\left(-\frac{|\mathcal{A}|\mu_0}{4s\sigma_2x_{\max}^2}\right).\label{eq:lemma:2.11_7}
	\end{align}
\end{lemma}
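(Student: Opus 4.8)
The plan is to realize the restricted Hessian as a sample average of bounded positive semidefinite rank-one matrices, to discard the non-iid contributions using monotonicity in the Loewner order, and then to control the smallest eigenvalue of the remaining iid sum --- which lives in dimension at most $s$ rather than $d$ --- via a matrix Chernoff inequality.

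First I would differentiate. Since $\mathcal{L}(\bm{\beta})=\frac{1}{n}\sum_{j=1}^n f(r_j|\bm{x}_j^T\bm{\beta})$, we have $\nabla^2_{\mathcal{S},\mathcal{S}}\mathcal{L}(\bm{\xi})=\frac{1}{n}\sum_{j=1}^n f''(r_j|\bm{x}_j^T\bm{\xi})\,\bm{x}_{j,\mathcal{S}}\bm{x}_{j,\mathcal{S}}^T$, where $\bm{x}_{j,\mathcal{S}}$ denotes the subvector of $\bm{x}_j$ indexed by $\mathcal{S}$. By the convexity in A.5(i), $f''\ge 0$, so every summand is positive semidefinite; dropping the $n-|\mathcal{A}|$ terms whose index lies outside the iid set $\mathcal{A}$ can only decrease the matrix in the Loewner order, so that $\nabla^2_{\mathcal{S},\mathcal{S}}\mathcal{L}(\bm{\xi})\succeq \frac{1}{n}\sum_{j\in\mathcal{A}}X_j$ with $X_j:=f''(r_j|\bm{x}_j^T\bm{\xi})\,\bm{x}_{j,\mathcal{S}}\bm{x}_{j,\mathcal{S}}^T$. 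By A.1, $\|\bm{x}_{j,\mathcal{S}}\|_2^2\le |\mathcal{S}|x_{\max}^2\le s x_{\max}^2$, and by A.5(ii), $0\le f''<\sigma_2$; hence $0\preceq X_j\preceq \sigma_2 s x_{\max}^2\,I=:LI$. Since the samples indexed by $\mathcal{A}$ are iid, the $\mathbbm{E}X_j$ all equal the $\mathcal{S}\times\mathcal{S}$ block of the population Hessian $\mathbbm{E}[\nabla^2\mathcal{L}(\bm{\xi})]$ appearing in A.4; inserting a test vector supported on $\mathcal{S}$ into A.4 (for which the cone restriction is then automatically satisfied) and using $\|\bm{u}_{\mathcal{S}}\|_1\ge\|\bm{u}_{\mathcal{S}}\|_2$ gives $\lambda_{\min}(\mathbbm{E}X_1)\ge \kappa/s>0$ for every feasible $\bm{\xi}$. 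I would then fix $\mu_0$ to be any positive constant with $2\mu_0\le \lambda_{\min}(\mathbbm{E}X_1)$ (for instance $\mu_0=\kappa/(2s)$), so that $\mu_{\min}:=\lambda_{\min}(\sum_{j\in\mathcal{A}}\mathbbm{E}X_j)=|\mathcal{A}|\,\lambda_{\min}(\mathbbm{E}X_1)\ge 2|\mathcal{A}|\mu_0$.

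Next I would invoke a matrix Chernoff lower-tail bound for the iid positive semidefinite summands $\{X_j\}_{j\in\mathcal{A}}$ in ambient dimension $|\mathcal{S}|\le s$: for $\theta\in[0,1)$, $\mathbbm{P}(\lambda_{\min}(\sum_{j\in\mathcal{A}}X_j)\le \theta\,\mu_{\min})\le s\exp(-(1-\theta)^2\mu_{\min}/(2L))$. Choosing $\theta=\tfrac12$ and using $\mu_{\min}\ge 2|\mathcal{A}|\mu_0$ and $L=\sigma_2 s x_{\max}^2$ yields $\mathbbm{P}(\lambda_{\min}(\sum_{j\in\mathcal{A}}X_j)\le |\mathcal{A}|\mu_0)\le s\exp(-|\mathcal{A}|\mu_0/(4\sigma_2 s x_{\max}^2))\le 2s\exp(-|\mathcal{A}|\mu_0/(4\sigma_2 s x_{\max}^2))$. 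On the complementary event, $\lambda_{\min}(\sum_{j\in\mathcal{A}}X_j)>|\mathcal{A}|\mu_0$, so by the reduction above and the monotonicity of $\lambda_{\min}$ under the Loewner order, $\lambda_{\min}(\nabla^2_{\mathcal{S},\mathcal{S}}\mathcal{L}(\bm{\xi}))\ge \tfrac{1}{n}\lambda_{\min}(\sum_{j\in\mathcal{A}}X_j)>\tfrac{|\mathcal{A}|}{n}\mu_0\ge \tfrac{|\mathcal{A}|}{2n}\mu_0$. Hence the event of the lemma holds with probability at least $1-2s\exp(-\tfrac{|\mathcal{A}|\mu_0}{4s\sigma_2 x_{\max}^2})$, which is the assertion.

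The crux is the probabilistic step: one needs a lower bound on the smallest eigenvalue of a sum of bounded iid positive semidefinite matrices whose failure probability carries only a linear-in-$s$ prefactor and $\mu_0/L$ --- not $(\mu_0/L)^2$ --- in the exponent, which is precisely what a matrix Chernoff inequality supplies and what a variance-based argument (matrix Bernstein, or Hoeffding applied entrywise) would not. It is equally essential that the relevant ambient dimension be $|\mathcal{S}|\le s$ rather than $d$: this is why one must pass to the $\mathcal{S}\times\mathcal{S}$ block and why A.4, a restricted-eigenvalue condition, is the correct population input rather than a global strong-convexity statement. Reproducing the exact constants $2s$ and $\tfrac14$ in the display is then routine slack --- it amounts to the choice $\theta=\tfrac12$ together with absorbing a factor of $2$ into the definition of $\mu_0$.
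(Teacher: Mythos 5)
Your proof is correct and follows essentially the same route as the paper's: write $\nabla^2_{\mathcal{S},\mathcal{S}}\mathcal{L}(\bm{\xi})$ as an average of bounded rank-one PSD terms, discard the non-iid summands by Loewner monotonicity, lower-bound the population block via A.4 restricted to vectors supported on $\mathcal{S}$ (which is exactly how the paper obtains its $\mu_0$), and apply the matrix Chernoff lower tail (Tropp, Thm.\ 5.1.1) in dimension $s$ with deviation parameter $1/2$. The only differences are cosmetic: you use the simplified $(1-\theta)^2/(2L)$ form of the Chernoff exponent and absorb a factor of $2$ into the definition of $\mu_0$, while the paper keeps $\mu_0$ as the population eigenvalue bound and lets the factor $1/2$ come from the Chernoff parameter, both yielding the stated $2s\exp\left(-\frac{|\mathcal{A}|\mu_0}{4s\sigma_2 x_{\max}^2}\right)$ bound.
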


\begin{proof} {Proof of Lemma \ref{lemma:2.15}}
	Denote $\bm{z}_j^{\prime} = \bm{x}_{j,\mathcal{S}}\sqrt{f^{''}(r_j,|\bm{x}_{j,\mathcal{S}}^T\bm{\xi})}$. We can rewrite
	\begin{align}
	\nabla^2_{\mathcal{S},\mathcal{S}} \mathcal{L}(\bm{\xi}) &= \frac{1}{n}\sum_{i=1}^n\bm{x}_{i,\mathcal{S}}x_{i,\mathcal{S}}^Tf^{''}(r_i|\bm{x}_{i,\mathcal{S}}^T\bm{\xi})=\frac{1}{n}\sum_{i=1}^n\bm{z}_i^{\prime}(\bm{z}_i^{\prime})^T\label{eq:lemma:2.11_3}\\
	&=\frac{1}{n}\sum_{j\in\mathcal{A}^c}\bm{z}_j^{\prime}(\bm{z}_j^{\prime})^T+\frac{1}{n}\sum_{j\in\mathcal{A}}\bm{z}_j^{\prime}(\bm{z}_j^{\prime})^T  \\
	&\succeq \lambda_{\min}\left(\frac{1}{n}\sum_{j\in\mathcal{A}}\bm{z}_j^{\prime}(\bm{z}_j^{\prime})^T\right)+0
	\end{align}

	Then, we can bound $\lambda_{\min}\left(\frac{1}{n}\sum_{j\in\mathcal{A}^c}\bm{z}_j^{\prime}(\bm{z}_j^{\prime})^T\right)$ via Theorem 5.1.1 in \cite{tropp2015introduction} with $\epsilon=1/2$: 
	\begin{align}
	\mathbbm{P}\left(\lambda_{\min}(\frac{1}{n}\sum_{j\in\mathcal{A}}\bm{z}_j^{\prime}(\bm{z}_j^{\prime})^T)\le \frac{1}{2}\lambda_{\min}(\mathbbm{E}[\frac{1}{n}\sum_{j\in\mathcal{A}}\bm{z}_j^{\prime}(\bm{z}_j^{\prime})^T])\right)&\le s\left(\frac{\exp(-1/2)}{\sqrt{1/2}}\right)^{\lambda_{\min}(\mathbbm{E}[\frac{1}{n}\sum_{j\in\mathcal{A}}\bm{z}_j^{\prime}(\bm{z}_j^{\prime})^T])/(s\sigma_2x_{\max}^2/n)}\label{eq:lemma:2.11_5.0}\\
	\Rightarrow \mathbbm{P}\left(\lambda_{\min}(\frac{1}{n}\sum_{j\in\mathcal{A}}^n\bm{z}_j^{\prime}(\bm{z}_j^{\prime})^T)\le \frac{1}{2}\lambda_{\min}(\frac{|\mathcal{A}|}{n}\mathbbm{E}[\bm{z}_j^{\prime}(\bm{z}_j^{\prime})^T])\right)&\le s\exp\left(-\frac{\log(2)n\lambda_{\min}(\frac{|\mathcal{A}|}{n}\mathbbm{E}[\bm{z}_j^{\prime}(\bm{z}_j^{\prime})^T])}{4s\sigma_2x_{\max}^2}\right)\notag\\
	\Rightarrow \mathbbm{P}\left(\lambda_{\min}(\frac{1}{n}\sum_{j\in\mathcal{A}}^n\bm{z}_j^{\prime}(\bm{z}_j^{\prime})^T)\le \frac{|\mathcal{A}|}{2n}\lambda_{\min}(\mathbbm{E}[\bm{z}_j^{\prime}(\bm{z}_j^{\prime})^T])\right)&\le 2s\exp\left(-\frac{|\mathcal{A}|\lambda_{\min}(\mathbbm{E}[\bm{z}_j^{\prime}(\bm{z}_j^{\prime})^T])}{4s\sigma_2x_{\max}^2}\right),\label{eq:lemma:2.11_6}
	\end{align}
	where \eqref{eq:lemma:2.11_5.0} uses $
	0\le \lambda_{\min}(\frac{1}{n}\bm{z}_j^{\prime}(\bm{z}_j^{\prime})^T)\le \lambda_{\max}(\frac{1}{n}\bm{z}_j^{\prime}(\bm{z}_j^{\prime})^T)\le \frac{s}{n}(z_{\max}^{\prime})^2 = \frac{s}{n}\sigma_2x_{\max}^2$ and the last inequality comes from the assumption {\bf A.1}. 
	As we only consider the significant dimensions, under assumption {\bf A.4}, we can verify that there exists a $\mu_0>0$ such that
	$\mathbbm{E}[\bm{z}_j^{\prime}(\bm{z}_j^{\prime})^T]=\mathbbm{E}[\nabla_{\mathcal{S},\mathcal{S}}^2\mathcal{L}_{\mathcal{A}}(\bm{\xi})]\succeq \mu_0 I$. Then, we have $\mathbbm{1}\left(\lambda_{\min}(\frac{1}{n}\sum_{j\in\mathcal{A}}^n\bm{z}_j^{\prime}(\bm{z}_j^{\prime})^T)\le \frac{|\mathcal{A}|}{2n}\lambda_{\min}(\mathbbm{E}[\bm{z}_j^{\prime}(\bm{z}_j^{\prime})^T])\right)\ge \mathbbm{1}\left(\lambda_{\min}(\frac{1}{n}\sum_{j\in\mathcal{A}}^n\bm{z}_j^{\prime}(\bm{z}_j^{\prime})^T)\le \frac{|\mathcal{A}|}{2n}\mu_0\right) $.
	Thus \eqref{eq:lemma:2.11_6} implies
	\begin{align}
	\mathbbm{P}\left(\lambda_{\min}(\frac{1}{n}\sum_{j\in\mathcal{A}}^n\bm{z}_j^{\prime}(\bm{z}_j^{\prime})^T)\le  \frac{|\mathcal{A}|}{2n}\mu_0\right)
	&\le2s\exp\left(-\frac{|\mathcal{A}|\lambda_{\min}(\mathbbm{E}[\bm{z}_j^{\prime}(\bm{z}_j^{\prime})^T])}{4s\sigma_2x_{\max}^2}\right)\notag\\
	\Rightarrow
	\mathbbm{P}\left(\lambda_{\min}(\frac{1}{n}\sum_{j\in\mathcal{A}}^n\bm{z}_j^{\prime}(\bm{z}_j^{\prime})^T)\le  \frac{|\mathcal{A}|}{2n}\mu_0\right)
	&\le2s\exp\left(-\frac{|\mathcal{A}|\mu_0}{4s\sigma_2x_{\max}^2}\right)\notag\\
	\Rightarrow\mathbbm{P}\left(\lambda_{\min}(\nabla^2_{\mathcal{S},\mathcal{S}} \mathcal{L}(\bm{\xi}))\le  \frac{|\mathcal{A}|}{2n}\mu_0\right)
	&\le2s\exp\left(-\frac{|\mathcal{A}|\mu_0}{4s\sigma_2x_{\max}^2}\right).\notag
	\end{align}
	Lemma \ref{lemma:2.15} follows immediately.
\end{proof}

\vspace{15mm}

	\begin{lemma}\label{lemma:2.11}
		Let the whole sample size be $n$ and iid random sample set be $\mathcal{A}$. If assumptions {\bf A.1},{\bf A.4} and {\bf A.5} hold, there exist $\mu_0>0$ such that for $t>0$ we have: 
		\begin{align}
		\mathbbm{P}\left(\|\bm{\beta}^{MCP}-\bm{\beta}^{true}\|\ge \frac{2nt}{|\mathcal{A}|\mu_0}\right)\le 2s\exp\left(-\frac{|\mathcal{A}|\mu_0}{4s\sigma_2x_{\max}^2}\right)+s\exp\left(-\frac{nt^2}{2\sigma^2x_{\max}^2}\right).\label{eq:lemma:2.11_0}
		\end{align}
		Furthermore, if $|\mathcal{A}|\ge \frac{2s^2x_{\max}^2}{\mu_0}$ we have:
		\begin{align}
		\mathbbm{P}\left(\|\bm{\beta}^{MCP}-\bm{\beta}^{true}\|_2\le \sqrt{\frac{8s^2\sigma_2\sigma^2x_{\max}^2n}{\mu_0^2|\mathcal{A}|^2}}\right)\ge 1-2s\exp\left(-\frac{\mu_0|\mathcal{A}|}{4s\sigma_2x_{\max}^2}\right)-2\exp\left(-\frac{C_h|\mathcal{A}|\mu_0}{2sx_{\max}^2}\right).\label{eq:lemma:2.11_0.5}
		\end{align}
	\end{lemma}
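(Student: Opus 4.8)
The plan is to read the estimator $\bm{\beta}^{MCP}$ in this statement as the oracle estimator --- the only coordinates that can be nonzero are those of the significant set $\mathcal{S}$, and on $\mathcal{S}$ it is a stationary point of the \emph{unpenalized} loss $\mathcal{L}$ --- since that is exactly the object invoked downstream (e.g.\ in \eqref{eq:lemma:2.12_23}), and on the high-probability event where the $2$sWL procedure recovers the oracle (the event $\mathcal{E}_2$ in the proof of Proposition \ref{MW_Non_iid_MCP}) the two coincide. Granting this, the backbone is the standard $M$-estimation decomposition: since $\bm{\beta}^{MCP}$ and $\bm{\beta}^{true}$ both vanish off $\mathcal{S}$, $\|\bm{\beta}^{MCP}-\bm{\beta}^{true}\|=\|(\bm{\beta}^{MCP}-\bm{\beta}^{true})_{\mathcal{S}}\|$; combining first-order stationarity $\nabla_{\mathcal{S}}\mathcal{L}(\bm{\beta}^{MCP})=\bm{0}$ with the integral (mean-value) form of Taylor's theorem for $\nabla_{\mathcal{S}}\mathcal{L}$ around $\bm{\beta}^{true}$ produces a symmetric positive-semidefinite average $H$ of Hessians $\nabla^2_{\mathcal{S},\mathcal{S}}\mathcal{L}(\cdot)$ along the segment joining $\bm{\beta}^{MCP}$ and $\bm{\beta}^{true}$, with
\[
\bigl(\bm{\beta}^{MCP}-\bm{\beta}^{true}\bigr)_{\mathcal{S}} \;=\; -\,H^{-1}\,\nabla_{\mathcal{S}}\mathcal{L}(\bm{\beta}^{true}),
\qquad\text{hence}\qquad
\|\bm{\beta}^{MCP}-\bm{\beta}^{true}\|_2 \;\le\; \frac{\|\nabla_{\mathcal{S}}\mathcal{L}(\bm{\beta}^{true})\|_2}{\lambda_{\min}(H)}.
\]
Convexity of $f$ (A.5(i)) and $\|\bm{\beta}\|_1\le b$ (A.1) make every point of this segment feasible, so A.4 and Lemma \ref{lemma:2.15} apply along it. It then remains to lower-bound $\lambda_{\min}(H)$ and upper-bound the score $\nabla_{\mathcal{S}}\mathcal{L}(\bm{\beta}^{true})$.

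For the Hessian I would invoke Lemma \ref{lemma:2.15} along the segment: discarding the positive-semidefinite contribution of the non-iid samples in $\mathcal{A}^{c}$ and running the matrix Chernoff bound on the genuinely iid block $\mathcal{A}$, whose population restricted Hessian is $\succeq\mu_0 I$ by A.4, gives $\lambda_{\min}(H)\ge \frac{|\mathcal{A}|}{2n}\mu_0$ with probability at least $1-2s\exp(-\frac{|\mathcal{A}|\mu_0}{4s\sigma_2x_{\max}^2})$; this is exactly where the $n/|\mathcal{A}|$ inflation of the rate originates. For the score, $\nabla_{\mathcal{S}}\mathcal{L}(\bm{\beta}^{true})=\frac{1}{n}\sum_{j}\bm{x}_{j,\mathcal{S}}\,f'(r_j\mid\bm{x}_j^{T}\bm{\beta}^{true})$ is an average of martingale increments that are conditionally mean-zero (the score identity $\mathbbm{E}[f'(r_j\mid\bm{x}_j^{T}\bm{\beta}^{true})\mid\bm{x}_j,\mathcal{F}_{j-1}]=0$, valid under the smoothness in A.5) and bounded by $x_{\max}\sigma$ (A.1, A.5). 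Azuma--Hoeffding applied coordinatewise plus a union bound over the $\le s$ coordinates of $\mathcal{S}$ gives $\mathbbm{P}(\|\nabla_{\mathcal{S}}\mathcal{L}(\bm{\beta}^{true})\|_\infty\ge t)\le s\exp(-\frac{nt^2}{2\sigma^2x_{\max}^2})$. Intersecting the two good events and a final union bound over their complements deliver \eqref{eq:lemma:2.11_0} with the stated threshold $\frac{2nt}{|\mathcal{A}|\mu_0}$.

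To obtain the sharper statement \eqref{eq:lemma:2.11_0.5}, I would refine the score control using the information identity $\mathbbm{E}[(f')^2\mid\eta]=\mathbbm{E}[f''\mid\eta]\le\sigma_2$ (again A.5): the conditional variance of each increment is $O(x_{\max}^2\sigma_2)$, so a Bernstein-type inequality replaces Azuma--Hoeffding, and the hypothesis $|\mathcal{A}|\ge 2s^2x_{\max}^2/\mu_0$ puts us in the variance-dominated regime; tuning the deviation level so that $\lambda_{\min}(H)^{-1}\|\nabla_{\mathcal{S}}\mathcal{L}(\bm{\beta}^{true})\|_2$ matches the target $\sqrt{8s^2\sigma_2\sigma^2x_{\max}^2 n/(\mu_0^2|\mathcal{A}|^2)}$ and carrying the constants through yields the failure probability $2s\exp(-\frac{\mu_0|\mathcal{A}|}{4s\sigma_2x_{\max}^2})+2\exp(-\frac{C_h|\mathcal{A}|\mu_0}{2sx_{\max}^2})$ for a suitable Bernstein constant $C_h$. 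Specializing $|\mathcal{A}|=n$ reproduces Lemma \ref{MW_lemma_1}.

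The main obstacle is the non-iid structure, and it bites in two distinct places. In the Hessian step it is absorbed cleanly by positive-semidefiniteness --- one simply throws away the $\mathcal{A}^{c}$ block --- but at the cost of the $n/|\mathcal{A}|$ factor; in the score step it forces a bounded-difference martingale argument along the algorithm's filtration rather than a plain iid Hoeffding, which in turn relies on correct specification of the GLM to make the increments conditionally mean-zero. A secondary technical point I expect to need care with is that $H$ is a Hessian average along a \emph{random} segment; applying Lemma \ref{lemma:2.15} there requires either the uniform-in-$\bm{\xi}$ reading of A.4 together with a covering argument over the relevant region, or an a priori crude localization of $\bm{\beta}^{MCP}$ near $\bm{\beta}^{true}$ before the fine bound is run.
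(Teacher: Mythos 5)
Your reading of $\bm{\beta}^{MCP}$ as the oracle estimator is exactly how the paper's proof proceeds (the proof is written for $\bm{\beta}^{oracle}$ and the identification with the 2sWL output is handled separately in Proposition \ref{MW_Non_iid_MCP}), and your argument for the first bound \eqref{eq:lemma:2.11_0} coincides with the paper's: stationarity of the oracle on $\mathcal{S}$, a mean-value expansion giving $\lambda_{\min}(\nabla^2_{\mathcal{S},\mathcal{S}}\mathcal{L}(\bm{\xi}))\,\|\bm{\beta}^{oracle}-\bm{\beta}^{true}\|_2\le\|\nabla_{\mathcal{S}}\mathcal{L}(\bm{\beta}^{true})\|_2$, the restricted-Hessian lower bound $\frac{|\mathcal{A}|\mu_0}{2n}$ from Lemma \ref{lemma:2.15} (discard the psd non-iid block, matrix Chernoff on $\mathcal{A}$), and a coordinatewise sub-Gaussian tail on the score with a union bound over at most $s$ coordinates; your Azuma variant along the filtration gives the identical bound $s\exp(-nt^2/(2\sigma^2x_{\max}^2))$, and your flag about the data-dependent point $\bm{\xi}$ on the segment is a legitimate point of care that the paper itself glosses over.

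The gap is in the second statement \eqref{eq:lemma:2.11_0.5}. There the score must satisfy $\|\nabla_{\mathcal{S}}\mathcal{L}(\bm{\beta}^{true})\|_2\le\sqrt{2s^2\sigma_2\sigma^2x_{\max}^2/n}$, i.e.\ roughly $u=\sqrt{2s\sigma_2\sigma^2x_{\max}^2/n}$ per coordinate. A Bernstein bound with the information-identity variance proxy $\mathbbm{E}[(f')^2]\le\sigma_2$ then gives a per-coordinate exponent of order $\frac{nu^2}{2x_{\max}^2\sigma_2}=s\sigma^2$, a quantity that does not grow with $n$ or $|\mathcal{A}|$; the same happens with a vector-norm Bernstein bound (exponent of order $s^2$). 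Hence no choice of ``Bernstein constant'' can yield the claimed failure term $2\exp\bigl(-C_h|\mathcal{A}|\mu_0/(2sx_{\max}^2)\bigr)$, whose exponent is linear in $|\mathcal{A}|$: you are requesting an additive deviation of only about $\sqrt{s}$ standard deviations, and additive coordinatewise concentration saturates at an exponent polynomial in $s$. What the paper does instead, and what your sketch is missing, is to control the whole quadratic form $Q=\bm{\epsilon}^T(\frac{1}{n}\bm{X}_{\mathcal{S}}^T\bm{X}_{\mathcal{S}})\bm{\epsilon}$ with $\epsilon_j=f'(r_j|\bm{x}_j^T\bm{\beta}^{true})$ at a constant multiple of its mean via the Hanson--Wright inequality (event $\mathcal{E}_1$ in \eqref{eq:event_1}), and then to bound that mean through the projection identity $\mathbbm{E}\|\bm{P}_j\bm{\epsilon}\|_2^2=s\sigma_2\sigma^2$, which is precisely how the target $\sqrt{2\lambda_{\max}(\frac{1}{n}\bm{X}_{\mathcal{S}}^T\bm{X}_{\mathcal{S}})\sigma_2\sigma^2 s/n}\le\sqrt{2s^2\sigma_2\sigma^2x_{\max}^2/n}$ arises. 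In the paper $C_h$ is the Hanson--Wright constant (the same $C_h$ appearing in $\delta_1$ of Lemma \ref{MW_lemma_1}), not a Bernstein constant, and the hypothesis $|\mathcal{A}|\ge 2s^2x_{\max}^2/\mu_0$ is used to place the Hanson--Wright minimum in its branch linear in $n$ (the condition $n\ge s\lambda_{\max}/\lambda_{\min}$), not to enter a variance-dominated Bernstein regime. To complete your proof of the second part you would need to replace the Bernstein step with this quadratic-form concentration plus the projection/variance computation (or an equivalent device).
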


\begin{proof}
	{Proof of Lemma \ref{lemma:2.11}}

	From the definition of oracle solution, we know
	\begin{align}
	\nabla_{\mathcal{S}} \mathcal{L}(\bm{\beta}^{oracle}) = 0.\label{eq:lemma:2.11_1}
	\end{align}
	Expanding \eqref{eq:lemma:2.11_1} at $\bm{\beta}^{true}$ we will have the following results for $\xi\in\{\tau\bm{\beta}^{oracle}+(1-\tau)\bm{\beta}^{true},\tau\in[0,1]\}$:
	\begin{align}
	\nabla_{\mathcal{S}} \mathcal{L}(\bm{\beta}^{true})+\nabla^2_{\mathcal{S},\mathcal{S}} \mathcal{L}(\xi)(\bm{\beta}^{oracle}-\bm{\beta}^{true})&=0\notag\\
	\nabla^2_{\mathcal{S},\mathcal{S}} \mathcal{L}(\xi)(\bm{\beta}^{oracle}-\bm{\beta}^{true})&=-\nabla_{\mathcal{S}} \mathcal{L}(\bm{\beta}^{true})\notag\\        (\bm{\beta}^{oracle}-\bm{\beta}^{true})^T\nabla^2_{\mathcal{S},\mathcal{S}} \mathcal{L}(\xi)(\bm{\beta}^{oracle}-\bm{\beta}^{true})&=-(\bm{\beta}^{oracle}-\bm{\beta}^{true})^T\nabla_\mathcal{S} \mathcal{L}(\bm{\beta}^{true})\notag\\
	\lambda_{\min}(\nabla^2_{\mathcal{S},\mathcal{S}} \mathcal{L}(\xi))\|(\bm{\beta}^{oracle}-\bm{\beta}^{true})\|_2^2&\le \|(\bm{\beta}^{oracle}-\bm{\beta}^{true})\|_2\|\nabla_{\mathcal{S}} \mathcal{L}(\bm{\beta}^{true})\|_2\notag\\
	\lambda_{\min}(\nabla^2_{\mathcal{S},\mathcal{S}} \mathcal{L}(\xi))\|(\bm{\beta}^{oracle}-\bm{\beta}^{true})\|_2&\le\|\nabla_{\mathcal{S}} \mathcal{L}(\bm{\beta}^{true})\|_2 .\label{eq:lemma:2.11_2}
	\end{align} 
	The $\lambda_{\min}(\nabla^2_{\mathcal{S},\mathcal{S}} \mathcal{L}(\xi))$ term on the left hand side of \refeq{eq:lemma:2.11_2} can be lower bounded away 0 via Lemma \ref{lemma:2.15} with high probability. Thus we only need to construct the upper bound for right-hand side of \eqref{eq:lemma:2.11_2}. The right-hand-side of \eqref{eq:lemma:2.11_2} can be expanded as follows:
	\begin{align}
	\|\nabla_{\mathcal{S}} \mathcal{L}(\bm{\beta}^{true})\|_2 = \left\|\frac{1}{n}\sum_{j=1}^nx_{j\mathcal{S}}^Tf^{'}(r_j|\bm{x}_{j,\mathcal{S}}^T\bm{\beta}^{true})\right\|_2. \end{align}
	Under assumption {\bf A.5}, we have $|f^{'}(r_j|\bm{x}_{j,\mathcal{S}}^T\bm{\beta}^{true})|\le \sigma$. Combining with $\mathbbm{E}[f^{'}(r_j|\bm{x}_{j,\mathcal{S}}^T\bm{\beta}^{true})] = 0$, we can verify that $f^{'}(r_j|\bm{x}_{j,\mathcal{S}}^T\bm{\beta}^{true})$ is a $\sigma$-subgaussian random variable.
	From hoeffding inequality, there exists a $t>0$ such that
	\begin{align}
	&\mathbbm{P}\left(\left|\frac{1}{n}\sum_{j=1}^nx_{ji}^Tf^{'}(r_j|\bm{x}_j^T\bm{\beta}^{true})\right|\ge t\right)\le \exp\left(-\frac{nt^2}{2\sigma^2x_{\max}^2}\right)\quad \forall i\in\mathcal{S}.
	\end{align}
	Hence, we have
	\begin{align}
	\mathbbm{P}\left(\|\nabla_{\mathcal{S}} \mathcal{L}(\bm{\beta}^{true})\|_2\ge t\right)=\mathbbm{P}\left(\left\|\frac{1}{n}\sum_{j=1}^nx_{ji}^Tf^{'}(r_j|\bm{x}_j^T\bm{\beta}^{true})\right\|_2\ge t\right)&\le \mathbbm{P}\left(\sqrt{|\mathcal{S}|}\left\|\frac{1}{n}\sum_{j=1}^nx_{ji}^Tf^{'}(r_j|\bm{x}_j^T\bm{\beta}^{true})\right\|_{\infty}\ge t\right)\label{eq:lemma:2.11_8}
	\\
	&\le s\exp\left(-\frac{nt^2}{2\sigma^2x_{\max}^2}\right)\label{eq:lemma:2.11_9},
	\end{align}
	where the inequality in \eqref{eq:lemma:2.11_8} follows from the fact that $\|p\|\le \sqrt{\|p\|_{0}}\|p\|_{\infty}$ holds for any vector $p$ and the inequality in \eqref{eq:lemma:2.11_9} follows from $|\mathcal{S}|\le s$. Combining \eqref{eq:lemma:2.11_9}, \eqref{eq:lemma:2.11_2} and Lemma \ref{lemma:2.15}, we have
	\begin{align}
	\mathbbm{P}\left(\|\bm{\beta}^{oracle}-\bm{\beta}^{true}\|\ge \frac{2nt}{|\mathcal{A}|\mu_0}\right)\le 2s\exp\left(-\frac{|\mathcal{A}|\mu_0}{4s\sigma_2x_{\max}^2}\right)+s\exp\left(-\frac{nt^2}{2\sigma^2x_{\max}^2}\right)
	\end{align}
	Now, the first half of Lemma \ref{lemma:2.11} has been proven, and we switch to the second half.  

	Denote $\bm{\epsilon} = [\epsilon_1,\epsilon_2,...,\epsilon_n]$ where $\epsilon_j =f^{'}(r_j|\bm{x}_{j,\mathcal{S}}^T\bm{\beta}^{true}),\ j=1,2,..,n$. Then $\nabla_{\mathcal{S}}\mathcal{L}(\bm{\beta^{true}})$ can be rewritten as $\nabla_{\mathcal{S}}\mathcal{L}(\bm{\beta^{true}}) = \frac{1}{n}\bm{X}_{\mathcal{S}}\bm{\epsilon}$ with $\bm{X}_{\mathcal{S}} = [\bm{x}_{1,\mathcal{S}},...,\bm{x}_{n,\mathcal{S}}]$. 
	Using the Hanson-Wright inequality (Theorem 1.1 in \citealt{rudelson2013hanson}), we have 
	\begin{align}
	&\mathbbm{P}\{|\bm{\epsilon}^T(\frac{1}{n}\bm{X}_{\mathcal{S}}^T\bm{X}_{\mathcal{S}})\bm{\epsilon}- \mathbbm{E}[\bm{\epsilon}^T(\frac{1}{n}\bm{X}_{\mathcal{S}}^T\bm{X}_{\mathcal{S}})\bm{\epsilon}]|> \mathbbm{E}[\bm{\epsilon}^T(\frac{1}{n}\bm{X}_{\mathcal{S}}^TX_{\mathcal{S}})\bm{\epsilon}] \}\notag\\
	\le& 2\exp \left(-C_h\min\left\{\frac{\mathbbm{E}[\bm{\epsilon}^T(\frac{1}{n}\bm{X}_{\mathcal{S}}^T\bm{X}_{\mathcal{S}})\bm{\epsilon}]}{\sigma^2\|\frac{1}{n}\bm{X}_{\mathcal{S}}^T\bm{X}_{\mathcal{S}}\|_2},\frac{(\mathbbm{E}[\bm{\epsilon}^T(\frac{1}{n}\bm{X}_{\mathcal{S}}^T\bm{X}_{\mathcal{S}})\bm{\epsilon}])^2}{\sigma^4\|\frac{1}{n}\bm{X}_{\mathcal{S}}^T\bm{X}_{\mathcal{S}}\|_F^2}\right\}\right)\notag\\
	\le& 2\exp\left(-C_h\min\left\{\frac{\lambda_{\min}(\frac{1}{n}\bm{X}_{\mathcal{S}}^T\bm{X}_{\mathcal{S}})}{\lambda_{\max}(\frac{1}{n}\bm{X}_{\mathcal{S}}^T\bm{X}_{\mathcal{S}})}\frac{\mathbbm{E}[\bm{\epsilon}^T\bm{\epsilon}]}{\sigma^2},\frac{\lambda_{\min}(\frac{1}{n}\bm{X}_{\mathcal{S}}^T\bm{X}_{\mathcal{S}})^2}{\lambda_{\max}(\frac{1}{n}\bm{X}_{\mathcal{S}}^T\bm{X}_{\mathcal{S}})^2}\frac{\mathbbm{E}[\bm{\epsilon}^T\bm{\epsilon}]^2}{s\sigma^4}\right\}\right)\notag\\
	\le& 2\exp\left(-C_h\min\left\{ n\frac{\lambda_{\min}(\frac{1}{n}\bm{X}_{\mathcal{S}}^T\bm{X}_{\mathcal{S}})}{\lambda_{\max}(\frac{1}{n}\bm{X}_{\mathcal{S}}^T\bm{X}_{\mathcal{S}})},\frac{n^2}{s}\frac{\lambda_{\min}(\frac{1}{n}\bm{X}_{\mathcal{S}}^T\bm{X}_{\mathcal{S}})^2}{\lambda_{\max}(\frac{1}{n}\bm{X}_{\mathcal{S}}^T\bm{X}_{\mathcal{S}})^2}\right\}\right)\notag\\
	\le& 2\exp\left(- n\frac{C_h\lambda_{\min}(\frac{1}{n}\bm{X}_{\mathcal{S}}^T\bm{X}_{\mathcal{S}})}{\lambda_{\max}(\frac{1}{n}\bm{X}_{\mathcal{S}}^T\bm{X}_{\mathcal{S}})}\right)\label{eq:lemma:2.11_10},
	\end{align}
	where $C_h$ is a positive constant. The last inequality, \eqref{eq:lemma:2.11_10}, holds when $ n \ge s\frac{\lambda_{\max}(\frac{1}{n}\bm{X}^T_{\mathcal{S}}\bm{X}_{\mathcal{S}})}{\lambda_{\min}(\frac{1}{n}\bm{X}_{\mathcal{S}}^T\bm{X}_{\mathcal{S}})}$.
	Define the event $\mathcal{E}_1$ as follows:
	\begin{align}
	\mathcal{E}_1 = \left\{|\bm{\epsilon}^T(\frac{1}{n}\bm{X}_{\mathcal{S}}^T\bm{X}_{\mathcal{S}})\bm{\epsilon}- \mathbbm{E}[\bm{\epsilon}^T(\frac{1}{n}\bm{X}_{\mathcal{S}}^T\bm{X}_{\mathcal{S}})\bm{\epsilon}]|\le \mathbbm{E}[\bm{\epsilon}^T(\frac{1}{n}\bm{X}_{\mathcal{S}}^T\bm{X}_{\mathcal{S}})\bm{\epsilon}]\right\}.         \label{eq:event_1}
	\end{align}
	Under event $\mathcal{E}_1$, we have
	\begin{align}
	\|\frac{1}{n}\bm{X}_{\mathcal{S}}\bm{\epsilon}\|_2\le \sqrt{\frac{1}{n}\bm{\epsilon}^T(\frac{1}{n}\bm{X}_{\mathcal{S}}^T\bm{X}_{\mathcal{S}})\bm{\epsilon}}&\le \sqrt{\frac{2}{n}\mathbbm{E}[\bm{\epsilon}^T(\frac{1}{n}\bm{X}_{\mathcal{S}}^T\bm{X}_{\mathcal{S}})\bm{\epsilon}]}.\label{eq:lemma:2.11_11_1}
	\end{align}
	Let $\bm{P}_j = \bm{X}_{\mathcal{S}}(\bm{X}_{\mathcal{S}}^T\bm{X}_{\mathcal{S}})^{-1}\bm{X}_{\mathcal{S}}^T$. We have $(\bm{P}_j\bm{\epsilon})^T(\frac{1}{n}\bm{X}_{\mathcal{S}}^T\bm{X}_{\mathcal{S}})(\bm{P}_j\bm{\epsilon})=\bm{\epsilon}^T(\frac{1}{n}\bm{X}_{\mathcal{S}}^T\bm{X}_{\mathcal{S}})\bm{\epsilon}$, and \eqref{eq:lemma:2.11_11_1} implies the following result:
	\begin{align}
	\|\frac{1}{n}\bm{X}_{\mathcal{S}}\bm{\epsilon}\|_2&\le \sqrt{\frac{2}{n}\mathbbm{E}[(\bm{P}_j\bm{\epsilon})^T(\frac{1}{n}\bm{X}_{\mathcal{S}}^T\bm{X}_{\mathcal{S}})(\bm{P}_j\bm{\epsilon})]}\notag\\
	&\le \sqrt{\frac{2}{n}\lambda_{\max}(\frac{1}{n}\bm{X}_{\mathcal{S}}^T\bm{X}_{\mathcal{S}})\mathbbm{E}[\|\bm{P}_j\bm{\epsilon}\|_2^2]}\notag\\
	&=\sqrt{2\lambda_{\max}(\frac{1}{n}\bm{X}_{\mathcal{S}}^T\bm{X}_{\mathcal{S}})\sigma_2\sigma^2\frac{s}{n}},\label{eq:lemma:2.11_11}
	\end{align}
	where the last inequality comes from $\mathbbm{E}[\bm{P}_j\bm{\epsilon}]=0$ and $\mathbbm{E}[\|\bm{P}_j\bm{\epsilon}\|_2^2] = Var(\bm{P}_j\bm{\epsilon}) = s\sigma_2\sigma^2$ in which $\bm{\epsilon}_j$ is a  $\sqrt{\sigma_2}\sigma$-subguassian random variable. From \eqref{eq:lemma:2.11_11} and \eqref{eq:lemma:2.11_10}, we have the following inequalities:
	\begin{align}
	&\mathbbm{P}\left(\|\frac{1}{n}\bm{X}_{\mathcal{S}}\bm{\epsilon}\|\le \sqrt{2\lambda_{\max}(\frac{1}{n}\bm{X}_{\mathcal{S}}^T\bm{X}_{\mathcal{S}})\sigma_2\sigma^2\frac{s}{n}}\right)\le 2\exp\left(-n\frac{C_h\lambda_{\min}(\frac{1}{n}\bm{X}_{\mathcal{S}}^T\bm{X}_{\mathcal{S}})}{\lambda_{\max}(\frac{1}{n}\bm{X}_{\mathcal{S}}^T\bm{X}_{\mathcal{S}})}\right)\notag\\
	\Rightarrow& \mathbbm{P}\left(\|\nabla_{\mathcal{S}}\mathcal{L}(\bm{\beta}^{true})\|\le \sqrt{2\lambda_{\max}(\frac{1}{n}\bm{X}_{\mathcal{S}}^T\bm{X}_{\mathcal{S}})\sigma_2\sigma^2\frac{s}{n}}\right)\le 2\exp\left(-n\frac{C_h\lambda_{\min}(\frac{1}{n}\bm{X}_{\mathcal{S}}^T\bm{X}_{\mathcal{S}})}{\lambda_{\max}(\frac{1}{n}\bm{X}_{\mathcal{S}}^T\bm{X}_{\mathcal{S}})}\right)\notag\\
	\Rightarrow& \mathbbm{P}\left(\|\nabla_{\mathcal{S}}\mathcal{L}(\bm{\beta}^{true})\|\le \sqrt{\frac{2s^2\sigma_2\sigma^2x_{\max}^2}{n}}\right)\le 2\exp\left(-n\frac{C_h\lambda_{\min}(\frac{1}{n}\bm{X}_{\mathcal{S}}^T\bm{X}_{\mathcal{S}})}{sx_{\max}^2}\right)\label{eq:lemma:2.11_12},
	\end{align}
	where the last inequality, \eqref{eq:lemma:2.11_12}, uses $\lambda_{\max}(\frac{1}{n}\bm{X}_{\mathcal{S}}^T\bm{X}_{\mathcal{S}})\le sx_{\max}^2$.
	Combining \eqref{eq:lemma:2.11_12}, \eqref{eq:lemma:2.11_2} and Lemma \ref{lemma:2.15} we have the following result:
	\begin{align}
	\mathbbm{P}\left(\|\bm{\beta}^{oracle}-\bm{\beta}^{true}\|_2\le \sqrt{\frac{8s^2\sigma_2\sigma^2x_{\max}^2n}{\mu_0^2|\mathcal{A}|^2}}\right)\ge 1-2s\exp\left(-\frac{\mu_0|\mathcal{A}|}{4s\sigma_2x_{\max}^2}\right)-2\exp\left(-\frac{C_h|\mathcal{A}|\mu_0}{2sx_{\max}^2}\right)
	\end{align}
\end{proof}

\vspace{15mm}
\begin{lemma}\label{lemma:2.2}
	If there exists $K$ and $\sigma_0$ such that $K^2\left(E[\exp(\bm{z}_{t,i}^2/K^2)-1]\right)\le \sigma_0^2,$ then the following probability bound will hold for all $t>0$:
	\begin{equation}
	P\left\{\|\frac{1}{n}\sum_{j=1}^n\bm{z}_j\bm{z}_j^T-E[\bm{z}_j\bm{z}_j^T]\|_{\infty}\ge 2K^2t+2K\sigma_0\sqrt{2t}+2K\sigma_0\lambda\left(\frac{K}{\sigma_0},n,\binom{d}{2}\right)\right\}\le \exp\left(-nt\right)
	\end{equation}
	where
	$\lambda\left(\frac{K}{\sigma_0},n,\binom{d}{2}\right) = \sqrt{\frac{2\log(d(d-1))}{n}}+\frac{K\log(d(d-1))}{n}$.
\end{lemma}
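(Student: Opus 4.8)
The statement bounds the maximal entry of the centred sample Gram matrix, so the plan is to reduce it to a union bound over entries, each controlled by a Bernstein-type Chernoff argument. Write
\[
\Big\|\tfrac1n\sum_{j=1}^n\bm{z}_j\bm{z}_j^T-E[\bm{z}_j\bm{z}_j^T]\Big\|_\infty=\max_{k,l}\Big|\tfrac1n\sum_{j=1}^n W_j^{(k,l)}\Big|,\qquad W_j^{(k,l)}:=z_{j,k}z_{j,l}-E[z_{j,k}z_{j,l}],
\]
where for each fixed pair $(k,l)$ the variables $W_1^{(k,l)},\dots,W_n^{(k,l)}$ are i.i.d.\ and mean zero. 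There are $\binom d2$ distinct off-diagonal entries and $d$ diagonal ones; counting both signs, at most $d(d-1)$ scalar deviations must be controlled simultaneously, which is the source of the $\log(d(d-1))$ inside $\lambda(\cdot)$.

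\textbf{Per-entry sub-exponential bound.} I would first observe that the Orlicz-type hypothesis is inherited by the products: from $2|z_{j,k}z_{j,l}|\le z_{j,k}^2+z_{j,l}^2$ and AM--GM, $\exp(|z_{j,k}z_{j,l}|/K^2)\le\tfrac12\big(\exp(z_{j,k}^2/K^2)+\exp(z_{j,l}^2/K^2)\big)$, so $K^2\big(E[\exp(|z_{j,k}z_{j,l}|/K^2)-1]\big)\le\sigma_0^2$; in particular $|E[z_{j,k}z_{j,l}]|\le\sigma_0^2$, so $W_j^{(k,l)}$ also has a controlled Orlicz norm of scale $K^2$. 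Combining this with the elementary inequality $x^2\le 2(e^x-1)$ (applied to $z_{j,i}^2/K^2$) yields a variance-proxy bound of order $\sigma_0^2K^2$ together with a sub-exponential scale of order $K^2$, and the usual Chernoff estimate — bounding the m.g.f.\ by a geometric series for $|s|<1/K^2$ and optimising over $s$ — gives, for every $u>0$,
\[
P\Big(\tfrac1n\sum_{j=1}^n W_j^{(k,l)}\ge 2K^2u+2K\sigma_0\sqrt{2u}\Big)\le e^{-nu},
\]
and likewise for $-W_j^{(k,l)}$.

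\textbf{Union bound and simplification.} Applying the last display with $u=t+\tfrac{\log(d(d-1))}{n}$ and taking a union bound over the at most $d(d-1)$ scalar deviations bounds the failure probability by $d(d-1)\,e^{-nu}=e^{-nt}$. It then remains only to check that the threshold is dominated by the claimed one: using $\sqrt{a+b}\le\sqrt a+\sqrt b$,
\[
2K^2u+2K\sigma_0\sqrt{2u}\le 2K^2t+2K\sigma_0\sqrt{2t}+2K^2\tfrac{\log(d(d-1))}{n}+2K\sigma_0\sqrt{\tfrac{2\log(d(d-1))}{n}},
\]
and the last two terms equal exactly $2K\sigma_0\,\lambda\big(\tfrac{K}{\sigma_0},n,\binom d2\big)$ by the definition of $\lambda$. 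This recovers the stated event and completes the proof.

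\textbf{Main obstacle.} The delicate part is the per-entry step: passing from the Orlicz-type assumption on the coordinates to a Bernstein condition for the \emph{centred} products with the right explicit constants, and producing the m.g.f.\ estimate that reproduces exactly the threshold $2K^2u+2K\sigma_0\sqrt{2u}$ — the factors of $2$ relative to the textbook Bernstein inequality are precisely what the AM--GM step and the centring cost. Everything afterwards (the union bound, the choice $u=t+\log(d(d-1))/n$, and the $\sqrt{a+b}\le\sqrt a+\sqrt b$ splitting) is routine bookkeeping.
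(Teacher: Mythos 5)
The paper does not actually prove this lemma: it is quoted from B\"uhlmann and van de Geer (2011) and the ``proof'' is a one-line citation to their Exercise~14.3. Your union-bound-plus-Bernstein argument is precisely the standard argument that exercise intends (transfer the Orlicz condition to the products $z_{j,k}z_{j,l}$ via AM--GM, a per-entry sub-exponential Chernoff bound at a level shifted by $\log(\cdot)/n$, then a union bound), so in spirit you are reconstructing the cited proof rather than taking a different route.

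Two points need tightening before this is a complete proof. First, the per-entry bound $P\bigl(\frac1n\sum_j W_j^{(k,l)}\ge 2K^2u+2K\sigma_0\sqrt{2u}\bigr)\le e^{-nu}$ is asserted, and the naive centering bound $E|Y-EY|^m\le 2^mE|Y|^m$ (with $Y=z_{j,k}z_{j,l}$) only delivers the weaker threshold $2K^2u+4K\sigma_0\sqrt u$. The clean route is $Ee^{s(Y-EY)}=e^{-sEY}Ee^{sY}\le\exp\bigl(\sum_{m\ge2}s^mE|Y|^m/m!\bigr)$, using $1+sEY\le e^{sEY}$; since your Orlicz bound gives $E|Y|^m\le m!\,K^{2(m-1)}\sigma_0^2$, this yields $\log Ee^{s(Y-EY)}\le s^2K^2\sigma_0^2/(1-sK^2)$ for $0<s<1/K^2$, and the usual optimization gives the threshold $K^2u+2K\sigma_0\sqrt u$, which is in fact smaller than the one you quote--so your per-entry claim is true, but only after this (or an equivalent) computation is carried out. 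Second, your count of deviations is internally inconsistent: $d$ diagonal entries plus $\binom d2$ off-diagonal entries, each controlled two-sidedly, is $d(d+1)$ one-sided events, not ``at most $d(d-1)$''; with the shift $u=t+\log(d(d-1))/n$ the union bound as written gives $\frac{d+1}{d-1}\,e^{-nt}$ rather than $e^{-nt}$. The slack from the sharper per-entry threshold (or a marginally larger shift) absorbs this, but that step must be made explicit to recover the constants exactly as stated in the lemma.
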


\begin{proof} {Proof of ~\ref{lemma:2.2}}
	From the exercise 14.3 in \cite{buhlmann2011statistics}.
\end{proof}

\vspace{15mm}
\begin{lemma}\label{lemma:2.3}
	If there exist $\kappa_0$, $S$, and $\bm{z}_j,\ j =1,2,..,n$ such that
$\|u_S\|_1^2\le \frac{|S|}{\kappa_0}u^T\mathbbm{E}[\bm{z}_j\bm{z}_j^T]u$ holds for all $u\in \mathcal{U}\doteq \{u: \|u_{S^c}\|_1\le3\|u_{S}\|\}$ and  $	\left\|\frac{1}{n}\sum_{j=1}^n\bm{z}_j\bm{z}_j^T-\mathbbm{E}[\bm{z}_j\bm{z}_j^T]\right\|\le \frac{\kappa}{32|S|}$,
	then for all $u\in \mathcal{U}$, the follow inequality holds:
	\begin{align}
	\|u_S\|_1^2\le \frac{|S|}{\kappa_0/2}u^T\left[\frac{1}{n}\sum_{j=1}^n\bm{z}_j\bm{z}_j^T\right]u
	\end{align}
\end{lemma}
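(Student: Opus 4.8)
The plan is to run the standard argument that a restricted-eigenvalue / compatibility condition is stable under an entrywise perturbation of the Gram matrix. First I would fix an arbitrary $u\in\mathcal{U}$, so that $\|u_{S^c}\|_1\le 3\|u_S\|_1$, and convert this cone membership into control of the full $\ell_1$ norm: $\|u\|_1=\|u_S\|_1+\|u_{S^c}\|_1\le 4\|u_S\|_1$.

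Next, writing $\widehat{\Sigma}=\frac1n\sum_{j=1}^n\bm{z}_j\bm{z}_j^T$ and $\Sigma=\mathbbm{E}[\bm{z}_j\bm{z}_j^T]$, I would bound the perturbation of the quadratic form by H\"older's inequality applied entrywise, which is exactly where the bound from the first step enters:
\[
\bigl|u^T(\widehat\Sigma-\Sigma)u\bigr|\ \le\ \|u\|_1^2\,\|\widehat\Sigma-\Sigma\|_\infty\ \le\ 16\|u_S\|_1^2\cdot\frac{\kappa_0}{32|S|}\ =\ \frac{\kappa_0}{2|S|}\|u_S\|_1^2,
\]
invoking the hypothesis $\|\widehat\Sigma-\Sigma\|_\infty\le \kappa_0/(32|S|)$ (the matrix norm being the entrywise supremum norm, as in Lemma~\ref{lemma:2.2}, and reading the ``$\kappa$'' in the statement as $\kappa_0$). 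Finally I would combine this with the population-level condition $u^T\Sigma u\ge (\kappa_0/|S|)\|u_S\|_1^2$, obtaining
\[
u^T\widehat\Sigma u\ \ge\ u^T\Sigma u-\bigl|u^T(\widehat\Sigma-\Sigma)u\bigr|\ \ge\ \frac{\kappa_0}{|S|}\|u_S\|_1^2-\frac{\kappa_0}{2|S|}\|u_S\|_1^2\ =\ \frac{\kappa_0}{2|S|}\|u_S\|_1^2,
\]
which rearranges to $\|u_S\|_1^2\le \frac{|S|}{\kappa_0/2}\,u^T\widehat\Sigma u$; since $u\in\mathcal{U}$ was arbitrary, this is the claim.

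There is no genuinely hard step, since the content is just a triangle inequality together with the entrywise H\"older estimate; the only things to be careful about are the bookkeeping of constants (the factor $16$ coming from the cone condition is matched against the $1/32$ in the hypothesis, which is exactly what makes the effective constant degrade by a factor of $2$) and being explicit that the relevant matrix norm is the entrywise max norm, so that H\"older produces the $\|u\|_1^2$ prefactor rather than an operator-norm estimate that would be useless in high dimensions. It is worth remarking why this deterministic lemma is needed at all: it is the bridge to the non-iid setting, in that one later applies it with $\widehat\Sigma$ a sample Hessian built from the bandit's partially non-iid data and $\Sigma$ the corresponding population Hessian, and then invokes a concentration estimate such as Lemma~\ref{lemma:2.2} to certify $\|\widehat\Sigma-\Sigma\|_\infty\le \kappa_0/(32|S|)$ with high probability.
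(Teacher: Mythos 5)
Your argument is correct: the cone condition gives $\|u\|_1\le 4\|u_S\|_1$, the entrywise H\"older bound gives $|u^T(\widehat\Sigma-\Sigma)u|\le \|\widehat\Sigma-\Sigma\|_\infty\|u\|_1^2\le \tfrac{\kappa_0}{2|S|}\|u_S\|_1^2$, and the triangle inequality against the population condition yields $u^T\widehat\Sigma u\ge \tfrac{\kappa_0}{2|S|}\|u_S\|_1^2$, which is the claim; the constants ($16$ against $1/32$, hence the loss of a factor $2$) check out. The route differs from the paper's only in that the paper does not prove this lemma at all -- it simply cites Corollary~6.8 of \cite{buhlmann2011statistics}, which is exactly this perturbation-stability result for the compatibility condition. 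What you have written is, in effect, a self-contained proof of that cited corollary, which has some value here: it makes explicit that the relevant matrix norm is the entrywise supremum norm (the paper writes a bare $\|\cdot\|$), it resolves the statement's notational slip of writing $\kappa$ in the perturbation hypothesis where $\kappa_0$ is meant, and it documents where the factor-of-$2$ degradation of the restricted-eigenvalue constant comes from. Your closing remark about how the lemma is used -- as the deterministic bridge, to be combined with the concentration bound of Lemma~\ref{lemma:2.2} so that the partially non-iid sample Hessian inherits the restricted-eigenvalue property -- matches the way the paper deploys it in Lemma~\ref{lemma:2.8} and Lemma~\ref{lemma:2.9}.
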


\begin{proof} {Proof of ~\ref{lemma:2.3}}
	From  Corollary 6.8 in \cite{buhlmann2011statistics}.
\end{proof}


\vspace{15mm}

\begin{lemma}\label{lemma:2.8}
	Let $\bm{x}_j$, $j = 1,2,...,n$, be random iid samples. Under assumptions {\bf A. 4} and {\bf A. 5},  the follow inequality holds for all $\bm{u}$ such that $\|\bm{u}_{\mathcal{S}^c}\|_1\le 3\|\bm{u}_{\mathcal{S}}\|_1$:
	\begin{align}
	\mathbbm{P}\left(\frac{\kappa}{2s}\|\bm{u}_{\mathcal{S}}\|_1^2\le u^T\nabla^2\mathcal{L}(\bm{\beta})\bm{u}\right)\ge 1-\exp(-C_1n),\label{eq:lemma:2.8_0}
	\end{align} where $C_1 =\min\left\{1,\kappa^2/\left(192s\sigma_2x_{\max}^2(2+\sqrt{\sigma_2}x_{\max})\right)^2\right\}$.
\end{lemma}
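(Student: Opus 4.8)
The statement is an empirical (sample) version of the restricted eigenvalue condition A.4, so the natural route is: rewrite $\nabla^2\mathcal{L}(\bm{\beta})$ as an empirical second‑moment matrix of iid bounded vectors, transfer A.4 from the population matrix to the empirical one via a uniform ($\ell_\infty$) concentration bound, and then invoke the deterministic ``compatibility‑robustness'' lemma. Concretely, I would drive the proof through Lemma~\ref{lemma:2.2} (concentration of $\|\frac1n\sum \bm z_j\bm z_j^\top-\mathbbm E[\bm z_j\bm z_j^\top]\|_\infty$) and Lemma~\ref{lemma:2.3} (Corollary~6.8 of B\"uhlmann--van de Geer), with the only real work being the choice of parameters so that the final failure probability collapses to exactly $\exp(-C_1 n)$.

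\textbf{Step 1: express the Hessian as a sample covariance.} Since $\mathcal{L}(\bm\beta)=\frac1n\sum_{j=1}^n f(r_j\,|\,\bm x_j^\top\bm\beta)$ with $f=-\log g$ convex (A.5), set $\bm z_j\doteq\sqrt{f''(r_j\,|\,\bm x_j^\top\bm\beta)}\,\bm x_j$; then $\nabla^2\mathcal{L}(\bm\beta)=\frac1n\sum_{j=1}^n\bm z_j\bm z_j^\top$. The $\bm z_j$ are iid, and using $\|\bm x_j\|_\infty\le x_{\max}$ and $f''<\sigma_2$ they are bounded coordinatewise by $\sqrt{\sigma_2}\,x_{\max}$. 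Also $\mathbbm E[\bm z_j\bm z_j^\top]=\mathbbm E[\nabla^2\mathcal{L}(\bm\beta)]$, so A.4 says that for every $\bm u$ in the cone $\|\bm u_{\mathcal{S}^c}\|_1\le 3\|\bm u_{\mathcal{S}}\|_1$ we have $\|\bm u_{\mathcal{S}}\|_1^2\le \frac{s}{\kappa}\,\bm u^\top\mathbbm E[\bm z_j\bm z_j^\top]\bm u$; this is precisely the population hypothesis of Lemma~\ref{lemma:2.3} with $S=\mathcal{S}$ and compatibility constant $\kappa_0$ chosen so that $\kappa_0/(32|\mathcal S|)=\kappa/(32s)$.

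\textbf{Step 2: control the empirical deviation.} I would first verify the Orlicz hypothesis of Lemma~\ref{lemma:2.2}: since $|z_{j,i}|\le K$ for $K\doteq\sqrt{\sigma_2}\,x_{\max}$, we get $K^2(\mathbbm E[\exp(z_{j,i}^2/K^2)]-1)\le K^2(e-1)\le\sigma_0^2$ with $\sigma_0\doteq\sqrt2\,K$. Applying Lemma~\ref{lemma:2.2} with its free parameter equal to $C_1$, the deviation bound reads $2K^2C_1+4K^2\sqrt{C_1}+2\sqrt2\,K^2\lambda\!\big(\tfrac1{\sqrt2},n,\binom d2\big)$; plugging in $C_1\le\kappa^2/\big(192\,s\,\sigma_2x_{\max}^2(2+\sqrt{\sigma_2}x_{\max})\big)^2$ bounds the first two terms each by $\kappa/(96s)$, so the whole deviation is at most $\kappa/(32s)$ with probability at least $1-\exp(-C_1 n)$ once $n$ exceeds the $O\!\big(s^2\sigma_2^2x_{\max}^4\log d/\kappa^2\big)$ threshold that also makes the $\lambda$-term at most $\kappa/(96s)$. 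On that event, Lemma~\ref{lemma:2.3} applies with $\kappa_0/2$, giving $\|\bm u_{\mathcal S}\|_1^2\le \frac{2s}{\kappa}\bm u^\top\big(\frac1n\sum_j\bm z_j\bm z_j^\top\big)\bm u=\frac{2s}{\kappa}\bm u^\top\nabla^2\mathcal{L}(\bm\beta)\bm u$ for all $\bm u$ in the cone, i.e.\ exactly \eqref{eq:lemma:2.8_0}.

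\textbf{Main obstacle.} None of the steps is conceptually hard; the delicate part is the constant bookkeeping in Step~2 --- splitting the three pieces of Lemma~\ref{lemma:2.2}'s bound against $\kappa/(32s)$ and showing the failure probability is exactly $\exp(-C_1 n)$ for the $C_1$ in the statement --- together with the implicit requirement that $n$ be large enough to kill the $\lambda(\cdot,n,\binom d2)$ term (a lower bound on $n$ that is absorbed here and met in every downstream use of this lemma). A minor secondary point is tracking the substitution of $s$ for $|\mathcal S|$ when carrying the compatibility constant through Lemma~\ref{lemma:2.3}, which is harmless since $S=\mathcal S$ and the cone condition is preserved.
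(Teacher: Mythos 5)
Your proposal matches the paper's own proof essentially step for step: it defines $\bm{z}_j=\bm{x}_j\sqrt{f''(r_j|\bm{x}_j^T\bm{\xi})}$ so that $\nabla^2\mathcal{L}=\frac1n\sum_j\bm{z}_j\bm{z}_j^T$, verifies the Orlicz condition with $K=\sqrt{\sigma_2}x_{\max}$ and $\sigma_0=\sqrt2 K$ to invoke Lemma~\ref{lemma:2.2} with $t=C_1$, shows the total deviation is below $\kappa/(32s)$ under the same implicit requirement that $n\gtrsim \log d/C_1$ (the paper likewise assumes $n\ge\log d/C_1$ without stating it in the lemma), and concludes via Lemma~\ref{lemma:2.3}; the only difference is cosmetic constant bookkeeping in how the three deviation terms are split against $\kappa/(32s)$.
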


\begin{proof} {Proof of ~\ref{lemma:2.8}}
	From the definition of $\mathcal{L}(\bm{\beta})$, we have:
	\begin{align}
	\nabla^2\mathcal{L}(\bm{\xi}) = \frac{1}{n}\sum_{j=1}^n\bm{x}_j\bm{x}_j^Tf^{''}(r_j,\bm{x}_j^T\bm{\xi}).
	\end{align}
	Under assumption {\bf A. 5}, we know that $f$ is convex, and thus we have $f^{''}\ge 0$. Let $\bm{z}_j = \bm{x}_j\sqrt{f^{''}(r_j,\bm{x}_j^T\bm{\xi})}$. We can expand $\nabla^2\mathcal{L}(\bm{\xi})$ to $\nabla^2\mathcal{L}(\bm{\xi}) = \frac{1}{n}\sum_{j=1}^n\bm{z}_j\bm{z}_j^T$. Furthermore, under assumption {\bf A. 1} and {\bf A. 5}, we have $|f^{''}(r_j|\bm{x}_j^T\bm{\xi})|\le \sigma_2$ and $\|\bm{x}\|_{\infty}\le x_{\max}$, which implies that $\bm{z}_j$ is element-wise bounded by
	$ z_{\max}\doteq\|\bm{z}_j\|_{\infty} =\left\|\bm{x}_j\sqrt{f^{''}(r_j|\bm{x}_j^T\bm{\xi})}\right\|_{\infty}\le \sqrt{\sigma_2}x_{\max}$.
	Since $\bm{z}_j$ is bounded, it will satisfy the definition of the subguassian random variable. We can use the Lemma \ref{lemma:2.2} as a bridge to connect the sample matrix $\frac{1}{n}\sum_{j=1}^n\bm{z}_j\bm{z}_j^T$ to its population counterpart $\mathbbm{E}[\bm{z}_j\bm{z}_j^T]$. Let $K = z_{\max}$ and $\sigma_0 = \sqrt{2}z_{\max}$ and we will have $K^2\left(E[\exp(\bm{z}_{t,i}^2/K^2)-1]\right)\le z_{\max}^2(e-1)\le \sigma_0^2$ for all $t\ge 0$ and $i=1,2,...,d$.  Therefore, under Lemma \ref{lemma:2.2}, for $t>0$, we have:
	\begin{equation}
	P\left\{\left\|\frac{1}{n}\sum_{j=1}^n\bm{z}_j\bm{z}_j^T-E[\bm{z}_j\bm{z}_j^T]\right\|_{\infty}\ge 2z_{\max}^2t+4z_{\max}^2\sqrt{t}+\sqrt{8}z_{\max}^2\lambda\left(\frac{\sqrt{2}}{2},n,\binom{d}{2}\right)\right\}\le \exp\left(-nt\right)\label{eq:lemma:2.8_1}
	\end{equation}
	where
	$\lambda\left(\frac{\sqrt{2}}{2},n,\binom{d}{2}\right) = \sqrt{\frac{2\log(d(d-1))}{n}}+\frac{z_{\max}\log(d(d-1))}{n}$.
	\eqref{eq:lemma:2.8_1} indicates that when the sample size is large enough, $\frac{1}{n}\sum_{j=1}^n\bm{z}_j\bm{z}_j^T$ will not be far away from $\mathbbm{E}[\bm{z}_j\bm{z}_j^T]$ element-wise with high probability.

	Now we only need to show that if $\frac{1}{n}\sum_{j=1}^n\bm{z}_j\bm{z}_j^T$ is close enough to $\mathbbm{E}[\bm{z}_j\bm{z}_j^T]$, $\nabla^2\mathcal{L}$ satisfies \eqref{eq:lemma:2.8_0}. To this end, we need Lemma \ref{lemma:2.3}.
	We set $n\ge\log d/C_1$ and $t=C_1$ in \eqref{eq:lemma:2.8_1}. Then the following inequalities hold.
	\begin{align}
	2z_{\max}^2t+4z_{\max}^2\sqrt{t}\le2z_{\max}^2\sqrt{C_1}+4z_{\max}^2\sqrt{C_1}&= 6z_{\max}^2\sqrt{C_1}\label{eq:lemma:2.8_2}\\
	\sqrt{8}z_{\max}^2\lambda\left(\frac{\sqrt{2}}{2},n,\binom{d}{2}\right)\le \sqrt{8}z_{\max}^2\left(\sqrt{\frac{2\log(d^2)}{n}}+\frac{z_{\max}\log(d^2)}{n}\right)&\le 4\sqrt{2}z_{\max}^2(1+z_{\max})\sqrt{C_1},\label{eq:lemma:2.8_3}
	\end{align}
	where \eqref{eq:lemma:2.8_2} and \eqref{eq:lemma:2.8_3} use $\log d/n\le C_1\le 1$. Combining \eqref{eq:lemma:2.8_2} and \eqref{eq:lemma:2.8_3}, we have:
	\begin{align}
	2z_{\max}^2t+4z_{\max}^2\sqrt{t}+\sqrt{8}z_{\max}^2\lambda\left(\frac{\sqrt{2}}{2},n,\binom{d}{2}\right)&\le2z_{\max}^2\left(3+2\sqrt{2}(1+z_{\max})\right)\sqrt{C_1}\notag\\
	&\le6z_{\max}^2\left(2+z_{\max})\right)\sqrt{C_1}\le \frac{\kappa}{32s},\label{eq:lemma:2.8_4}
	\end{align}
	where \eqref{eq:lemma:2.8_4} uses $\sqrt{2}\le \frac{3}{2}$ and $C_1\le \kappa^2/\left(192s\sigma_2x_{\max}^2\left(2+\sqrt{\sigma_2}x_{\max})\right)\right)^2\le\kappa^2/\left(192sz_{\max}^2\left(2+z_{\max})\right)\right)^2$. Then,  \eqref{eq:lemma:2.8_1} can satisfy the following inequality:
	\begin{align}
	\mathbbm{P}\left\{\left\|\frac{1}{n}\sum_{j=1}^n\bm{z}_j\bm{z}_j^T-E[\bm{z}_j\bm{z}_j^T]\right\|_{\infty}\le \frac{\kappa}{32s}\right\}\ge 1-\exp\left(-C_1n\right).\label{eq:lemma:2.8_5}\end{align}
	From \eqref{eq:lemma:2.8_5} and Lemma \ref{lemma:2.3}, we can conclude that  $\|\bm{u}_S\|_1^2\le \frac{s}{\kappa/2}\bm{u}^T\nabla^2 \mathcal{L}(\bm{\beta})\bm{u} $ for $u$ such that $\|\bm{u}_{S^c}\|_1\le 3\|\bm{u}_S\|_1$ with probability  $1-\exp\left(-C_1n\right)$. The statement of Lemma \ref{lemma:2.8} follows immediately.
\end{proof}

\vspace{15mm}
\begin{lemma}\label{lemma:2.9}
	Let $\mathcal{A}^{iid}_k$ be the index set such that for all $i\in \mathcal{A}^{iid}_k$, $\bm{x}_i$ are random iid samples. If for all $\bm{u}$ such that $\|\bm{u}_{S^c}\|_1\le 3\|\bm{u}_S\|_1$, we have $\frac{\kappa}{2s}\|\bm{u}_S\|_1^2\le \bm{u}^T\nabla^2\mathcal{L}_{\mathcal{A}_k^{iid}}(\bm{\xi})\bm{u}$,
	then the follow inequality holds:
	\begin{align}
	\frac{|\mathcal{A}|\kappa}{2ns}\|\bm{u}_S\|_1^2\le \bm{u}^T\nabla^2\mathcal{L}(\bm{\xi})\bm{u},
	\end{align}
	where $\mathcal{L}_{\mathcal{A}}(\bm{\beta})$ denote the likelihood function with samples only in $\mathcal{A}$.
\end{lemma}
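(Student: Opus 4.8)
\textbf{Proof proposal for Lemma \ref{lemma:2.9}.}
The plan is to exploit the convexity assumption A.5, which forces every summand in the Hessian of the log-likelihood to be positive semidefinite, so that restricting the sum to the iid index subset only decreases the associated quadratic form. First I would write out the Hessian explicitly: since $\mathcal{L}(\bm{\beta}) = \frac{1}{n}\sum_{j=1}^n f(r_j|\bm{x}_j^T\bm{\beta})$, we have
\begin{align}
\nabla^2\mathcal{L}(\bm{\xi}) = \frac{1}{n}\sum_{j=1}^n \bm{x}_j\bm{x}_j^T f^{''}(r_j|\bm{x}_j^T\bm{\xi}),\notag
\end{align}
and, by the convexity part of assumption A.5, $f^{''}(r_j|\bm{x}_j^T\bm{\xi})\ge 0$ for every $j$, so each rank-one term $\bm{x}_j\bm{x}_j^T f^{''}(r_j|\bm{x}_j^T\bm{\xi})$ is positive semidefinite.

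The second step is to split the sum over the iid index set $\mathcal{A}\doteq\mathcal{A}^{iid}_k$ and its complement and discard the (PSD) complement:
\begin{align}
\nabla^2\mathcal{L}(\bm{\xi}) = \frac{1}{n}\sum_{j\in\mathcal{A}}\bm{x}_j\bm{x}_j^T f^{''}(r_j|\bm{x}_j^T\bm{\xi}) + \frac{1}{n}\sum_{j\notin\mathcal{A}}\bm{x}_j\bm{x}_j^T f^{''}(r_j|\bm{x}_j^T\bm{\xi}) \succeq \frac{1}{n}\sum_{j\in\mathcal{A}}\bm{x}_j\bm{x}_j^T f^{''}(r_j|\bm{x}_j^T\bm{\xi}).\notag
\end{align}
Here I must be careful with the normalization: since $\mathcal{L}_{\mathcal{A}}(\bm{\beta}) = \frac{1}{|\mathcal{A}|}\sum_{j\in\mathcal{A}} f(r_j|\bm{x}_j^T\bm{\beta})$, the restricted sum equals $\tfrac{|\mathcal{A}|}{n}\nabla^2\mathcal{L}_{\mathcal{A}}(\bm{\xi})$, so for every $\bm{u}$ we obtain $\bm{u}^T\nabla^2\mathcal{L}(\bm{\xi})\bm{u}\ge \tfrac{|\mathcal{A}|}{n}\,\bm{u}^T\nabla^2\mathcal{L}_{\mathcal{A}}(\bm{\xi})\bm{u}$.

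Finally, for any $\bm{u}$ with $\|\bm{u}_{S^c}\|_1\le 3\|\bm{u}_S\|_1$, the hypothesis of the lemma gives $\bm{u}^T\nabla^2\mathcal{L}_{\mathcal{A}}(\bm{\xi})\bm{u}\ge \tfrac{\kappa}{2s}\|\bm{u}_S\|_1^2$, and chaining the two inequalities yields $\bm{u}^T\nabla^2\mathcal{L}(\bm{\xi})\bm{u}\ge \tfrac{|\mathcal{A}|\kappa}{2ns}\|\bm{u}_S\|_1^2$, which is the claim. There is essentially no hard step here; the only thing to watch is the bookkeeping of the $1/n$ versus $1/|\mathcal{A}|$ averaging factors, and making sure the PSD-monotonicity argument is invoked on the quadratic form (not on eigenvalues, since the restricted eigenvalue condition is a cone inequality, not a spectral one). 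This lemma is exactly the transfer device that lets the restricted eigenvalue condition, established for iid samples in Lemma \ref{lemma:2.8}, be carried over to the non-iid bandit sample set at the cost of the factor $|\mathcal{A}|/n$.
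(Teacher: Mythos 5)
Your proposal is correct and follows essentially the same route as the paper's own proof: split $\nabla^2\mathcal{L}(\bm{\xi})$ into the sum over $\mathcal{A}^{iid}_k$ and its complement, drop the complement using positive semidefiniteness of each term $\bm{x}_j\bm{x}_j^T f^{''}(r_j|\bm{x}_j^T\bm{\xi})$ (the paper writes these as $\bm{z}_j\bm{z}_j^T$), rescale by $|\mathcal{A}|/n$, and invoke the restricted eigenvalue hypothesis on the iid part. Your attention to the $1/n$ versus $1/|\mathcal{A}|$ normalization matches the paper's argument exactly.
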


\begin{proof} {proof of ~\ref{lemma:2.9}}
	We can rewrite $\nabla\mathcal{L}(\bm{\beta})$ with $z_j$:
	\begin{align}
	\bm{u}^T\nabla^2\mathcal{L}(\xi)\bm{u} &= \bm{u}^T\left[\frac{1}{n}\sum_{j\in\mathcal{A}^{iid}_k}\bm{z}_j\bm{z}_j^T\right]\bm{u}+\bm{u}^T\left[\frac{1}{n}\sum_{j\in(\mathcal{A}^{iid}_k)^c}\bm{z}_j\bm{z}_j^T\right]\bm{}\notag\\
	&\ge\frac{|\mathcal{A}^{iid}_k|}{n}\bm{}^T\left[\frac{1}{|\mathcal{A}^{iid}_k|}\sum_{j\in\mathcal{A}^{iid}_k}\bm{z}_j\bm{z}_j^T\right]\bm{}\notag\\
	&\ge\frac{|\mathcal{A}^{iid}_k|}{n}\bm{}^T\nabla \mathcal{L}_{\mathcal{A}}(\bm{\beta})\bm{}\notag\\
	&\ge \frac{|\mathcal{A}^{iid}_k|}{n}\frac{\kappa}{2s}\|\bm{u}_S\|_1^2 \notag\\
	&=\frac{|\mathcal{A}^{iid}_k|\kappa}{2ns}\|\bm{u}_S\|_1^2.
	\end{align}
\end{proof}

\vspace{15mm}
	\begin{lemma}\label{lemma:2.10}
		Let the whole sample size be $n$ and the set for iid random sample in $U_{k}$ be $\mathcal{A}$. If assumptions {\bf A.4} and {\bf A	.5} hold, then the follow result holds:
		\begin{align}
		\mathbbm{P}\left(\|\bm{\beta}^{lasso}-\bm{\beta}^{true}\|_1\le \frac{96ns\lambda}{|\mathcal{A}|\kappa}\right)\ge 1-\exp\left(-C_1|\mathcal{A}|\right)-\exp\left(-\frac{n\lambda^2}{8x_{\max}^2}+\log d\right),
		\end{align}
		where $C_1 =\min\left\{1,\kappa^2/\left(192s\sigma_2x_{\max}^2(2+\sqrt{\sigma_2}x_{\max})\right)^2\right\}$.

	\end{lemma}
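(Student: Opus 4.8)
The plan is to run the standard Lasso basic-inequality argument, but to keep careful track of the fact that only the subsample $\mathcal{A}\subseteq U_k$ is iid, so that the restricted eigenvalue condition is available a priori only for the partial-sample Hessian. Write $\bm{u}=\bm{\beta}^{lasso}-\bm{\beta}^{true}$. From the optimality of $\bm{\beta}^{lasso}$ for $\min_{\bm{\beta}}\{\mathcal{L}(\bm{\beta})+\lambda\|\bm{\beta}\|_1\}$ we get the basic inequality $\mathcal{L}(\bm{\beta}^{lasso})-\mathcal{L}(\bm{\beta}^{true})\le\lambda(\|\bm{\beta}^{true}\|_1-\|\bm{\beta}^{lasso}\|_1)$, and, since $\mathcal{L}$ is $C^2$ by A.5, a second-order Taylor expansion gives $\mathcal{L}(\bm{\beta}^{lasso})-\mathcal{L}(\bm{\beta}^{true})=\nabla\mathcal{L}(\bm{\beta}^{true})^T\bm{u}+\tfrac{1}{2}\bm{u}^T\nabla^2\mathcal{L}(\bm{\xi})\bm{u}$ for some $\bm{\xi}$ on the segment between the two points; note $\|\bm{\xi}\|_1\le b$ by convexity of the $\ell_1$-ball and assumption A.1, so $\bm{\xi}$ is feasible for A.4. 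I would also record the decomposition $\|\bm{\beta}^{true}\|_1-\|\bm{\beta}^{lasso}\|_1\le\|\bm{u}_{\mathcal{S}}\|_1-\|\bm{u}_{\mathcal{S}^c}\|_1$, which uses $\bm{\beta}^{true}_{\mathcal{S}^c}=0$.

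Next I would control two events. Let $\mathcal{E}_{grad}=\{\|\nabla\mathcal{L}(\bm{\beta}^{true})\|_\infty\le\lambda/2\}$. Since $\nabla\mathcal{L}(\bm{\beta}^{true})=\tfrac{1}{n}\sum_j\bm{x}_jf'(r_j|\bm{x}_j^T\bm{\beta}^{true})$ is an average of mean-zero terms (the score has zero mean) bounded coordinate-wise by $\sigma x_{\max}$ (from $|f'|\le\sigma$ and $\|\bm{x}\|_\infty\le x_{\max}$ in A.1, A.5), Hoeffding's inequality together with a union bound over the $d$ coordinates gives $\mathbbm{P}(\mathcal{E}_{grad}^c)\le\exp\!\big(-\tfrac{n\lambda^2}{8x_{\max}^2}+\log d\big)$, absorbing $\sigma$ and the two-sided factor. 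Let $\mathcal{E}_{RE}$ be the event that $\tfrac{\kappa}{2s}\|\bm{w}_{\mathcal{S}}\|_1^2\le\bm{w}^T\nabla^2\mathcal{L}_{\mathcal{A}}(\bm{\xi})\bm{w}$ for every feasible $\bm{\xi}$ and every vector $\bm{w}$ in the cone $\{\|\bm{w}_{\mathcal{S}^c}\|_1\le3\|\bm{w}_{\mathcal{S}}\|_1\}$; applying Lemma \ref{lemma:2.8} to the iid subsample $\mathcal{A}$ (which plays the role of the sample there) gives $\mathbbm{P}(\mathcal{E}_{RE}^c)\le\exp(-C_1|\mathcal{A}|)$.

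On $\mathcal{E}_{grad}$ I would combine the basic inequality, convexity ($\tfrac{1}{2}\bm{u}^T\nabla^2\mathcal{L}(\bm{\xi})\bm{u}\ge0$), the decomposition above, and $|\nabla\mathcal{L}(\bm{\beta}^{true})^T\bm{u}|\le\tfrac{\lambda}{2}\|\bm{u}\|_1$ to deduce the cone condition $\|\bm{u}_{\mathcal{S}^c}\|_1\le3\|\bm{u}_{\mathcal{S}}\|_1$, hence $\|\bm{u}\|_1\le4\|\bm{u}_{\mathcal{S}}\|_1$ — this step uses $\mathcal{E}_{grad}$ only, so there is no circularity with $\mathcal{E}_{RE}$. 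Then, on $\mathcal{E}_{grad}\cap\mathcal{E}_{RE}$, I would plug the Taylor expansion into the basic inequality to get $\tfrac{1}{2}\bm{u}^T\nabla^2\mathcal{L}(\bm{\xi})\bm{u}\le\tfrac{\lambda}{2}\|\bm{u}\|_1+\lambda\|\bm{u}_{\mathcal{S}}\|_1$, and apply the cone condition to bound the right side by a constant multiple of $\lambda\|\bm{u}_{\mathcal{S}}\|_1$. On the other hand, Lemma \ref{lemma:2.9} upgrades $\mathcal{E}_{RE}$ to $\bm{u}^T\nabla^2\mathcal{L}(\bm{\xi})\bm{u}\ge\tfrac{|\mathcal{A}|\kappa}{2ns}\|\bm{u}_{\mathcal{S}}\|_1^2$. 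Cancelling one factor of $\|\bm{u}_{\mathcal{S}}\|_1$ yields $\|\bm{u}_{\mathcal{S}}\|_1\le C\,ns\lambda/(|\mathcal{A}|\kappa)$, hence $\|\bm{u}\|_1\le4\|\bm{u}_{\mathcal{S}}\|_1\le 96\,ns\lambda/(|\mathcal{A}|\kappa)$, with the explicit constant $96$ following from carrying the numerical factors through these inequalities. A final union bound on $\mathcal{E}_{grad}^c\cup\mathcal{E}_{RE}^c$ gives the stated probability.

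The main obstacle is exactly the non-iid structure: the compatibility/restricted-eigenvalue machinery (and Lemma \ref{lemma:2.8}) only applies to iid samples, yet the error bound must be stated through the full-sample log-likelihood $\mathcal{L}$ and its Hessian $\nabla^2\mathcal{L}$. The resolution is the matrix-monotonicity step of Lemma \ref{lemma:2.9}: because A.5 forces $f''\ge0$, every rank-one summand $\bm{z}_j\bm{z}_j^T$ of $\nabla^2\mathcal{L}$ is positive semidefinite, so restricting the quadratic form to the iid block $\mathcal{A}$ costs only the factor $|\mathcal{A}|/n$, which then surfaces as the $n/|\mathcal{A}|$ inflation in the final bound. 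The only other point requiring care is that the Taylor midpoint $\bm{\xi}$ stays in $\{\|\bm{\beta}\|_1\le b\}$ where A.4 is posited, which follows from $\|\bm{\beta}^{lasso}\|_1,\|\bm{\beta}^{true}\|_1\le b$ (A.1) and convexity.
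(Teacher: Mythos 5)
Your proposal is correct and follows essentially the same route as the paper's proof: the basic inequality plus the gradient event $\{\|\nabla\mathcal{L}(\bm{\beta}^{true})\|_\infty\le\lambda/2\}$ (Hoeffding with a union bound over $d$ coordinates), the cone condition, Lemma \ref{lemma:2.8} on the iid subsample $\mathcal{A}$ combined with Lemma \ref{lemma:2.9} to inflate by $n/|\mathcal{A}|$, and the cancellation of one factor of $\|\bm{u}_{\mathcal{S}}\|_1$ yielding the constant $96$. No substantive differences to report.
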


\begin{proof}{Proof of lemma ~\ref{lemma:2.10}}
	Let $\mathcal{L}_{\mathcal{A}}(\beta)$ be the loss function only includes samples in $\mathcal{A}$. Under assumption {\bf A.4}, we have
	\begin{align}
	\frac{\kappa}{s}\|\bm{u}_{\mathcal{S}}\|_1^2\le \bm{u}^T\mathbbm{E}[\nabla^2\mathcal{L}_{\mathcal{A}}(\bm{\xi})]\bm{u}\label{eq:lemma:2.10_1},
	\end{align}
	for all $u$ such that $\|\bm{u}_{\mathcal{S}^c}\|_1\le 3\|\bm{u}_{\mathcal{S}}\|_1$. The following result follows from \eqref{eq:lemma:2.10_1} and Lemma \ref{lemma:2.8}:
	\begin{align}
	\mathbbm{P}\left(\frac{\kappa}{2s}\|\bm{u}_{\mathcal{S}}\|_1^2\le \bm{u}^T\nabla^2\mathcal{L}_{\mathcal{A}}(\bm{\xi})\bm{u}\right)\ge 1-\exp(-C_1|\mathcal{A}|),
	\end{align}
	where $C_1$ is defined in Lemma \ref{lemma:2.8}. From Lemma \ref{lemma:2.9}, the follow inequality holds.
	\begin{align}
	\mathbbm{P}\left(               \frac{|\mathcal{A}|\kappa}{2ns}\|\bm{u}_{\mathcal{S}}\|_1^2\le \bm{u}^T\nabla^2\mathcal{L}(\bm{\xi})\bm{u}\right)\ge 1-\exp\left(-C_1|\mathcal{A}|\right)\label{eq:lemma:2.10_1.5}
	\end{align}
	holds for all $u$ such that $\|\bm{u}_{\mathcal{S}^c}\|_1\le 3\|\bm{u}_{\mathcal{S}}\|_1$.

	Since $\bm{\beta}^{lasso}$ is the optimal solution to the Lasso problem, we can ensure the following inequality:
	\begin{align}
	\mathcal{L}(\bm{\beta}^{lasso})+\lambda\|\bm{\beta}^{lasso}\|_1&\le \mathcal{L}(\bm{\beta}^{true})+\lambda\|\bm{\beta}^{true}\|_1\notag\\
	\mathcal{L}(\bm{\beta}^{lasso})-\mathcal{L}(\bm{\beta}^{true})+\lambda\|\bm{\beta}^{lasso}\|_1&\le \lambda\|\bm{\beta}^{true}\|_1\label{eq:lemma:2.10_1.6}\\
	\nabla\mathcal{L}(\bm{\beta}^{true})^T(\bm{\beta}^{lasso}-\bm{\beta}^{true})+\lambda\|\bm{\beta}^{lasso}\|_1&\le \lambda\|\bm{\beta}^{true}\|_1\label{eq:lemma:2.10_2}\\
	-\|\nabla\mathcal{L}(\bm{\beta}^{true})\|_{\infty}\|\bm{\beta}^{lasso}-\bm{\beta}^{true}\|_1+\lambda\|\bm{\beta}^{lasso}\|_1&\le \lambda\|\bm{\beta}^{true}\|_1\label{eq:lemma:2.10_3},
	\end{align}
	where \eqref{eq:lemma:2.10_2} uses the convexity of $\mathcal{L}(\bm{\beta}^{lasso})$. Denote event $\mathcal{E}_0$ as follow:
	\begin{align}
	\mathcal{E}_0 = \left\{\|\nabla\mathcal{L}(\bm{\beta}^{true})\|_{\infty}<\frac{1}{2}\lambda\right\}\label{eq:event_0}.
	\end{align}
	Under $\mathcal{E}_0$, \eqref{eq:lemma:2.10_3} can be further simplified into:
	\begin{align}
	-\frac{1}{2}\lambda\|\bm{\beta}^{lasso}-\bm{\beta}^{true}\|_1+\lambda\|\bm{\beta}^{lasso}\|_1&\le \lambda\|\bm{\beta}^{true}\|_1\notag\\
	-\frac{1}{2}\|\bm{\beta}^{lasso}-\bm{\beta}^{true}\|_1+\|\bm{\beta}^{lasso}\|_1&\le \|\bm{\beta}^{true}\|_1\notag\\
	-\frac{1}{2}\|\bm{\beta}^{lasso}_S-\bm{\beta}^{true}_S\|_1-\frac{1}{2}\|\bm{\beta}^{lasso}_{S^c}-\bm{\beta}^{true}_{S^c}\|_1+\|\bm{\beta}^{lasso}_S\|_1+\|\bm{\beta}^{lasso}_{S^c}\|_1&\le \|\bm{\beta}^{true}_S\|_1+ \|\bm{\beta}^{true}_{S^c}\|_1.
	\end{align}
	As $\bm{\beta}^{true}_{S^c} =\bm{0}$ by definition,  we then have:
	\begin{align}
	-\frac{1}{2}\|\bm{\beta}^{lasso}_S-\bm{\beta}^{true}_S\|_1-\frac{1}{2}\|\bm{\beta}^{lasso}_{S^c}-\bm{\beta}^{true}_{S^c}\|_1+\|\bm{\beta}^{lasso}_S\|_1+\|\bm{\beta}^{lasso}_{S^c}-\bm{0}\|_1\le \|\bm{\beta}^{true}_S\|_1+0\notag\\
	-\frac{1}{2}\|\bm{\beta}^{lasso}_S-\bm{\beta}^{true}_S\|_1-\frac{1}{2}\|\bm{\beta}^{lasso}_{S^c}-\bm{\beta}^{true}_{S^c}\|_1+\|\bm{\beta}^{lasso}_S\|_1+\|\bm{\beta}^{lasso}_{S^c}-\bm{\beta}^{true}_{S^c}\|_1\le \|\bm{\beta}^{true}_S\|_1+0\label{eq:lemma:2.10_4}
	\end{align}
	Rearranging \eqref{eq:lemma:2.10_4}, we will have :
	\begin{align}
	\frac{1}{2}\|\bm{\beta}^{lasso}_{S^c}-\bm{\beta}^{true}_{S^c}\|_1&\le \frac{1}{2}\|\bm{\beta}^{lasso}_S-\bm{\beta}^{true}_S\|_1+(\|\bm{\beta}^{true}_S\|_1-\|\bm{\beta}^{lasso}_S\|_1)\notag\\
	&\le \frac{1}{2}\|\bm{\beta}^{lasso}_S-\bm{\beta}^{true}_S\|_1+(\|\bm{\beta}^{lasso}_S-\bm{\beta}^{true}_S\|_1)\notag\\
	&\le \frac{3}{2}\|\bm{\beta}^{lasso}_S-\bm{\beta}^{true}_S\|_1\notag\\
	\Rightarrow     \|\bm{\beta}^{lasso}_{S^c}-\bm{\beta}^{true}_{S^c}\|_1&\le3\|\bm{\beta}^{lasso}_S-\bm{\beta}^{true}_S\|_1\label{eq:lemma:2.10_5}
	\end{align}
	Denote $\bm{u} = \bm{\beta}^{lasso}-\bm{\beta}^{true}$. Then, we have $\|\bm{u}_{S^c}\|_1\le 3\|\bm{u}_S\|_1$. Connecting \eqref{eq:lemma:2.10_1.5}, we can obtain
	\begin{align}
	\mathbbm{P}\left((\bm{\beta}^{lasso}-\bm{\beta}^{true})^T\nabla^2\mathcal{L}(\bm{\xi})(\bm{\beta}^{lasso}-\bm{\beta}^{true})\ge \frac{|\mathcal{A}|\kappa}{2ns} \|\bm{\beta}^{lasso}_S-\bm{\beta}^{true}_S\|_1^2\right)\ge 1-\exp\left(-C_1|\mathcal{A}|\right).\label{eq:lemma:2.10_6}
	\end{align}
	Now, we turn back to \eqref{eq:lemma:2.10_1.6} and use the second-order Taylor expansion on $\mathcal{L}(\bm{\beta}^{lasso})$ at $\bm{\beta}^{true}$:
	\begin{align}
	\mathcal{L}(\bm{\beta}^{lasso})-\mathcal{L}(\bm{\beta}^{true})+\lambda\|\bm{\beta}^{lasso}\|_1\le \lambda\|\bm{\beta}^{true}\|_1\notag\\
	\nabla\mathcal{L}(\bm{\beta}^{true})^T(\bm{\beta}^{lasso}-\bm{\beta}^{true})+\frac{1}{2}(\bm{\beta}^{lasso}-\bm{\beta}^{true})^T\nabla^2\mathcal{L}(\xi)(\bm{\beta}^{lasso}-\bm{\beta}^{true})+\lambda\|\bm{\beta}^{lasso}\|_1\le \lambda\|\bm{\beta}^{true}\|_1\label{eq:lemma:2.10_1.7}
	\end{align}
	Combining \eqref{eq:lemma:2.10_1.6} and \eqref{eq:lemma:2.10_1.7}, we know that with probability $1-\exp(-C_1n)$, the follow results hold:
	\begin{align}
	&-\|\nabla\mathcal{L}(\bm{\beta}^{true})\|_{\infty}\|(\bm{\beta}^{lasso}-\bm{\beta}^{true})\|_1+\frac{|\mathcal{A}|\kappa}{4ns}\|\bm{\beta}^{lasso}-\bm{\beta}^{true}\|^2_1+\lambda\|\bm{\beta}^{lasso}\|_1\le \lambda\|\bm{\beta}^{true}\|_1\notag\\
	\Rightarrow&-\|\nabla\mathcal{L}(\bm{\beta}^{true})\|_{\infty}\|(\bm{\beta}^{lasso}-\bm{\beta}^{true})\|_1+\frac{|\mathcal{A}|\kappa}{4ns}\|\bm{\beta}^{lasso}-\bm{\beta}^{true}\|^2_1\le \lambda(\|\bm{\beta}^{true}\|_1-\|\bm{\beta}^{lasso}\|_1)\notag\\
	\Rightarrow&-\|\nabla\mathcal{L}(\bm{\beta}^{true})\|_{\infty}\|(\bm{\beta}^{lasso}-\bm{\beta}^{true})\|_1+\frac{|\mathcal{A}|\kappa}{4ns}\|\bm{\beta}^{lasso}_S-\bm{\beta}^{true}_S\|^2_1\le \lambda\|\bm{\beta}^{true}-\bm{\beta}^{lasso}\|_1
	\end{align}
	Under event $\mathcal{E}_0$, we have:
	\begin{align}
	&-\frac{1}{2}\lambda\|(\bm{\beta}^{lasso}-\bm{\beta}^{true})\|_1+\frac{|\mathcal{A}|\kappa}{4ns}\|\bm{\beta}^{lasso}_{\mathcal{S}}-\bm{\beta}^{true}_S\|^2_1\le \lambda\|\bm{\beta}^{true}-\bm{\beta}^{lasso}\|_1\notag\\
	\Rightarrow&\frac{|\mathcal{A}|\kappa}{4ns}\|\bm{\beta}^{lasso}_S-\bm{\beta}^{true}_S\|^2_1\le \frac{3}{2}\lambda\|\bm{\beta}^{true}-\bm{\beta}^{lasso}\|_1\notag\\
	\Rightarrow&\frac{|\mathcal{A}|\kappa}{4ns}\|\bm{\beta}^{lasso}_S-\bm{\beta}^{true}_S\|^2_1\le 6\lambda\|\bm{\beta}^{true}_{\mathcal{S}}-\bm{\beta}^{lasso}_{\mathcal{S}}\|_1\label{eq:lemma:2.10.8}\\
	\Rightarrow&\|\bm{\beta}^{lasso}_S-\bm{\beta}^{true}_S\|_1\le \frac{24ns}{|\mathcal{A}|\kappa}\lambda\notag\\
	\Rightarrow&\|\bm{\beta}^{lasso}-\bm{\beta}^{true}\|_1\le \frac{96ns}{|\mathcal{A}|\kappa}\lambda\label{eq:lemma:2.10_9},
	\end{align}
	where \eqref{eq:lemma:2.10.8} and \eqref{eq:lemma:2.10_9} use $\|\bm{\beta}^{lasso}_{S^c}-\bm{\beta}^{true}_{S^c}\|_1\le3\|\bm{\beta}^{lasso}_S-\bm{\beta}^{true}_S\|_1$ in \eqref{eq:lemma:2.10_5}.

	Now, we assess the probability of event $\mathcal{E}_0$. The $i$-th element of $\nabla\mathcal{L}(\bm{\beta}^{ture})$ is $\frac{1}{n}\sum_{i=1}^nx_{ji}f^{'}(r_i|\bm{x}_j^{T}\bm{\beta}^{true})$. Denote $X_{ji}= x_{ji}f^{'}(r_i|\bm{x}_j^{T}\bm{\beta}^{true})$ for $j=1,2,...n$. Under assumptions {\bf A.1} and {\bf A.5} 
	, $X_{ji}$ are $x_{\max}\sigma$-subguassian random variables with mean $0$. We can use Hoeffding inequality to build the following probability bound.
	\begin{align}
	&\mathbbm{P}(|\frac{1}{n}\sum_{i=1}^nx_{ji}f^{'}(r_i|\bm{x}_j^{T}\bm{\beta}^{true})
	|\ge  t)\le \exp\left(-\frac{nt^2}{2\sigma^2x^{2}_{\max}}\right)\notag\\
\Rightarrow	&\mathbbm{P}\left(\max_{j}|\frac{1}{n}\sum_{i=1}^nx_{ji}f^{'}(r_i|\bm{x}_j^{T}\bm{\beta}^{true})
	|\le t\right)\ge 1-\sum_{j=1}^p\mathbbm{P}\left(|\frac{1}{n}\sum_{i=1}^nx_{ji}f^{'}(r_i|\bm{x}_j^{T}\bm{\beta}^{true})
	|\ge t\right)\notag\\
	&\ge 1- d\exp\left(-\frac{nt^2}{2\sigma^2x^{2}_{\max}}\right)\label{eq:lemma:2.10_111}
	\end{align}
	Set $t=\frac{1}{2}\lambda$, and we will have event $\mathcal{E}_0$ defined in \eqref{eq:event_0} holds with at least probability $1-\exp(-\frac{n\lambda^2}{8x_{\max}^2}+\log d)$. 
	The desirable result follows by \eqref{eq:lemma:2.10_6} and \eqref{eq:lemma:2.10_111}.

\end{proof}

\vspace{15mm}
		\begin{lemma}\label{lemma:2.13}
			Let $t_0=2C_0|\mathcal{K}|$, $C_0=\max\{10,16/p^*\}$, and $T\ge\max\{(t_0+1)^2/e^2-1,e\}$. Under assumptions {\bf A. 3} and {\bf A. 4}, the following statements hold true:
			\begin{enumerate}
				\item $\mathbbm{P}\left\{n<\frac{1}{2}C_0(T+1)\textrm{ or } n> 6C_0\log(T+1)\right\}\le \frac{2}{T+1}$
				\item $\mathbbm{P}\left\{|\mathcal{A}|<\frac{1}{4}p^*C_0\log(T+1)\right\}\le \frac{1}{T+1}$
				\item $\mathbbm{P}\left\{|\mathcal{A}|/n< \frac{1}{24}p^*\right\}\le\frac{3}{T+1}$
			\end{enumerate}
		\end{lemma}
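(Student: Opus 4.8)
The plan is to dispatch the three statements in order, each of them reducing to a multiplicative Chernoff bound combined with the elementary logarithmic estimates already used in the proof of Proposition~\ref{MW_Sampling}. Statement~1 is essentially Proposition~\ref{MW_Sampling} restated for $t_0=2C_0|\mathcal{K}|$: that proposition gives $C_0\bigl(1+\log(T+1)-\log(t_0+1)\bigr)\le n\le 3C_0\bigl(1+\log T-\log t_0\bigr)$ with probability at least $1-2/(T+1)$, and I would simplify the two endpoints. The hypothesis $T+1\ge (t_0+1)^2/e^2$ yields $\log(t_0+1)\le\tfrac12\log(T+1)+1$, hence $1+\log(T+1)-\log(t_0+1)\ge\tfrac12\log(T+1)$; and since $t_0\ge 1$ and $(T+1)^2\ge eT$ for all $T>0$, one has $1+\log T-\log t_0\le 2\log(T+1)$. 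Thus, off the bad event, $\tfrac12 C_0\log(T+1)\le n\le 6C_0\log(T+1)$, which is Statement~1.

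For Statement~2, $\mathcal{A}$ denotes the sub-collection of the random samples assigned to arm $k$ whose covariate lies in $U_k$. In the $\epsilon$-decay random sampling method the draw indicator $\mathcal{D}_t$ and the uniform arm choice are generated independently of $\bm{x}_t$ and across time steps, so $|\mathcal{A}|=\sum_t Y_t$ with $Y_t=\mathbbm{1}(\mathcal{D}_t=1,\ \pi_t=k,\ \bm{x}_t\in U_k)$ a family of independent Bernoulli variables, and assumption~\textbf{A.3} gives $\mathbb{E}[Y_t]\ge\min\{1,t_0/t\}\cdot p^*/|\mathcal{K}|$, hence $\mathbb{E}[|\mathcal{A}|]\ge p^*\,\mathbb{E}[n_k]$. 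From the proof of Proposition~\ref{MW_Sampling}, $\mathbb{E}[n_k]=\tfrac{1}{|\mathcal{K}|}\sum_t\min\{1,t_0/t\}\ge 2C_0\bigl(1+\log(T+1)-\log(t_0+1)\bigr)\ge C_0\log(T+1)$ under the stated hypotheses, so $\mathbb{E}[|\mathcal{A}|]\ge p^*C_0\log(T+1)$. A multiplicative Chernoff bound applied directly to $|\mathcal{A}|$ with relative deviation $\tfrac12$ then gives
\begin{align}
\mathbb{P}\Bigl(|\mathcal{A}|<\tfrac14 p^*C_0\log(T+1)\Bigr)\le\mathbb{P}\Bigl(|\mathcal{A}|<\tfrac12\mathbb{E}[|\mathcal{A}|]\Bigr)\le\exp\Bigl(-\tfrac18\mathbb{E}[|\mathcal{A}|]\Bigr)\le (T+1)^{-p^*C_0/8},\notag
\end{align}
and since $C_0\ge 16/p^*$ forces $p^*C_0/8\ge 2$, the right-hand side is at most $(T+1)^{-2}\le 1/(T+1)$.

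Statement~3 follows by intersecting the high-probability events of Statements~1 and~2: on $\{n\le 6C_0\log(T+1)\}\cap\{|\mathcal{A}|\ge\tfrac14 p^*C_0\log(T+1)\}$ we obtain $|\mathcal{A}|/n\ge (\tfrac14 p^*)/6=p^*/24$, and a union bound over the two complements gives a total failure probability of at most $2/(T+1)+1/(T+1)=3/(T+1)$. There is no genuinely deep obstacle here; the two points that need care are (i) checking that the draw and arm-assignment indicators are independent of the covariates, so that $|\mathcal{A}|$ is really a sum of independent Bernoullis and the Chernoff bound can be run on $|\mathcal{A}|$ itself --- this is what keeps the tail in Statement~2 at $1/(T+1)$ rather than inheriting the $2/(T+1)$ of Statement~1 --- and (ii) threading the hypotheses on $t_0$ and $T$ through the logarithmic simplifications so that the numerical constants $\tfrac14 p^*$, $6$, $8$ and $24$ line up.
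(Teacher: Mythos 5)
Your proof is correct and takes essentially the same route as the paper's: Part 1 is Proposition~\ref{MW_Sampling} plus the identical logarithmic simplifications of the two endpoints, Part 2 is the same multiplicative Chernoff bound on the number of arm-$k$ random draws whose covariates land in $U_k$ (you fold the draw, arm, and $U_k$ indicators into one sum of independent Bernoullis, which handles the randomness of $n$ slightly more cleanly than the paper's counting among the $n$ random samples, but yields the same estimate), and Part 3 is the same intersection plus union bound giving $3/(T+1)$.
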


\begin{proof} {Proof of ~\ref{lemma:2.13}}
	From Proposition \ref{MW_Sampling}, 
	we have
	\begin{align}
	&\mathbbm{P}\left(C_0(1+\log(T+1)-\log(t_0))\le n\le 3C_0(1+\log(T)-\log(t_0))\right)\ge 1-\frac{2}{T+1}\label{eq:lemma:2.13_0}.
	\end{align}
	In assumption {\bf A. 4}, we only assume that for $\bm{x}\in U_k,\ k\in\mathcal{K}$, RE condition will be held. And under Assumption {\bf A. 3}, we have $	\mathbbm{P}(\bm{x}\in U_k)\ge p^*$.
	Thus, among all $n$ samples, the expected number of samples belong to $U_k$ will be lower bounded by:
	\begin{align}
	\mathbbm{E}[\mathbbm{1}(\bm{x}\in U_k)]\ge p^*C_0(1+\log(T+1)-\log(t_0+1)).\label{eq:lemma:2.13_1}
	\end{align}
	Since $T>(t_0+1)^2/e^2-1$ implies $\frac{1}{2}\log(T+1)>\log(t_0+1)-1$. \eqref{eq:lemma:2.13_1} can be simplified into the following inequality:
	\begin{align}
	\mathbbm{E}[\sum_{i=1}^n\mathbbm{1}(x_i\in U_k)]&\ge p^*C_0(1+\log(T+1)-\log(t_0+1))\ge \frac{1}{2}p^*C_0\log(T+1).\label{eq:lemma:2.13_2}
	\end{align}
	We apply the Chernoff inequality on $\sum_{i=1}^n\mathbbm{1}(x_i\in U)$:
	\begin{align}
	\mathbbm{P}\left(\sum_{i=1}^n\mathbbm{1}(\bm{x}_i\in U_k)<\frac{1}{2}\mathbbm{E}[\sum_{i=1}^n\mathbbm{1}(\bm{x}_i\in U_k)]\right)\le\exp\left(-\frac{1}{8}\mathbbm{E}[\sum_{i=1}^n\mathbbm{1}(\bm{x}_i\in U_k)]\right)\notag\\
	\Rightarrow\mathbbm{P}\left(\sum_{i=1}^n\mathbbm{1}(\bm{x}_i\in U_k)<\frac{1}{4}p^*C_0\log(T+1)\right)\le \exp\left(-\frac{1}{16}p^*C_0\log(T+1)\right),\label{eq:lemma:2.13_3}
	\end{align}
	where \eqref{eq:lemma:2.13_3} uses \eqref{eq:lemma:2.13_2}.\\
	As we have $T\ge e$, the following result holds:
	\begin{align}
	3C_0(1+\log(T)-\log(t_0))&\le 3C_0(\log(T)+\log(T)-0)\notag\\
	&\le 6C_0\log(T)\notag\\
	&<6C_0\log(T+1)\label{eq:lemma:2.13_5}
	\end{align}
	From $T\ge (t_0+1)^2/e^2+1\Rightarrow \frac{1}{2}\log(T+1)-\log(t_0+1)\ge -1$, we have:
	\begin{align}
	C_0(1+\log(T+1)-\log(t_0+1))&= C_0(1+\frac{1}{2}\log(T+1)+\frac{1}{2}\log(T+1)-\log(t_0+1))\notag\\
	&\ge C_0(1+\frac{1}{2}\log(T+1)-1)\notag\\
	&=\frac{1}{2}C_0\log(T+1)\label{eq:lemma:2.13_6}.
	\end{align}
	Further, from \eqref{eq:lemma:2.13_5} and \eqref{eq:lemma:2.13_6}, we have
	\begin{align}
	&\left\{n<\frac{1}{2}C_0\log(T+1)\textrm{ or } n>6C_0\log(T+1)\right\}\notag\\
	=&\left\{n<\frac{1}{2}C_0\log(T+1)\right\}\cup\left\{n>6C_0\log(T+1)\right\}\notag\\
	\subseteq&\left\{n<C_0(1+\log(T+1)-\log(t_0+1))\right\}\cup\left\{ n>3C_0(1+\log(T)-\log(t_0))\right\}\notag\\
	=&\left\{n<C_0(1+\log(T+1)-\log(t_0+1))\textrm{ or } n>3C_0(1+\log(T)-\log(t_0))\right\}\label{eq:lemma:2.13_7}.
	\end{align}
	The following inequality can be obtained by combining \eqref{eq:lemma:2.13_7} and \eqref{eq:lemma:2.13_0}:
	\begin{align}
	&\mathbbm{P}\left\{n<\frac{1}{2}C_0\log(T+1)\textrm{ or } n>6C_0\log(T+1)\right\}\notag\\
	\le& \left\{n<C_0(1+\log(T+1)-\log(t_0+1))\textrm{ or } n>3C_0(1+\log(T)-\log(t_0))\right\}\notag\\
	\le& \frac{2}{T+1}\label{eq:lemma:2.13_8}
	\end{align}

	The second part of Lemma \ref{lemma:2.13} can be proved by \eqref{eq:lemma:2.13_3} with $C_0\ge 16/p^*$:
	\begin{align}
	\mathbbm{P}\left(|\mathcal{A}| = \sum_{i=1}^n\mathbbm{1}(\bm{x}_i\in U_k)<\frac{1}{4}p^*C_0\log(T+1)\right)&\le \exp\left(-\frac{1}{16}p^*C_0\log(T+1)\right)\notag\\
	&\le \exp\left(-\log(T+1)\right)=\frac{1}{T+1}.\label{eq:lemma:2.13_9}
	\end{align}

	Finally, we will show part 3. Notice that the follow result hold:
	\begin{align}
	\left\{|\mathcal{A}|/n\ge  \frac{1}{24}p^*\right\}&\supseteq\left\{|\mathcal{A}|\ge \frac{1}{4}C_0p^*\log(T+1)\right\}\cap\left\{n\le 6C_0\log(T+1)\right\}\notag\\
	&= \left(\left\{|\mathcal{A}|< \frac{1}{4}C_0p^*\log(T+1)\right\}\cup\left\{n> 6C_0\log(T+1)\right\}\right)^c.
	\end{align}
	Hence we can obtain
	\begin{align}
	\mathbbm{P}\left\{|\mathcal{A}|/n\ge  \frac{1}{24}p^*\right\}&\ge \mathbbm{P}\left\{\left(\left\{|\mathcal{A}|< \frac{1}{4}C_0p^*\log(T+1)\right\}\cup\left\{n> 6C_0\log(T+1)\right\}\right)^c\right\}\notag\\
	&=1-\mathbbm{P}\left\{\left\{|\mathcal{A}|< \frac{1}{4}C_0p^*\log(T+1)\right\}\cup\left\{n> 6C_0\log(T+1)\right\}\right\}\notag\\
	&=1-\mathbbm{P}\left\{|\mathcal{A}|< \frac{1}{4}C_0p^*\log(T+1)\right\}-\mathbbm{P}\left\{n> 6C_0\log(T+1)\right\}.\label{eq:lemma:2.13_10}
	\end{align}
	Combining \eqref{eq:lemma:2.13_7}, \eqref{eq:lemma:2.13_8}, and \eqref{eq:lemma:2.13_10}, we the following result:
	\begin{align}
	\mathbbm{P}\left\{|\mathcal{A}|/n<  \frac{1}{24}p^*\right\}&\le \mathbbm{P}\left\{|\mathcal{A}|< \frac{1}{4}C_0p^*\log(T+1)\right\}+\mathbbm{P}\left\{n> 6C_0\log(T+1)\right\}\notag\\
	&=      \frac{2}{T+1}+ \exp\left(-\frac{1}{16}p^*C_0\log(T+1)\right) = \frac{3}{T+1}.
	\end{align}

\end{proof}


\vspace{15mm}
\begin{lemma}\label{corollary:2.14}
	Let $t_0 = 2C_0|\mathcal{K}|$, $T\ge\max\{(t_0+1)^2/e^2-1,e\}$, $\lambda = C_5\sqrt{1+\frac{\log d}{\log(T+1)}}$, and  $a>\frac{2304s}{p^*\kappa}$. If assumptions {\bf A.1},{\bf A.3},{\bf A.4} and {\bf A.5} hold, we have:
	\begin{align}
	\mathbbm{P}\left(\|\bm{\beta}^{oracle}-\bm{\beta}^{true}\|_1\le  \min\left\{\frac{1}{\sigma_2x_{\max}},\frac{h}{4e\sigma_2R_{\max}x_{\max}}\right\}\right)\ge 1-\frac{15}{T+1},\label{eq:corollary:2.14_0}
	\end{align}
	where  $$       C_0 = \max\left\{10,\frac{16}{p^*},\frac{16}{p^*C_1}, \frac{x_{\max}^2}{3C_5^2}, \frac{x_{\max}^2}{3C_5^2\left((\frac{1}{4}-\frac{576s}{p^*\kappa a})\min\left\{1,\frac{\mu_0p^*}{192sx_{\max}^2}\right\}\right)^2}, \frac{64s\sigma_2x_{\max}^2\log s}{p^*\mu_0},\frac{\sigma^2x_{\max}^2\log s}{3t^2}\right\},$$
	$t\le \min\left\{\frac{\mu_0p^*\sqrt{\tilde{C}_2\lambda}}{48},\frac{p^*\mu_0}{48\sigma_2\sqrt{s}x_{\max}},\frac{hp^*\mu_0}{192e\sigma_2\sqrt{s}R_{\max}x_{\max}}\right\}$, $\tilde{C}_2 = \frac{\mu_0p^*}{2\sigma_2sx_{\max}^3(\mu_0p^*+48sx_{\max}^2)}$ and $C_5 = \frac{\bm{\beta}_{\min}p^*\kappa}{(2304s+ap^*\kappa)\sqrt{1+\log d}} $
\end{lemma}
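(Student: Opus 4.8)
The plan is to feed the sample-size concentration bounds of Lemma~\ref{lemma:2.13} into the non-iid MCP guarantee of Proposition~\ref{MW_Non_iid_MCP} and then pass from the resulting $\ell_2$ error bound to the claimed $\ell_1$ bound using sparsity. On the success event of Proposition~\ref{MW_Non_iid_MCP} the 2sWL solution coincides with the oracle estimator, so $\bm{\beta}^{MCP}-\bm{\beta}^{true}$ is supported on $\mathcal{S}$ with $|\mathcal{S}|\le s$ and $\|\bm{\beta}^{MCP}-\bm{\beta}^{true}\|_1\le\sqrt{s}\,\|\bm{\beta}^{MCP}-\bm{\beta}^{true}\|_2$; it therefore suffices to control the $\ell_2$ error. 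I would use inequality~\eqref{eq:MW_Non_iid_MCP:1} of Proposition~\ref{MW_Non_iid_MCP} (the version with the free slack $\zeta$) rather than~\eqref{eq:MW_Non_iid_MCP:2}, since the target here is a fixed, non-vanishing threshold and \eqref{eq:MW_Non_iid_MCP:2} would require $|\mathcal{A}|\ge 2s^2x_{\max}^2/\mu_0$, which is incompatible with the linear-in-$s$ growth of $C_0$ in the statement. First I would isolate the ``good sampling'' event
\[
\mathcal{G}\doteq\Big\{\tfrac12 C_0\log(T{+}1)\le n\le 6C_0\log(T{+}1)\Big\}\cap\Big\{|\mathcal{A}|\ge\tfrac14 p^*C_0\log(T{+}1)\Big\}\cap\Big\{\tfrac{n}{|\mathcal{A}|}\le\tfrac{24}{p^*}\Big\},
\]
where $n$ is the random-sample count for the arm and $|\mathcal{A}|$ the number of those samples lying in $U_k$; by the three parts of Lemma~\ref{lemma:2.13} and a union bound, $\mathbbm{P}(\mathcal{G}^{c})\le 6/(T+1)$.

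Second, working on $\mathcal{G}$ I would verify the hypotheses of Proposition~\ref{MW_Non_iid_MCP}. From $n/|\mathcal{A}|\le 24/p^*$ we get $96ns/(\kappa|\mathcal{A}|)\le 2304s/(p^*\kappa)<a$, so the condition $a>96ns/(\kappa|\mathcal{A}|)$ holds; and since $T\ge e$ forces $\log(T{+}1)\ge 1$, $\lambda=C_5\sqrt{1+\log d/\log(T{+}1)}\le C_5\sqrt{1+\log d}$, so the prescribed $C_5=\beta_{\min}p^*\kappa/((2304s+ap^*\kappa)\sqrt{1+\log d})$ yields $\big(96ns/(\kappa|\mathcal{A}|)+a\big)\lambda\le\big(2304s/(p^*\kappa)+a\big)C_5\sqrt{1+\log d}=\beta_{\min}$, which is the required margin condition. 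For the slack parameter I would take $\zeta\doteq t$ as in the statement: the first term $\mu_0p^*\sqrt{\tilde C_2\lambda}/48$ in the minimum guarantees $\zeta\le\mu_0|\mathcal{A}|\sqrt{C_2\lambda}/(2n)$ once the free constant $C_2$ of Proposition~\ref{MW_Non_iid_MCP} is fixed to $\tilde C_2$ and $|\mathcal{A}|/n\ge p^*/24$ is used again. Proposition~\ref{MW_Non_iid_MCP} then gives $\|\bm{\beta}^{MCP}-\bm{\beta}^{true}\|_2\le 2n\zeta/(|\mathcal{A}|\mu_0)\le 48\zeta/(p^*\mu_0)$, hence $\|\bm{\beta}^{MCP}-\bm{\beta}^{true}\|_1\le 48\sqrt{s}\,\zeta/(p^*\mu_0)$, and the remaining two terms of the minimum defining $t$ are exactly what forces this last quantity below $\min\{1/(\sigma_2x_{\max}),\,h/(4e\sigma_2R_{\max}x_{\max})\}$.

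Third, I would account for the residual failure probability $\delta_2(|\mathcal{A}|/n,\lambda)+\delta_3(|\mathcal{A}|)+\delta_4(|\mathcal{A}|/n,\zeta)$ incurred by Proposition~\ref{MW_Non_iid_MCP} on $\mathcal{G}$. There $n\lambda^2\ge\tfrac12 C_0C_5^2(\log(T{+}1)+\log d)$ and $|\mathcal{A}|\ge\tfrac14 p^*C_0\log(T{+}1)$, so by the displayed lower bounds on $C_0$ and on $t$ every exponential term in $\delta_2,\delta_3,\delta_4$ (each carrying a $d$- or $s$-prefactor or none) is bounded by a constant multiple of $1/(T+1)$, with the multipliers summing to at most $9$; thus the residual failure probability is $\le 9/(T+1)$. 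Combining with $\mathbbm{P}(\mathcal{G}^c)\le 6/(T+1)$ yields the claimed $1-15/(T+1)$. The main obstacle is precisely this last bookkeeping: the lower bounds defining $C_0$, the three-way minimum defining $t$, and the identification $C_2=\tilde C_2$ must be chosen simultaneously so that (i) all hypotheses of Proposition~\ref{MW_Non_iid_MCP} hold on $\mathcal{G}$, (ii) the $\ell_1$ error falls below the fixed non-vanishing threshold, and (iii) each tail term is $O(1/(T+1))$; each verification is routine but the constants must be tracked with care (including reconciling the $\sigma_2$/$\sigma_3$ that appear in $\delta_3$ and in the constant $C_1$).
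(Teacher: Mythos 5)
Your proposal is correct and follows essentially the same route as the paper: the paper likewise feeds the three sampling bounds of Lemma~\ref{lemma:2.13} (with $|\mathcal{A}|/n\ge p^*/24$) into the slack-parameter version of the non-iid MCP bound (it re-derives the $\delta_1,\dots,\delta_5$ terms inline via Lemmas~\ref{lemma:2.10}--\ref{lemma:2.11} and \ref{lemma:2.17} rather than citing Proposition~\ref{MW_Non_iid_MCP}, and verifies the hypotheses through the same choices of $C_5$, $a$, $\tilde C_2\le C_2$ and the three-way minimum defining $t$), then converts to the $\ell_1$ bound by $\sqrt{s}$ on the oracle support and checks each tail term is $O(1/(T+1))$ from the lower bounds on $C_0$. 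The only differences are cosmetic bookkeeping of how the $15/(T+1)$ is split between the sampling event and the estimator tails, which the paper itself handles loosely.
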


\begin{proof} {Proof of ~ Lemma\ \ref{corollary:2.14}}
	Using Lemma \ref{lemma:2.13}, $t_0=2C_0|\mathcal{K}|$, $T\ge\max\{(t_0+1)^2/e^2-1,e\}$, and $C_0\ge \{10,16/p^*\}$, we have:
	\begin{align}
	\mathbbm{P}\left\{n\ge \frac{1}{2}C_0\log(T+1)\right\}&\le 1-\frac{2}{T+1}\label{eq:corollary:2.14_1}\\
	\mathbbm{P}\left\{|\mathcal{A}|\ge \frac{1}{4}p^*C_0\log(T+1)\right\}&\ge 1-\frac{1}{T+1}\label{eq:corollary:2.14_2}\\
	\mathbbm{P}\left\{\frac{|\mathcal{A}|}{n}\ge \frac{1}{24}p^*\right\}&\ge \frac{3}{T+1}\label{eq:corollary:2.14_3}.
	\end{align}
	Thus, with probability $1-\frac{3}{T+1}$, we have:
	\begin{align}
	\beta_{\min}&=(\frac{2304s}{p^*\kappa}+a)C_5\sqrt{1+\log d}\ge (\frac{2304s}{p^*\kappa}+a)\lambda \ge (\frac{96ns}{\kappa|\mathcal{A}|}+a)\lambda\notag\\
	a&>\frac{2304s}{p^*\kappa}\ge\frac{96ns}{\kappa|\mathcal{A}|}\notag\\
	\tilde{C}_2&= \frac{\mu_0p^*}{2\sigma_2sx_{\max}^3(\mu_0p^*+48sx_{\max}^2)}\le \frac{\mu_0|\mathcal{A}|}{2\sigma_2sx_{\max}^3(\mu_0|\mathcal{A}|+n2sx_{\max}^2)}=C_2
	\end{align}
	If we require $t\le\frac{\mu_0|\mathcal{A}|\sqrt{\tilde{C_2}\lambda}}{2n}\le\frac{\mu_0|\mathcal{A}|\sqrt{C_2\lambda}}{2n}$, then from the first part in Lemma \ref{lemma:2.11} 
	, we can obtain the following inequality.
	\begin{align}
	\mathbbm{P}\left(\|\bm{\beta}^{oracle}-\bm{\beta}^{true}\|_2\ge \frac{2nt}{|\mathcal{A}|\mu_0}\right)\le \delta_1+\delta_2+\delta_3+\delta_4+\delta_5,
	\end{align}
	where
	\begin{align}
	\delta_1 &= \exp\left(-C_1|\mathcal{A}|\right)\le  \exp\left(-\frac{1}{16}p^*C_0C_1\log(T+1)\right)\\
	\delta_2 &= \exp\left(-\frac{n\lambda^2}{2x_{\max}^2}+\log d\right)\le\exp\left(-\frac{6C_0\log(T+1)\lambda^2}{2x_{\max}^2}+\log d\right) \\
	\delta_3 &= \exp\left(-\frac{n\lambda^2\left((\frac{1}{4}-\frac{24ns}{|\mathcal{A}|\kappa a})\min\left\{1,\frac{\mu_0|\mathcal{A}|}{8snx_{\max}^2}\right\}\right)^2}{2x_{\max}^2}+\log d\right)\notag\\
	&\le \exp\left(-\frac{6C_0\log(T+1)\lambda^2\left((\frac{1}{4}-\frac{576s}{p^*\kappa a})\min\left\{1,\frac{\mu_0p^*}{192sx_{\max}^2}\right\}\right)^2}{2x_{\max}^2}+\log d\right)\\
	\delta_4 &= 2s\exp\left(-\frac{|\mathcal{A}|\mu_0}{4sLx_{\max}^2}\right)\le 2\exp\left(-\frac{\frac{1}{16}p^*C_0\log(T+1)\mu_0}{4sLx_{\max}^2}+\log s\right)\\
	\delta_5 &= \exp\left(-\frac{nt^2}{2\sigma^2x_{\max}^2}\right)\le \exp\left(-\frac{6C_0\log(T+1)t^2}{2\sigma^2x_{\max}^2}+\log s\right).
	\end{align}
	If we require $      C_0 = \max\left\{\frac{16}{p^*C_1}, \frac{x_{\max}^2}{3C_5^2}, \frac{x_{\max}^2}{3C_5^2\left((\frac{1}{4}-\frac{576s}{p^*\kappa a})\min\left\{1,\frac{\mu_0p^*}{192sx_{\max}^2}\right\}\right)^2}, \frac{64sLx_{\max}^2\log s}{p^*\mu_0},\frac{\sigma^2x_{\max}^2\log s}{3t^2}\right\}$ and $\lambda = C_5\sqrt{1+\log d/\log(T+1)}$, then we can verify the following inequalities hold:
	\begin{align}
	\delta_1&\le \exp\left(-\log(T+1)\right)=\frac{1}{T+1}\label{eq:corollary:2.14_4}\\
	\delta_2&\le \exp\left(-\log(T+1)(1+\log d/\log(T+1))+\log d\right)\le \frac{1}{T+1}\label{eq:corollary:2.14_5}\\
	\delta_3&\le \exp\left(-\log(T+1)(1+\log d/\log(T+1))+\log d\right)\le \frac{1}{T+1}\label{eq:corollary:2.14_6}\\
	\delta_4&\le 2\exp\left(-\log s(\log(T+1)-\log e)\right)\le 2\exp\left(-\log(\frac{T+1}{e})\right)\le \frac{6}{T+1}\label{eq:corollary:2.14_7}\\
	\delta_5&\le \exp\left(-\log s(\log(T+1)-\log e)\right)\le \exp\left(-\log(\frac{T+1}{e})\right)\le \frac{3}{T+1}\label{eq:corollary:2.14_8}.
	\end{align}
	Hence, we have
	\begin{align}
	\mathbbm{P}\left(\|\bm{\beta}^{MCP}-\bm{\beta}^{true}\|_2\le \frac{2nt}{|\mathcal{A}|\mu_0}\right)&\ge 1-\frac{15}{T+1}\notag\\
	\Rightarrow \mathbbm{P}\left(\|\bm{\beta}^{MCP}-\bm{\beta}^{true}\|_1\le \frac{2nt\sqrt{s}}{|\mathcal{A}|\mu_0}\right)&\ge 1-\frac{15}{T+1},\label{eq:corollary:2.14_8.5}
	\end{align}
	where \eqref{eq:corollary:2.14_8.5} uses $\bm{\beta}^{MCP}$ being the oracle solution with $\bm{\beta}_{S^c}^{MCP} =\bm{\beta}_{S^c}^{true}= \bm{0}$.
	Moreover, combine $t\le \min\left\{\frac{p^*\mu_0}{48\sigma_2\sqrt{s}x_{\max}},\frac{hp^*\mu_0}{192e\sigma_2\sqrt{s}R_{\max}x_{\max}}\right\}$, \eqref{eq:corollary:2.14_3} and we have the following results with probability $1-\frac{3}{T+1}$:
	\begin{align}
	\frac{2nt\sqrt{s}}{|\mathcal{A}|\mu_0}\le \frac{2nhp^*\mu_0\sqrt{s}}{192e\sigma_2\sqrt{s}R_{\max}x_{\max}|\mathcal{A}|\mu_0}=\frac{h}{4e\sigma_2R_{\max}x_{\max}}\cdot\frac{n}{|\mathcal{A}|}\cdot\frac{p^*}{24}\le \frac{h}{4e\sigma_2R_{\max}x_{\max}}\label{eq:corollary:2.14_9}\\
	\frac{2nt\sqrt{s}}{|\mathcal{A}|\mu_0}\le \frac{p^*\mu_0\sqrt{s}}{48\sigma_2\sqrt{s}x_{\max}|\mathcal{A}|\mu_0}=\frac{1}{\sigma_2x_{\max}}\cdot\frac{n}{|\mathcal{A}|}\cdot\frac{p^*}{24}\le \frac{1}{\sigma_2x_{\max}}\label{eq:corollary:2.14_10}
	\end{align}
	Desirable result follows immediately by
	combining \eqref{eq:corollary:2.14_4}-\eqref{eq:corollary:2.14_8}, \eqref{eq:corollary:2.14_9}, \eqref{eq:corollary:2.14_10}, and \eqref{eq:corollary:2.14_8.5}.
\end{proof}

\vspace{15mm}

	\begin{lemma}\label{lemma:2.16}
	Under assumptions {\bf A.3} and {\bf A.5}, 
	for any $\bm{x}\in U_k, i\in\mathcal{K}$, the following two statements hold.	\begin{enumerate}
	\item		$\left|\mathbbm{E}(R_i|x,\bm{\beta}^{true}_i)-\mathbbm{E}(R_i|x,\bm{\beta}^{MCP}_i)\right|\le R_{\max}e^{\sigma_2x_{\max}\|\bm{\beta}^{MCP}_i-\bm{\beta}^{true}_i\|_1}\sigma_2x_{\max}\|\bm{\beta}^{MCP}_i-\bm{\beta}^{true}_i\|_1$
		\item Moreover, if $\|\bm{\beta}^{MCP}_k-\bm{\beta}^{true}_k\|_1\le \min\left\{\frac{1}{\sigma_2x_{\max}},\frac{h}{4e\sigma_2R_{\max}x_{\max}}\right\},\ k\in\mathcal{K}$, we have 
		$\mathbbm{E}(R_i|\bm{x},\beta^{MCP}_k)\ge\max_{j\ne i}\mathbbm{E}(R_j|\bm{x},\beta^{MCP}_k)+ \frac{h}{2}.$
	\end{enumerate}

\end{lemma}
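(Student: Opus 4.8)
The plan is to establish statement~1 first --- a Lipschitz-type comparison of the mean reward across two parameter vectors --- and then obtain statement~2 as a short deterministic consequence. Write $m_k(\theta)\doteq\int r\,dG(r\mid\theta)=\mathbbm{E}(R_k\mid\bm{x},\bm{\beta})$ with $\theta=\bm{x}^T\bm{\beta}$, and set $\Delta\doteq\|\bm{\beta}^{MCP}_i-\bm{\beta}^{true}_i\|_1$, $\theta_1\doteq\bm{x}^T\bm{\beta}^{true}_i$, $\theta_2\doteq\bm{x}^T\bm{\beta}^{MCP}_i$, so that $|\theta_1-\theta_2|\le\|\bm{x}\|_\infty\Delta\le x_{\max}\Delta$ by H\"older and A.1. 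The key device is the density-ratio identity $g(r\mid\theta_2)=g(r\mid\theta_1)\exp\!\big(f(r\mid\theta_1)-f(r\mid\theta_2)\big)$, which gives
\[
m_i(\theta_2)-m_i(\theta_1)=\int r\,g(r\mid\theta_1)\Big(e^{\,f(r\mid\theta_1)-f(r\mid\theta_2)}-1\Big)\,dr .
\]
A mean-value step on $f(r\mid\cdot)$ together with the derivative bounds in A.5 controls $|f(r\mid\theta_1)-f(r\mid\theta_2)|$ linearly in $|\theta_1-\theta_2|\le x_{\max}\Delta$; combining this with the elementary inequality $|e^{u}-1|\le|u|\,e^{|u|}$, with $|r|\le R_{\max}$ from A.1, and with $\int g(r\mid\theta_1)\,dr=1$ produces a bound of precisely the stated shape $R_{\max}\,e^{\sigma_2 x_{\max}\Delta}\,\sigma_2 x_{\max}\Delta$, where the three constants enter from, respectively, the reward bound, the second-argument derivative bound on $f$, and H\"older's inequality. (Justifying differentiation/integration interchange and the monotonicity of $u\mapsto u\,e^{\sigma_2 u}$ so that $|\theta_1-\theta_2|$ may be replaced by $x_{\max}\Delta$ are routine under A.5.) This is statement~1.

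For statement~2, fix $\bm{x}\in U_i$ and assume $\|\bm{\beta}^{MCP}_k-\bm{\beta}^{true}_k\|_1\le\min\{\tfrac{1}{\sigma_2 x_{\max}},\tfrac{h}{4e\sigma_2R_{\max}x_{\max}}\}$ for all $k\in\mathcal{K}$. Substituting the first bound into the exponent of statement~1 gives $e^{\sigma_2 x_{\max}\|\bm{\beta}^{MCP}_k-\bm{\beta}^{true}_k\|_1}\le e$, and then the second bound into the linear factor gives
\[
\big|\mathbbm{E}(R_k\mid\bm{x},\bm{\beta}^{true}_k)-\mathbbm{E}(R_k\mid\bm{x},\bm{\beta}^{MCP}_k)\big|\le R_{\max}\cdot e\cdot\sigma_2 x_{\max}\cdot\frac{h}{4e\sigma_2R_{\max}x_{\max}}=\frac{h}{4}
\]
for every arm $k$. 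Since $\bm{x}\in U_i$, assumption A.3 yields $\mathbbm{E}(R_i\mid\bm{x},\bm{\beta}^{true}_i)>\max_{j\neq i}\mathbbm{E}(R_j\mid\bm{x},\bm{\beta}^{true}_j)+h$. Chaining the two $h/4$ perturbations (arm $i$ and an arbitrary $j\neq i$) through the triangle inequality gives, for each $j\neq i$,
\[
\mathbbm{E}(R_i\mid\bm{x},\bm{\beta}^{MCP}_i)\ge\mathbbm{E}(R_i\mid\bm{x},\bm{\beta}^{true}_i)-\tfrac{h}{4}>\mathbbm{E}(R_j\mid\bm{x},\bm{\beta}^{true}_j)+\tfrac{3h}{4}\ge\mathbbm{E}(R_j\mid\bm{x},\bm{\beta}^{MCP}_j)+\tfrac{h}{2},
\]
and maximizing over $j\neq i$ yields the claim; note the two thresholds in the hypothesis are exactly reverse-engineered so that each perturbation is at most $h/4$.

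The routine parts are the generalized-linear-model bookkeeping in statement~1 and the arithmetic in statement~2. The main obstacle, and the place to be careful, is making the constant multiplying $\Delta$ in the exponent of statement~1 come out to be exactly $\sigma_2 x_{\max}$ (rather than some other admissible constant), since statement~2 relies on that precise normalization when it plugs in $\|\bm{\beta}^{MCP}_k-\bm{\beta}^{true}_k\|_1\le 1/(\sigma_2 x_{\max})$; I would handle this by taking the mean-value bound on $f(r\mid\cdot)$ so that the factor $\sigma_2$ is read off directly from A.5 and then bounding $|\theta_1-\theta_2|\le x_{\max}\Delta$. Everything downstream is triangle inequality.
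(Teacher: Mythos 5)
Your proof is correct and follows essentially the same route as the paper: part~1 is a first-order perturbation of the density in $\bm{x}^T\bm{\beta}$ controlled by the derivative bound on $f$ and H\"older's inequality together with $|R_k|\le R_{\max}$ (the paper runs two mean-value expansions, of $e^{-f}$ in $\bm{\beta}$ and then of $f$, where you use the exact ratio identity $g(r\mid\theta_2)=g(r\mid\theta_1)e^{f(r\mid\theta_1)-f(r\mid\theta_2)}$ combined with $|e^{u}-1|\le |u|e^{|u|}$, which yields the identical constants). Part~2 is the same argument as the paper's: each of the two thresholds forces a perturbation of at most $h/4$ per arm, and chaining through the margin $h$ from A.3 by the triangle inequality gives the $h/2$ separation, so there are no gaps.
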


\begin{proof} {Proof of Lemma\ ~\ref{lemma:2.16}}
	To show the first part, we first expand the left-hand-side of the first statement as follows:
	\begin{align}
	&\left|\mathbbm{E}(R_i|\bm{x},\bm{\beta}^{true}_i)-\mathbbm{E}(R_i|\bm{x},\bm{\beta}^{MCP}_i)\right|\notag\\
	=&\left|\int_{-\infty}^{+\infty}r_k dF(r_k|\bm{x}^T\bm{\beta}_i^{true})-\int_{-\infty}^{+\infty}r_k dF(r_k|\bm{x}^T\bm{\beta}_i^{MCP})\right|\notag\\
	=&\left|\int_{-\infty}^{+\infty}r_k e^{-f(r_k|\bm{x}^T\bm{\beta}_i^{true})} dr_k-\int_{-\infty}^{+\infty}r_k e^{-f(r_k|\bm{x}^T\bm{\beta}_i^{MCP})} dr_k\right|\label{eq:whole_sample_2}\\
	=&\left|\int_{-\infty}^{+\infty}r_k \left(e^{-f(r_k|\bm{x}^T\bm{\beta}_i^{true})}- e^{-f(r_k|\bm{x}^T\bm{\beta}_i^{MCP})}\right) dr_k\right|.\notag\\
	=&\left|\int_{-\infty}^{+\infty}-r_k \left.\left(e^{-f(r_k|\bm{x}^T\bm{\beta}_i)}\right)^{'}\right|_{\bm{\beta}=\bm{\beta}_i^{true}+\bm{\delta}}\bm{x}^T(\bm{\beta}^{MCP}_i-\bm{\beta}^{true}_i) dr_k\right|,\label{eq:whole_sample_3}
	\end{align}
	where \eqref{eq:whole_sample_2} uses $f$ being the negative log density function and $\bm{\delta}$ is between $\bm{0}$ and $\bm{\beta}_i^{MCP}-\bm{\beta}_i^{true}$. Since term $\bm{x}^T(\bm{\beta}^{MCP}-\bm{\beta}^{true})$ is independent on $r_k$, we can pull it out:
	\begin{align}
	&\left|\int_{-\infty}^{+\infty}-r_k \left.\left(e^{-f(r_k|\bm{x}^T\bm{\beta}_i)}\right)^{'}\right|_{\bm{\beta}=\bm{\beta}_i^{true}+\delta}\bm{x}^T(\bm{\beta}^{MCP}-\bm{\beta}^{true}) dr_k\right|\notag\\
	=&\left|\bm{x}^T(\bm{\beta}^{MCP}-\bm{\beta}^{true})\int_{-\infty}^{+\infty}-r_k \left.\left(e^{-f(r_k|\bm{x}^T\bm{\beta}_i)}\right)^{'}\right|_{\bm{\beta}=\bm{\beta}_i^{true}+\bm{\delta}} dr_k\right|\notag\\
	\le &\left|\int_{-\infty}^{+\infty}r_k e^{-f(r_k|\bm{x}^T(\bm{\beta}_i^{true}+\bm{\delta}))} f^{'}(r_k|\bm{x}^T(\bm{\beta}_i^{true}+\bm{\delta}))dr_k\right|x_{\max}\|\bm{\beta}^{MCP}_i-\bm{\beta}^{true}_i\|_1.\label{eq:whole_sample_3.1}
	\end{align}
	As we assume $|f^{'}(\cdot|\cdot)|$ is bounded by $\sigma_2$ in assumption {\bf A.5}, \eqref{eq:whole_sample_3.1} is upper bounded by:
	\begin{align}
	&\left|\int_{-\infty}^{+\infty}r_k e^{-f(r_k|\bm{x}^T(\bm{\beta}_i^{true}+\bm{\delta}))} f^{'}(r_k|\bm{x}^T(\bm{\beta}_i^{true}+\bm{\delta}))dr_k\right|x_{\max}\|\bm{\beta}^{MCP}_i-\bm{\beta}^{true}_i\|_1\notag\\
	\le &\left|\int_{-\infty}^{+\infty}r_k e^{-f(r_k|\bm{x}^T(\bm{\beta}_i^{true}+\bm{\delta}))} dr_k\right|\sigma_2x_{\max}\|\bm{\beta}^{MCP}_i-\bm{\beta}^{true}_i\|_1\label{eq:whole_sample_3.2}.
	\end{align}
	We then expand term $f(r_k|\bm{x}^T(\bm{\beta}_i^{true}+\bm{\delta}))$ in \eqref{eq:whole_sample_3.2}, and there exists a $\bm{\xi}$ between $\bm{0}$ and $\bm{\beta}^{true}+\bm{\delta}$ such that:
	\begin{align}
	&\left|\int_{-\infty}^{+\infty}r_k e^{-f(r_k|\bm{x}^T(\bm{\beta}_i^{true}+\bm{\delta}))} dr_k\right|\sigma_2x_{\max}\|\bm{\beta}_i^{MCP}-\bm{\beta}_i^{true}\|_1\notag\\
	= &\left|\int_{-\infty}^{+\infty}r_k e^{-f(r_k|\bm{x}^T\bm{\beta}_i^{true})-f^{'}(r_k|\bm{x}^T\bm{\xi})\bm{x}^T\bm{\delta}} dr_k\right|\sigma_2x_{\max}\|\bm{\beta}^{MCP}_i-\bm{\beta}_j^{true}\|_1\notag\\
	\le &\left|\int_{-\infty}^{+\infty}r_k e^{-f(r_k|\bm{x}^T\bm{\beta}_i^{true})+|f^{'}(r_k|\bm{x}^T\bm{\xi})|\|\bm{x}\|_{\infty}\|\bm{\delta}\|_1} dr_k\right|\sigma_2x_{\max}\|\bm{\beta}^{MCP}_i-\bm{\beta}^{true}_i\|_1\notag\\
	\le &\left|\int_{-\infty}^{+\infty}r_k e^{-f(r_k|\bm{x}^T\bm{\beta}_i^{true})} dr_k\right|e^{\sigma_2x_{\max}\|\bm{\beta}^{MCP}_i-\bm{\beta}^{true}_i\|_1}\sigma_2x_{\max}\|\bm{\beta}^{MCP}_i-\bm{\beta}^{true}_i\|_1\label{eq:whole_sample_3.3}\\
	= &|\mathbbm{E}(R_k|x,\bm{\beta}^{true}_k)|e^{\sigma_2x_{\max}\|\bm{\beta}_i^{MCP}-\bm{\beta}_i^{true}\|_1}\sigma_2x_{\max}\|\bm{\beta}_i^{MCP}-\bm{\beta}_i^{true}\|_1\label{eq:whole_sample_3.4}
	\end{align}
	where \eqref{eq:whole_sample_3.3} uses that $\bm{\delta}$ is between $\bm{0}$ and $\bm{\beta}^{MCP}_i-\bm{\beta}_i^{true}$, which implies $\|\bm{\delta}\|_1\le \|\bm{\beta}^{MCP}_i-\bm{\beta}_i^{true}\|_1$, and \eqref{eq:whole_sample_3.4} comes from the definition of $\mathbbm{E}(R_k|\bm{x},\bm{\beta}^{true}_k)$. Combining $|r_k|\le R_{\max}$, \eqref{eq:whole_sample_3.4}, and \eqref{eq:whole_sample_3}, we have:
	\begin{align}
	\left|\mathbbm{E}(R_i|x,\bm{\beta}^{true}_i)-\mathbbm{E}(R_i|x,\bm{\beta}^{MCP}_i)\right|\le R_{\max}e^{\sigma_2x_{\max}\|\bm{\beta}^{MCP}_i-\bm{\beta}^{true}_i\|_1}\sigma_2x_{\max}\|\bm{\beta}^{MCP}_i-\bm{\beta}^{true}_i\|_1.\label{eq:whole_sample_3.11}
	\end{align}
	To show the second part, note that the assumption $\|\bm{\beta}^{MCP}_k-\bm{\beta}^{true}_k\|_1\le\frac{1}{\sigma_2x_{\max}},\ k\in\mathcal{K}$ implies the following inequality:
	\begin{align}
	\|\bm{\beta}^{MCP}_i-\bm{\beta}^{true}_i\|_1\le \frac{1}{\sigma_2x_{\max}}&\Rightarrow e^{\sigma_2x_{\max}\|\bm{\beta}^{MCP}_i-\bm{\beta}_i^{true}\|_1}\le e\label{eq:whole_sample_3.12}
	\end{align}
	Combining \eqref{eq:whole_sample_3.12} and \eqref{eq:whole_sample_3.11}, we obtain
	\begin{align}
	\left|\mathbbm{E}(r_i|x,\bm{\beta}^{true}_i)-\mathbbm{E}(r_i|x,\bm{\beta}^{MCP}_i)\right|\le &R_{\max}e^{\sigma_2x_{\max}\|\bm{\beta}^{MCP}_i-\bm{\beta}_i^{true}\|_1}\sigma_2x_{\max}\|\bm{\beta}_i^{MCP}-\bm{\beta}_i^{true}\|_1\notag\\
	\le &R_{\max}e\sigma_2x_{\max}\|\bm{\beta}^{MCP}_i-\bm{\beta}_i^{true}\|_1\label{eq:whole_sample_5}	
	\end{align}
	Under assumption {\bf A.3}, 
	for any $x\in U_k$, the following inequalities hold:
	\begin{align}
	\mathbbm{E}(R_i|\bm{x},\bm{\beta}^{true}_i)&\ge \max_{j\ne i} \mathbbm{E}(R_j|\bm{x},\bm{\beta}^{true}_j)+h\notag\\
	\Rightarrow\mathbbm{E}(r_i|\bm{x},\bm{\beta}^{true}_i)-\mathbbm{E}(r_i|\bm{x},\bm{\beta}^{MCP}_i))&\ge\max_{j\ne i}\left[\mathbbm{E}(r_j|x,\bm{\beta}^{true}_j)-\mathbbm{E}(r_j|\bm{x},\bm{\beta}^{MCP}_j)\right]\notag\\
	+&\max_{j\ne i}\mathbbm{E}(r_j|\bm{x},\bm{\beta}^{MCP}_j)-\mathbbm{E}(r_i|\bm{x},\bm{\beta}^{MCP}_i)+h\notag\\
	\Rightarrow \mathbbm{E}(r_i|\bm{x},\bm{\beta}^{MCP}_i)-\max_{j\ne i}\mathbbm{E}(r_j|\bm{x},\bm{\beta}^{MCP}_j)&\ge-\left|\mathbbm{E}(r_i|\bm{x},\bm{\beta}^{MCP}_i)-\mathbbm{E}(r_i|x,\bm{\beta}^{true}_i))\right|\notag\\
	-&\max_{j\ne i}\left|\mathbbm{E}(r_j|\bm{x},\bm{\beta}^{true}_j)-\mathbbm{E}(r_j|\bm{x},\bm{\beta}^{MCP}_j)\right|+h.\label{eq:whole_sample_2.4}
	\end{align}
As we assume $\|\bm{\beta}^{MCP}_k-\bm{\beta}^{true}_k\|_1\le \frac{h}{4e\sigma_2R_{\max}x_{\max}},\ k\in\mathcal{K}$, we have
\begin{align}
	\|\bm{\beta}^{MCP}_i-\bm{\beta}_i^{true}\|_1\le  \frac{h}{4e\sigma_2R_{\max}x_{\max}}&\Rightarrow\|R_{\max}e\sigma_2x_{\max}(\bm{\beta}^{MCP}_i-\bm{\beta}_i^{true})\|_1\le  \frac{h}{4}\label{eq:whole_sample_3.13}
\end{align}
	Combining \eqref{eq:whole_sample_5},\eqref{eq:whole_sample_3.13} and \eqref{eq:whole_sample_2.4}, we will have
	\begin{align}
	&\mathbbm{E}(r_i|\bm{x},\bm{\beta}^{MCP}_i)-\max_{j\ne i}\mathbbm{E}(r_j|\bm{x},\bm{\beta}^{MCP}_j)\ge-\frac{h}{4}-\frac{h}{4}+h\notag\\
	\Rightarrow&\mathbbm{E}(r_i|\bm{x},\bm{\beta}^{MCP}_i)\ge \max_{j\ne i}\mathbbm{E}(r_j|\bm{x},\bm{\beta}^{MCP}_j)+\frac{h}{2}.\label{eq:whole_sample_6}
	\end{align}
\end{proof}

\vspace{15mm}

\begin{lemma}\label{lemma:2.17}
Denote events $\mathcal{E}_3,\mathcal{E}_4$, and $\mathcal{E}_5$ as follows: \begin{align}
	\mathcal{E}_3&=\left\{\|\nabla_{\mathcal{S}^c}\mathcal{L}(\bm{\beta}^{true})\|_{\infty}\le \left(1-\frac{96ns}{|\mathcal{A}|\kappa a}\right)\frac{\lambda}{4}\label{eq:event_3}\right\},\\
	\mathcal{E}_4&=\left\{\|\nabla_{\mathcal{S}}\mathcal{L}(\bm{\beta}^{true})\|_{\infty}\le \left(1-\frac{96ns}{|\mathcal{A}|\kappa a}\right)\frac{\mu_0|\mathcal{A}|\lambda}{8snx_{\max}^2}\right\}\label{eq:event_4},\\
	\mathcal{E}_5&=\left\{\|\bm{\beta}^{oracle}-\bm{\beta}^{true}\|_2\le\sqrt{C_2\lambda} \right\}\label{eq:event_5},
	\end{align}
	where $C_2 \doteq \frac{\mu_0|\mathcal{A}|}{2\sigma_2sx_{\max}^3(\mu_0|\mathcal{A}|+2snx_{\max}^2)}$.
Under assumption {\bf A.1} and {\bf A.5}, events $\mathcal{E}_3,\mathcal{E}_4$ and $\mathcal{E}_5$ implies $\mathcal{E}_2$ defined in \eqref{eq:event_2}.
\end{lemma}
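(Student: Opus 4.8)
The plan is to control $\|\nabla_{\mathcal{S}^c}\mathcal{L}(\bm{\beta}^{oracle})\|_{\infty}$ by a first-order (mean-value) expansion of the gradient map around $\bm{\beta}^{true}$ and then route the two resulting pieces to the three events. First I would record the two defining properties of the oracle estimator from \eqref{oracle estimator}, namely $\nabla_{\mathcal{S}}\mathcal{L}(\bm{\beta}^{oracle})=\bm{0}$ and $\bm{\beta}^{oracle}_{\mathcal{S}^c}=\bm{\beta}^{true}_{\mathcal{S}^c}=\bm{0}$, so that the displacement $\bm{v}\doteq\bm{\beta}^{oracle}-\bm{\beta}^{true}$ is supported on $\mathcal{S}$. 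Writing $\bar{\bm{H}}\doteq\int_0^1\nabla^2\mathcal{L}(\bm{\beta}^{true}+t\bm{v})\,dt$, integrating the gradient along the segment gives $\nabla_{\mathcal{S}^c}\mathcal{L}(\bm{\beta}^{oracle})=\nabla_{\mathcal{S}^c}\mathcal{L}(\bm{\beta}^{true})+\bar{\bm{H}}_{\mathcal{S}^c,\mathcal{S}}\bm{v}_{\mathcal{S}}$; from the $\mathcal{S}$-block of the same identity together with $\nabla_{\mathcal{S}}\mathcal{L}(\bm{\beta}^{oracle})=\bm{0}$ we get $\bm{v}_{\mathcal{S}}=-\bar{\bm{H}}_{\mathcal{S},\mathcal{S}}^{-1}\nabla_{\mathcal{S}}\mathcal{L}(\bm{\beta}^{true})$, and substituting,
\[
\nabla_{\mathcal{S}^c}\mathcal{L}(\bm{\beta}^{oracle})=\nabla_{\mathcal{S}^c}\mathcal{L}(\bm{\beta}^{true})-\bar{\bm{H}}_{\mathcal{S}^c,\mathcal{S}}\bar{\bm{H}}_{\mathcal{S},\mathcal{S}}^{-1}\nabla_{\mathcal{S}}\mathcal{L}(\bm{\beta}^{true}).
\]
It then remains to bound the two summands and check their sum is strictly below $\lambda-\tfrac{96ns\lambda}{|\mathcal{A}|\kappa a}$.

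For the first summand, $\mathcal{E}_3$ directly gives $\|\nabla_{\mathcal{S}^c}\mathcal{L}(\bm{\beta}^{true})\|_{\infty}\le\bigl(1-\tfrac{96ns}{|\mathcal{A}|\kappa a}\bigr)\tfrac{\lambda}{4}$. For the second summand I would bound its three factors in turn: (i) each entry of $\bar{\bm{H}}_{\mathcal{S}^c,\mathcal{S}}$ has the form $\tfrac1n\sum_j\overline{x_{ji}x_{jk}f''}$, hence is at most $\sigma_2x_{\max}^2$ in absolute value by $\|\bm{x}\|_\infty\le x_{\max}$ (A.1) and $f''\le\sigma_2$ (A.5), so the relevant operator norm of $\bar{\bm{H}}_{\mathcal{S}^c,\mathcal{S}}$ is $O(s\sigma_2x_{\max}^2)$ after the usual $\ell_\infty$--$\ell_2$ conversions; (ii) $\|\nabla_{\mathcal{S}}\mathcal{L}(\bm{\beta}^{true})\|_{\infty}$ is controlled by $\mathcal{E}_4$, whose right-hand side is designed to cancel the $s,n,\mu_0,x_{\max}$ factors generated in (i) and (iii); and (iii) inverting $\bar{\bm{H}}_{\mathcal{S},\mathcal{S}}$ requires a lower bound on $\lambda_{\min}(\bar{\bm{H}}_{\mathcal{S},\mathcal{S}})$. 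This is where $\mathcal{E}_5$ is used: the bound $\|\bm{v}\|_2\le\sqrt{C_2\lambda}$ confines the whole integration segment to a ball of radius $\sqrt{C_2\lambda}$ about $\bm{\beta}^{true}$, on which $\nabla^2_{\mathcal{S},\mathcal{S}}\mathcal{L}$ differs from $\nabla^2_{\mathcal{S},\mathcal{S}}\mathcal{L}(\bm{\beta}^{true})$ by only $O(\sigma_3 s^{3/2}x_{\max}^3\sqrt{C_2\lambda})$ in spectral norm (via $|f'''|\le\sigma_3$, A.5); together with the restricted-eigenvalue control of the restricted Hessian at the truth (the $\mu_0$-lower bound underlying Lemma \ref{lemma:2.15}), this keeps $\lambda_{\min}(\bar{\bm{H}}_{\mathcal{S},\mathcal{S}})\gtrsim\tfrac{|\mathcal{A}|\mu_0}{2n}$, so that $\|\bar{\bm{H}}_{\mathcal{S},\mathcal{S}}^{-1}\|=O\bigl(n/(|\mathcal{A}|\mu_0)\bigr)$. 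The exact value $C_2\doteq\tfrac{\mu_0|\mathcal{A}|}{2\sigma_2 sx_{\max}^3(\mu_0|\mathcal{A}|+2snx_{\max}^2)}$ is chosen so that this perturbation is small enough for (iii) while leaving the constants in (i)--(ii) within budget; assembling the three factors then bounds the second summand by $\bigl(1-\tfrac{96ns}{|\mathcal{A}|\kappa a}\bigr)c\lambda$ with $c\le\tfrac34$, and adding the two summands yields $\mathcal{E}_2$.

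The main obstacle is step (iii): producing a \emph{deterministic} lower bound on $\lambda_{\min}(\bar{\bm{H}}_{\mathcal{S},\mathcal{S}})$ along the segment, using only the displacement bound $\mathcal{E}_5$ together with the smoothness constants $\sigma_2,\sigma_3$ of A.5 and the restricted-eigenvalue/$\mu_0$ structure, and then threading all the $s,n,|\mathcal{A}|,\mu_0,\kappa,x_{\max},\sigma_2,\sigma_3$ factors so that the three error budgets ($\mathcal{E}_3$ for the zeroth-order term, and $\mathcal{E}_4$ together with the $C_2$ hidden in $\mathcal{E}_5$ for the linear term) exactly absorb the $\ell_\infty$--$\ell_2$ conversion losses; the definition of $C_2$ is essentially forced by this accounting. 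A secondary point of care is that this chain of estimates implicitly uses $\lambda$ not too small relative to the curvature scale, which is the regime in which the lemma is actually invoked in the proof of Proposition \ref{MW_Non_iid_MCP}.
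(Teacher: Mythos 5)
Your route is genuinely different from the paper's, and the difference is exactly where your gap sits. The paper linearizes with the Hessian evaluated at $\bm{\beta}^{true}$ and pushes all path-dependence into an additive remainder $(\nabla^2\mathcal{L}(\bm{\xi})-\nabla^2\mathcal{L}(\bm{\beta}^{true}))(\bm{\beta}^{oracle}-\bm{\beta}^{true})$, which by $|f'''|\le\sigma_3$ is at most $\sigma_3 s x_{\max}^3\|\bm{\beta}^{oracle}-\bm{\beta}^{true}\|_2^2$; event $\mathcal{E}_5$ then makes this a constant multiple of $C_2\lambda$, and $C_2$ is calibrated precisely so that this remainder is a fixed fraction of $\lambda$ --- an inequality that holds for \emph{every} $\lambda$, since the remainder is linear in $\lambda$. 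Only the Hessian at the truth is ever inverted, and its restricted minimum eigenvalue is lower-bounded by $\mu_0|\mathcal{A}|/(2n)$ via the concentration bound of Lemma \ref{lemma:2.15} (so the paper's ``implication'' is itself conditional on that additional high-probability event; you are not expected to do better, and flagging this as an obstacle is fair). You instead invert the averaged Hessian $\bar{\bm{H}}_{\mathcal{S},\mathcal{S}}$, so for you $\mathcal{E}_5$ must control the spectral perturbation of the Hessian along the whole segment, of order $\sigma_3 s^{3/2}x_{\max}^3\sqrt{C_2\lambda}$. That is the genuine gap: for this perturbation to be dominated by $\mu_0|\mathcal{A}|/n$ you need $\sqrt{C_2\lambda}$ small in absolute terms, i.e.\ an upper bound on $\lambda$ of order $\mu_0^2|\mathcal{A}|^2/\bigl(n^2\sigma_3^2 s^3x_{\max}^6 C_2\bigr)$, which is not among the lemma's hypotheses and is not what the stated $C_2$ encodes --- $C_2$ is built for the quadratic remainder in the paper's additive decomposition, not for an eigenvalue-perturbation budget. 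So your assertion that ``the exact value $C_2$ is chosen so that this perturbation is small enough for (iii)'' does not hold as written, and your closing caveat points the wrong way: your mechanism needs $\lambda$ not too \emph{large}, not ``not too small.''

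The fix is to keep the Hessian at the truth in the identity, i.e.\ write $\bm{\beta}^{oracle}_{\mathcal{S}}-\bm{\beta}^{true}_{\mathcal{S}}=-(\nabla^2_{\mathcal{S},\mathcal{S}}\mathcal{L}(\bm{\beta}^{true}))^{-1}\bigl(\nabla_{\mathcal{S}}\mathcal{L}(\bm{\beta}^{true})+(\nabla^2_{\mathcal{S},\mathcal{S}}\mathcal{L}(\bm{\xi})-\nabla^2_{\mathcal{S},\mathcal{S}}\mathcal{L}(\bm{\beta}^{true}))(\bm{\beta}^{oracle}_{\mathcal{S}}-\bm{\beta}^{true}_{\mathcal{S}})\bigr)$, so that step (iii) reduces to the single eigenvalue bound at $\bm{\beta}^{true}$ supplied by Lemma \ref{lemma:2.15}, and the third-derivative term appears only additively, where $\mathcal{E}_5$ neutralizes it for free. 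Your pieces (i)--(ii) then assemble exactly as in the paper: $\|\nabla^2_{\mathcal{S}^c,\mathcal{S}}\mathcal{L}(\bm{\beta}^{true})(\nabla^2_{\mathcal{S},\mathcal{S}}\mathcal{L}(\bm{\beta}^{true}))^{-1}\|\le 2snx_{\max}^2/(\mu_0|\mathcal{A}|)$ combined with the $\mathcal{E}_4$ bound contributes $\bigl(1-\tfrac{96ns}{|\mathcal{A}|\kappa a}\bigr)\tfrac{\lambda}{4}$, $\mathcal{E}_3$ contributes the same for the zeroth-order term, and the $\mathcal{E}_5$-controlled remainder supplies the rest of the budget for $\mathcal{E}_2$.
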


\begin{proof} {Proof of Lemma\ ~\ref{lemma:2.17}} 
We first expend $\nabla \mathcal{L}(\bm{\beta}^{oracle})$ at $\bm{\beta}^{true}$:
	\begin{align}
	\nabla \mathcal{L}(\bm{\beta}^{oracle}) &= \nabla \mathcal{L}(\bm{\beta}^{true})+\nabla^2\mathcal{L}(\bm{\xi})(\bm{\beta}^{oracle}-\bm{\beta}^{true})\label{eq:lemma:2.12_9.5}\\
	&= \nabla \mathcal{L}(\bm{\beta}^{true}) +\nabla^2\mathcal{L}(\bm{\beta}^{true})(\bm{\beta}^{oracle}-\bm{\beta}^{true})+(\nabla^2\mathcal{L}(\bm{\xi})-\nabla^2\mathcal{L}(\bm{\beta}^{true}))(\bm{\beta}^{oracle}-\bm{\beta}^{true}),\label{eq:lemma:2.12_10}
	\end{align}
	where $\bm{\xi}= \tau\bm{\beta}^{true}+(1-\tau)\bm{\beta}^{oracle}$, $\tau\in [0,1]$. The last term in \eqref{eq:lemma:2.12_10} can be further expanded as follows:
	\begin{align}
	&(\nabla^2\mathcal{L}(\bm{\xi})-\nabla^2\mathcal{L}(\bm{\beta}^{true}))(\bm{\beta}^{oracle}-\bm{\beta}^{true})\notag\\
	=&\frac{1}{n}\sum_{j=1}^n\left[f^{''}(r_j|\bm{x}_j^T\bm{\xi})-f^{''}(r_j|\bm{x}_j^T\bm{\beta}^{true})\right]\bm{x}_j\bm{x}_j^T(\bm{\beta}^{oracle}-\bm{\beta}^{true})\notag\\
	=&\frac{1}{n}\sum_{j=1}^n\left[
	-f^{'''}(r_j|\bm{x}_j^T\eta)\bm{x}_j^T(\bm{\xi}-\bm{\beta}^{true})\right]\bm{x}_j\bm{x}_j^T(\bm{\beta}^{oracle}-\bm{\beta}^{true}),\label{eq:lemma:2.12_11}
	\end{align}
	where \eqref{eq:lemma:2.12_11} comes from the mean value theorem and the fact that $\eta$ is on the line of of $\bm{\xi}$ and $\bm{\beta}^{true}$.
	Hence, assumption {\bf A.5} and \eqref{eq:lemma:2.12_11} imply
	\begin{align}
	&\|(\nabla^2\mathcal{L}(\bm{\xi})-\nabla^2\mathcal{L}(\bm{\beta}^{true}))(\bm{\beta}^{oracle}-\bm{\beta}^{true})\|_{\infty}\notag\\
	&=\left\| \frac{1}{n}\sum_{j=1}^n\left[
	-f^{'''}(r_j|\bm{x}_j^T\eta)\bm{x}_j^T(\bm{\xi}-\bm{\beta}^{true})\right]\bm{x}_j\bm{x}_j^T(\bm{\beta}^{oracle}-\bm{\beta}^{true})\right\|_{\infty}\notag\\
	&\le \left\| \frac{1}{n}\sum_{j=1}^n
	\sigma_3x_{\max}(\bm{\xi}-\bm{\beta}^{true})\bm{x}_j\bm{x}_j^T(\bm{\beta}^{oracle}-\bm{\beta}^{true})\right\|_{\infty}\notag\\
	&\le \left\| \frac{1}{n}\sum_{j=1}^n
	\sigma_3x_{\max}(\bm{\beta}^{oracle}-\bm{\beta}^{true})^T\bm{x}_j\bm{x}_j^T(\bm{\beta}^{oracle}-\bm{\beta}^{true})\right\|_{\infty}\notag\\
	&\le \sigma_3x_{\max}\lambda_{\max}(\frac{1}{n}X_{\mathcal{S}}X_{\mathcal{S}}^T)\|\bm{\beta}^{oracle}-\bm{\beta}^{true}\|^2\notag\\
	&\le \sigma_3sx^3_{\max}\|\bm{\beta}^{oracle}-\bm{\beta}^{true}\|^2\label{eq:lemma:2.12_12}.
	\end{align}
Combining \eqref{eq:lemma:2.12_10}, \eqref{eq:lemma:2.12_12}, and the fact $\bm{\beta}^{oracle}_{\mathcal{S}^c}=\bm{\beta}^{true}_{\mathcal{S}^c} = 0$, we have
	\begin{align}
	&\|\nabla_{\mathcal{S}^c}\mathcal{L}(\bm{\beta}^{oracle})\|_{\infty}\le \|\nabla_{\mathcal{S}^c}\mathcal{L}(\bm{\beta}^{true})\|_{\infty}+\|\nabla^2_{\mathcal{S}^c,\mathcal{S}}\mathcal{L}(\bm{\beta}^{true})(\bm{\beta}^{oracle}_{\mathcal{S}}-\bm{\beta}^{true}_{\mathcal{S}})\|_{\infty}+\sigma_3sx^3_{\max}\|\bm{\beta}^{oracle}-\bm{\beta}^{true}\|^2\label{eq:lemma:2.12_14}.
	\end{align}

In addition, from $\nabla_{\mathcal{S}} \mathcal{L}(\bm{\beta}^{oracle}) = 0$ and \eqref{eq:lemma:2.12_10}, we have
	\begin{align}
	(\bm{\beta}^{oracle}_{\mathcal{S}}-\bm{\beta}^{true}_{\mathcal{S}}) = -(\nabla_{\mathcal{S},\mathcal{S}}^2\mathcal{L}(\bm{\beta}^{true}))^{-1}( \nabla_{\mathcal{S}}\mathcal{L}(\bm{\beta}^{true})+(\nabla^2_{\mathcal{S},\mathcal{S}}\mathcal{L}(\bm{\xi})-\nabla^2_{\mathcal{S},\mathcal{S}}\mathcal{L}(\bm{\beta}^{true}))(\bm{\beta}^{oracle}_{\mathcal{S}}-\bm{\beta}^{true}_{\mathcal{S}}))\label{eq:lemma:2.12_13}.
	\end{align}
	Under events $\mathcal{E}_3$, $\mathcal{E}_4$, and \eqref{eq:lemma:2.12_13}, 
	the inequality \eqref{eq:lemma:2.12_14} can be upper bounded as follows:
	\begin{align}
	&\|\nabla_{\mathcal{S}^c}\mathcal{L}(\bm{\beta}^{oracle})\|_{\infty}\le \left(1-\frac{96ns}{|\mathcal{A}|\kappa a}\right)\frac{\lambda}{4}+\sigma_3x_{\max}\lambda_{\max}(\frac{1}{n}X_{\mathcal{S}}X_{\mathcal{S}}^T)\|\bm{\beta}^{oracle}-\bm{\beta}^{true}\|^2\notag\\
	&\quad+\|\nabla^2_{\mathcal{S}^c,\mathcal{S}}\mathcal{L}(\bm{\beta}^{true})(\nabla_{\mathcal{S},\mathcal{S}}^2\mathcal{L}(\bm{\beta}^{true}))^{-1}( \nabla_{\mathcal{S}}\mathcal{L}(\bm{\beta}^{true})+(\nabla^2_{\mathcal{S},\mathcal{S}}\mathcal{L}(\bm{\xi})-\nabla^2_{\mathcal{S},\mathcal{S}}\mathcal{L}(\bm{\beta}^{true}))(\bm{\beta}^{oracle}_{\mathcal{S}}-\bm{\beta}^{true}_{\mathcal{S}}))\|_{\infty}\notag\\
	&\le \left(1-\frac{96ns}{|\mathcal{A}|\kappa a}\right)\frac{\lambda}{4}+\sigma_3sx^3_{\max}\|\bm{\beta}^{oracle}-\bm{\beta}^{true}\|^2\notag\\
	&\quad\quad+\left\|\nabla^2_{\mathcal{S}^c,\mathcal{S}}\mathcal{L}(\bm{\beta}^{true})(\nabla_{\mathcal{S},\mathcal{S}}^2\mathcal{L}(\bm{\beta}^{true}))^{-1}\right\|\left(\|\nabla_{\mathcal{S}}\mathcal{L}(\bm{\beta}^{true})\|_{\infty}+\sigma_3sx^3_{\max}\|\bm{\beta}^{oracle}-\bm{\beta}^{true}\|^2\right)\notag\\
	&\le \left(1-\frac{96ns}{|\mathcal{A}|\kappa a}\right)\frac{\lambda}{4}+\sigma_3sx^3_{\max}\|\bm{\beta}^{oracle}-\bm{\beta}^{true}\|^2\notag\\
	&\quad\quad+\left\|\nabla^2_{\mathcal{S}^c,\mathcal{S}}\mathcal{L}(\bm{\beta}^{true})(\nabla_{\mathcal{S},\mathcal{S}}^2\mathcal{L}(\bm{\beta}^{true}))^{-1}\right\|\left(\left(1-\frac{96ns}{|\mathcal{A}|\kappa a}\right)\frac{\mu_0|\mathcal{A}|\lambda}{8snx_{\max}^2}+\sigma_3sx^3_{\max}\|\bm{\beta}^{oracle}-\bm{\beta}^{true}\|^2\right).\label{eq:lemma:2.12_15}
	\end{align}
	Note that the maximum value of $\left\|\nabla^2_{\mathcal{S}^c,\mathcal{S}}\mathcal{L}(\bm{\beta}^{true})(\nabla_{\mathcal{S},\mathcal{S}}^2\mathcal{L}(\bm{\beta}^{true}))^{-1}\right\|$ can be bounded:
	\begin{align}
	\left\|\nabla^2_{\mathcal{S}^c,\mathcal{S}}\mathcal{L}(\bm{\beta}^{true})(\nabla_{\mathcal{S},\mathcal{S}}^2\mathcal{L}(\bm{\beta}^{true}))^{-1}\right\| \le  \max_{\|v\|=1}\left\|\nabla^2_{\mathcal{S}^c,\mathcal{S}}\mathcal{L}(\bm{\beta}^{true})(\nabla_{\mathcal{S},\mathcal{S}}^2\mathcal{L}(\bm{\beta}^{true}))^{-1}v\right\|.\label{eq:lemma:2.12_16}
	\end{align}
	From \eqref{eq:lemma:2.12_16} and Lemma \ref{lemma:2.15}
	, the following inequality holds with probability $1-2s\exp\left(-\frac{|\mathcal{A}|\mu_0}{4s\sigma_2x_{\max}^2}\right)$.
	\begin{align}
	\max_{\|v\|=1}\left\|\nabla^2_{\mathcal{S}^c,\mathcal{S}}\mathcal{L}(\bm{\beta}^{true})(\nabla_{\mathcal{S},\mathcal{S}}^2\mathcal{L}(\bm{\beta}^{true}))^{-1}v\right\|&\le \frac{2n}{\mu_0|\mathcal{A}|}\max_{\|v\|=1}\left\|\nabla^2_{\mathcal{S}^c,\mathcal{S}}\mathcal{L}(\bm{\beta}^{true})v\right\|\notag\\
	&\le \frac{2n}{\mu_0|\mathcal{A}|}\cdot sx_{\max}^2=\frac{2snx_{\max}^2}{\mu_0|\mathcal{A}|}.\label{eq:lemma:2.12_17}
	\end{align}
	Thus, \eqref{eq:lemma:2.12_15} can be simplified to:
	\begin{align}
	\|\nabla_{\mathcal{S}^c}\mathcal{L}(\bm{\beta}^{oracle})\|_{\infty} &\le \left(1-\frac{96ns}{|\mathcal{A}|\kappa a}\right)\frac{\lambda}{4}+\sigma_3sx^3_{\max}\|\bm{\beta}^{oracle}-\bm{\beta}^{true}\|^2\notag\\
	&\quad\quad+\frac{2snx_{\max}^2}{\mu_0|\mathcal{A}|}\left(\left(1-\frac{96ns}{|\mathcal{A}|\kappa a}\right)\frac{\mu_0|\mathcal{A}|\lambda}{8snx_{\max}^2}+\sigma_3sx^3_{\max}\|\bm{\beta}^{oracle}-\bm{\beta}^{true}\|^2\right)\notag\\
	&=\left(1-\frac{96ns}{|\mathcal{A}|\kappa a}\right)\frac{\lambda}{2}+\frac{\sigma_3sx^3_{\max}(\mu_0|\mathcal{A}|+2snx_{\max}^2}{\mu_0|\mathcal{A}|}\|\bm{\beta}^{oracle}-\bm{\beta}^{true}\|_2^2\label{eq:lemma:2.12_18}.
	\end{align}
	Further, conditioning on event $\mathcal{E}_5$ defined in \eqref{eq:event_5}, we have:
	\begin{align}
	\|\nabla_{\mathcal{S}^c}\mathcal{L}(\bm{\beta}^{oracle})\|_{\infty}&\le \left(1-\frac{96ns}{|\mathcal{A}|\kappa a}\right)\frac{\lambda}{2}+\frac{\sigma_3sx^3_{\max}(\mu_0|\mathcal{A}|+2snx_{\max}^2}{\mu_0|\mathcal{A}|}\left(\sqrt{C_2\lambda}\right)^2\notag\\
	&\le \left(1-\frac{ 96ns}{|\mathcal{A}|\kappa a}\right)\lambda,\label{eq:lemma:2.12_19}
	\end{align}
	where \eqref{eq:lemma:2.12_19} uses $C_2 = \frac{\mu_0|\mathcal{A}|\lambda}{2\sigma_3sx_{\max}^3(\mu_0|\mathcal{A}|+2snx_{\max}^2)}$. The inequality \eqref{eq:lemma:2.12_19} directly implies event $\mathcal{E}_2$.

\end{proof}

{\color{cyan}

	\medskip

\end{document}